\newcommand{\M}{\mathcal{M}}
\renewcommand{\d}[2]{d\left(#1,#2\right)}
\newcommand{\algname}{\texttt{MobILE}}
\newcommand{\cartpoles}{\texttt{Cartpole-v0}}
\newcommand{\cartpole}{\texttt{Cartpole-v1}}
\newcommand{\reacher}{\texttt{Reacher-v2}}
\newcommand{\swimmer}{\texttt{Swimmer-v2}}
\newcommand{\hopper}{\texttt{Hopper-v2}}
\newcommand{\walker}{\texttt{Walker2d-v2}}
\title{\algname: Model-Based Imitation Learning From Observation Alone}
\author{%
  Rahul Kidambi\thanks{Work initiated when RK was a post-doc at Cornell University; work done outside Amazon.
    } \\
  Amazon Search \& AI\\
  Berkeley CA 94704.\\
  \texttt{rk773@cornell.edu} \\
  \And
  Jonathan D. Chang \\
  CS Department, Cornell University \\
  Ithaca NY 14853. \\
  \texttt{jdc396@cornell.edu} \\
  \AND
  Wen Sun \\
  CS Department, Cornell University \\
  Ithaca NY 14853. \\
  \texttt{ws455@cornell.edu}
}
\begin{document}

\maketitle

\begin{abstract}
    This paper studies Imitation Learning from Observations alone (ILFO) where the learner is presented with expert demonstrations that consist only of states visited by an expert (without access to actions taken by the expert). We present a provably efficient model-based framework~\algname~to solve the ILFO problem.~\algname~involves carefully trading off strategic exploration against imitation - this is achieved by integrating the idea of optimism in the face of uncertainty into the distribution matching imitation learning (IL) framework. We provide a unified analysis for~\algname, and demonstrate that ~\algname~enjoys strong performance guarantees for classes of MDP dynamics that satisfy certain well studied notions of structural complexity. We also show that the ILFO problem is {\em strictly harder} than the standard IL problem by presenting an exponential sample complexity separation between IL and ILFO. We complement these theoretical results with experimental simulations on benchmark OpenAI Gym tasks that indicate the efficacy of~\algname. Code for implementing the \algname~framework is available at \url{https://github.com/rahulkidambi/MobILE-NeurIPS2021}.
\end{abstract}

\section{Introduction}
\label{sec:intro}

This paper considers \emph{Imitation Learning from Observation Alone (ILFO)}. In ILFO, the learner is presented with sequences of states encountered by the expert, {\em without} access to the actions taken by the expert, meaning approaches based on a reduction to supervised learning (e.g., Behavior cloning (BC)~\citep{RossB10}, DAgger \citep{DAgger}) are not applicable. ILFO is more general and has potential for applications where the learner and expert have different action spaces, applications like sim-to-real \citep{song2020provably,desai2020imitation} etc.

Recently,~\citep{Sun19FAIL} reduced the ILFO problem to a sequence of one-step distribution matching problems that results in obtaining a non-stationary policy. This approach, however, is sample inefficient for longer horizon tasks since the algorithm does not effectively reuse previously collected samples when solving the current sub-problem. Another line of work considers model-based methods to infer the expert's actions with either an inverse dynamics~\citep{TorabiBCO} or a forward dynamics~\citep{EdwardsILPO} model; these recovered actions are then fed into an IL approach like BC to output the final policy. These works rely on stronger assumptions that are only satisfied for Markov Decision Processes (MDPs) with injective transition dynamics~\citep{ZhuLDZ20}; we return to this in the related works section. 

\begin{wrapfigure}{r}{0.45\textwidth}
  \begin{center}
    \includegraphics[width=0.45\textwidth]{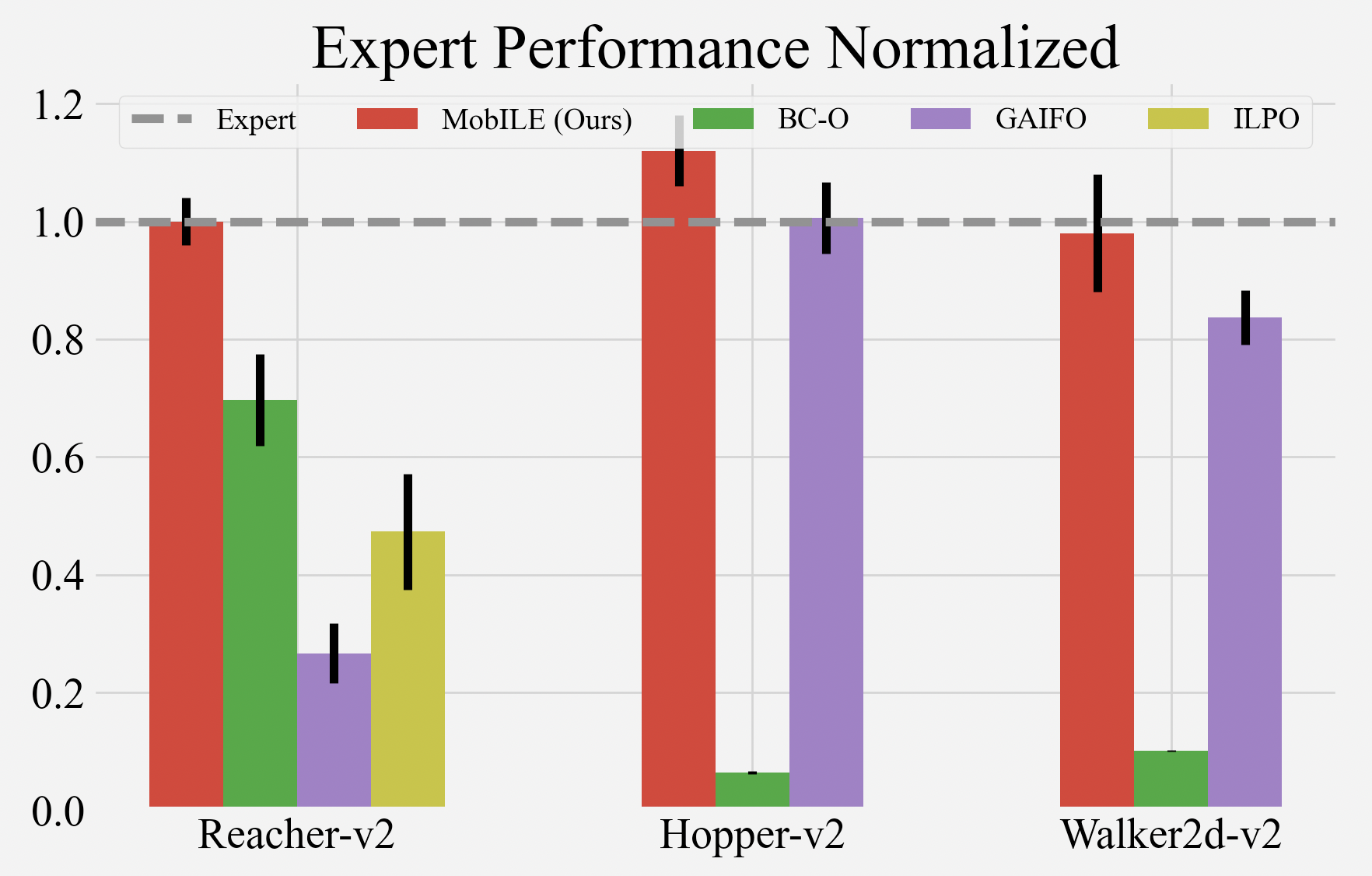}
  \end{center}
  \caption{Expert performance normalized scores of ILFO algorithms averaged across 5 seeds in environments with discrete action spaces (Reacher-v2) and continuous action spaces (Hopper-v2 and Walker2d-v2).}
  \label{fig:front_page}
\end{wrapfigure}
We introduce \algname---\underline{Mo}del-\underline{b}ased \underline{I}mitation \underline{L}earning and \underline{E}xploring, a model-based framework, to solve the ILFO problem. In contrast to existing model-based efforts, \algname~learns the forward transition dynamics model---a quantity that is well defined for any MDP. Importantly, \algname~\emph{combines strategic exploration with imitation} by interleaving a model learning step with a bonus-based, optimistic distribution matching step -- a perspective, to the best of our knowledge, that has not been considered in Imitation Learning. \algname~has the ability to automatically trade-off exploration and imitation. It simultaneously explores to collect data to refine the  model and imitates the expert wherever the learned model is accurate and certain. 
At a high level, our theoretical results and experimental studies demonstrate that
\emph{systematic exploration is beneficial for solving ILFO reliably and efficiently,}
and \emph{optimism} is a both theoretically sound and practically effective approach for strategic exploration in ILFO (see \pref{fig:front_page} for comparisons with other ILFO algorithms). This paper extends the realm of partial information problems (e.g. Reinforcement Learning and Bandits) where optimism has been shown to be crucial in obtaining strong performance, both in theory (e.g., $E^3$ \citep{kearns2002near}, UCB \citep{auer2002finite}) and practice (e.g., RND \citep{burda2018exploration}). This paper proves that incorporating optimism into the min-max IL framework~\citep{ziebart2008maximum,HoEr16GAIL,Sun19FAIL} is {\em beneficial} for both the theoretical foundations and empirical performance of ILFO.
\vspace{-3mm}
\paragraph{Our Contributions:} We present \algname~(Algorithm~\ref{alg:main_alg_bonus}), a provably efficient, model-based framework for ILFO that offers competitive results in benchmark gym tasks.
\algname~can be instantiated with various implementation choices owing to its modular design. This paper's contributions are:
\begin{enumerate}[leftmargin=0.5cm]
\item The~\algname~framework combines ideas of model-based learning, optimism for exploration, and adversarial imitation learning.~\algname~achieves global optimality with near-optimal regret bounds for classes of MDP dynamics that satisfy certain well studied notions of complexity. The key idea of~\algname~is to use optimism to \emph{trade-off imitation and exploration}. 
\item We show an exponential sample complexity gap between ILFO and classic IL where one has access to expert's actions.
This indicates that ILFO is \emph{fundamentally harder} than IL. Our lower bound on ILFO also indicates that to achieve near optimal regret, one needs to perform systematic exploration rather than random or no exploration, both of which will incur sub-optimal regret.  
\item We instantiate~\algname~with a model ensemble of neural networks and a disagreement-based bonus. We present experimental results on benchmark OpenAI Gym tasks, indicating \algname~compares favorably to or outperforms existing approaches. Ablation studies indicate that optimism indeed helps in significantly improving the performance in practice.
\end{enumerate}
\subsection{Related Works}
\noindent{\bf Imitation Learning} (IL) is considered through the lens of two types of approaches: (a) behavior cloning (BC)~\citep{pomerleau89} which casts IL as a reduction to supervised or full-information online learning~\citep{RossB10,DAgger}, or, (b) (adversarial) inverse RL~\citep{NgRussell00,PieterN04,ziebart2008maximum,FinnLA16Guided,HoEr16GAIL,ke2019imitation,ghasemipour2020divergence}, which involves minimizing various distribution divergences to solve the IL problem, either with the transition dynamics known (e.g., \cite{ziebart2008maximum}), or unknown (e.g., \cite{HoEr16GAIL}). \algname~does not assume knowledge of the transition dynamics, is model-based, and operates without access to the expert's actions.\\
\noindent{\bf Imitation Learning from Observation Alone} (ILFO) 
\citep{Sun19FAIL} presents a model-free approach \textsc{Fail} that outputs a non-stationary policy by reducing the ILFO problem into a sequence of min-max problems, one per time-step. While being theoretically sound, this approach cannot share data across different time steps and thus is not data efficient for long horizon problems. Also \textsc{Fail} in theory only works for discrete actions. In contrast, our paper learns a stationary policy using model-based approaches by reusing data across all time steps and extends to continuous action space.  
Another line of work~\citep{TorabiBCO,EdwardsILPO,YangInvDynDis} relies on learning an estimate of expert action, often through the use of an inverse dynamics models, $P^e(a|s,s')$. 
Unfortunately, an inverse dynamics model is not well defined in many benign problem instances. For instance, \cite[remark 1, section 9.3]{ZhuLDZ20} presents an example showing that inverse dynamics isn't well defined except in the case when the MDP dynamics is injective (i.e., no two actions could lead to the same next state from the current state. Note that even deterministic transition dynamics doesn't imply injectivity of the MDP dynamics).
Furthermore, ILPO~\citep{EdwardsILPO} applies to MDPs with deterministic transition dynamics and discrete actions. \algname, on the other hand, learns the forward dynamics model which is always unique and well-defined for both deterministic and stochastic transitions and works with discrete and continuous actions. Another line of work in ILFO revolves around using hand-crafted cost functions that may rely on task-specific knowledge~\citep{PengALP18,AytarPBP0F18,RLVideosSchmeckpeper}. The performance of policy outputted by these efforts relies on the quality of the engineered cost functions. 
In contrast,~\algname~does not require cost function engineering.\\ 
\noindent{\bf Model-Based RL} has seen several advances~\citep{Sutton90,LiT04,deisenroth2011pilco} including ones based on deep learning (e.g., \cite{LampeR14,GuLSL16,luo2018algorithmic,JannerFZL19,POLO,wang2019benchmarking}). Given \algname's modularity, these advances in model-based RL can be translated to improved algorithms for the ILFO problem. \algname~bears parallels to provably efficient model-based RL approaches including $E^3$~\citep{KearnsSingh2002, KakadeKL03}, R-MAX~\citep{Brafman2001RMAXA}, UCRL \citep{jaksch2010near}, UCBVI \citep{azar2017minimax}, Linear MDP \citep{yang2019reinforcement}, LC$^3$ \citep{kakade2020information}, Witness rank \citep{sun2019model} which utilize optimism based approaches to trade-off exploration and exploitation. Our work utilizes optimism to trade-off \emph{exploration and imitation}.
\section{Setting}\label{sec:setting}
We consider episodic finite-horizon MDP  $\Mcal = \{\Scal,\Acal, P^\star, H, c, s_0\}$, where $\Scal,\Acal$ are the state and action space, $P^\star:\Scal\times\Acal\mapsto \Scal$ is the MDP's transition kernel, H is the horizon, $s_0$ is a fixed initial state (note that our work generalizes when we have a distribution over initial states), and $c$ is the \emph{state-dependent} cost function $c: \Scal\mapsto [0,1]$. Our result can be extended to the setting where $c: \Scal\times\Scal\mapsto [0,1]$, i.e., the ground truth cost $c(s,s')$ depends on state and next state pairs. For analysis simplicity, we focus on $c:\Scal\mapsto [0,1]$.\footnote{Without any additional assumptions, in ILFO, learning to optimize action-dependent cost $c(s,a)$ (or $c(s,a,s')$ is {\bf not possible}. For example, if there are two sequences of actions that generate the same sequence of states, without seeing expert's preference over actions, we do not know which actions to commit to.}

We denote $d^{\pi}_{P}\in\Delta(\Scal\times\Acal)$ as the average state-action distribution of policy $\pi$ under the transition kernel $P$, i.e., $d^{\pi}_{P}(s,a):=\tfrac{1}{H}\sum_{t=1}^H Pr(s_t=s,a_t = a|s_0, \pi, P)$, where $Pr(s_t=s,a_t = a|s_0, \pi, P)$ is the probability of reaching $(s,a)$ at time step $t$ starting from $s_0$ by following $\pi$ under transition kernel $P$. We abuse notation and write $s\sim d^{\pi}_{P}$ to denote a state $s$ is sampled from the state-wise distribution which marginalizes action over $d^{\pi}_{P}(s,a)$, i.e., $d^{\pi}_{P}(s):=\tfrac{1}{H}\sum_{t=1}^H Pr(s_t=s|s_0,\pi,P)$. For a given cost function $f:\Scal\mapsto [0,1]$, $V^{\pi}_{P;f}$ denotes the expected total cost of $\pi$ under transition $P$ and cost function $f$. Similar to IL setting, in ILFO, the \emph{ground truth cost $c$ is unknown}. Instead, we can query the expert, denoted as $\pi^e: \Scal\mapsto \Delta(\Acal)$. Note that the expert $\pi^e$ could be stochastic and does not have to be the optimal policy. The expert, when queried, provides state-only demonstrations $\tau = \{s_0, s_1 \dots s_H\}$, where $s_{t+1} \sim P^\star(\cdot | s_t, a_t)$ and $a_t \sim \pi^e(\cdot | s_t)$.  


The goal is to leverage expert's state-wise demonstrations to learn a policy $\pi$ that performs as well as $\pi^e$ in terms of optimizing the ground truth cost $c$, with polynomial sample complexity on problem parameters such as horizon, number of expert samples and online samples and underlying MDP's complexity measures (see section~\ref{sec:analysis} for precise examples). We track the progress of any (randomized) algorithm by measuring the (expected) regret incurred by a policy $\pi$ defined as $E[V^\pi] - V^{\pi^*}$ as a function of number of online interactions utilized by the algorithm to compute $\pi$.
\subsection{Function Approximation Setup}
Since the ground truth cost $c$ is unknown, we utilize the notion of a function class (i.e., discriminators) $\Fcal \subset \Scal\mapsto [0,1]$ to define the costs that can then be utilized by a planning algorithm (e.g. NPG~\citep{Kakade01}) for purposes of distribution matching with expert states. If the ground truth $c$ depends $(s,s')$, we use discriminators $\Fcal\subset \Scal\times\Scal\mapsto [0,1]$. 
Furthermore, we use a model class $\Pcal \subset \Scal\times\Acal\mapsto \Delta(\Scal)$ to capture the ground truth transition $P^\star$. For the theoretical results in the paper, we assume realizability:
\begin{assum}\label{assum:realizable}
    Assume $\Fcal$ and $\Pcal$ captures ground truth cost and transition, i.e., $c\in\Fcal$, $P^\star\in\Pcal$. 
\end{assum}
We will use Integral probability metric (IPM) with $\Fcal$ as our divergence measure. Note that if $c\in\Fcal$ and $c:\Scal\mapsto [0,1]$, then IPM defined as $\max_{f\in\Fcal} \mathbb{E}_{s\sim d^{\pi}} f(s) - \mathbb{E}_{s\sim d^{\pi^e}}f(s)$ directly upper bounds sub-optimality gap $V^{\pi} - V^{\pi^e}$, where $V^{\pi}$ is the expected total cost of $\pi$ under cost function $c$. This justifies why minimizing IPM between two state distributions suffices~\citep{HoEr16GAIL, Sun19FAIL}. Similarly, if $c$ depends on $s,s'$, we can simply minimize IPM between two state-next state distributions, i.e., $\max_{f} \mathbb{E}_{s,s'\sim d^{\pi}} f(s,s') - \mathbb{E}_{s,s'\sim d^{\pi^e}}f(s,s')$ where discriminators now take $(s,s')$ as input.\footnote{we slightly abuse notation here and denote $d^{\pi}$ as the average state-next state distribution of $\pi$, i.e., $d^{\pi}(s,s') := d^{\pi}(s) \int_{a} \pi(a|s) da P^\star(s' |s ,a)$.}

To permit generalization, we require $\Pcal$ to have bounded complexity. For analytical simplicity, we assume  $\Fcal$ is discrete (but exponentially large), and we require the sample complexity of any PAC algorithm to scale polynomially with respect to its complexity $\ln(|\Fcal|)$. The $\ln| \Fcal |$ complexity can be replaced to bounded conventional complexity measures such as Rademacher complexity and covering number for continuous $\Fcal$ (e.g., $\Fcal$ being a Reproducing Kernel Hilbert Space). 

\section{Algorithm}\label{sec:alg}
\begin{algorithm}[t]
\caption{\algname: The framework of \textbf{Mo}del-\textbf{b}ased \textbf{I}mitation \textbf{L}earning and \textbf{E}xploring for ILFO }\label{alg:main_alg_bonus}
\begin{algorithmic}[1]
    \STATE \textbf{Require}: IPM class $\Fcal$, dynamics model class $\Pcal$, policy class $\Pi$, bonus function class $\Bcal$, expert dataset $\Dcal_e\equiv\{s^e_i\}_{i=1}^N$.
    \STATE Initialize policy $\pi_0\in\Pi$, replay buffer $\Dcal_{-1}=\emptyset$. 
    \FOR{$t = 0, \cdots ,{T-1}$}
        \STATE Execute $\pi_t$ in true environment $P^\star$ to get samples $\tau_t = \{s_k, a_k\}_{k=0}^{H-1} \cup s_H$. Append to replay buffer $\Dcal_t=\Dcal_{t-1}\cup\tau_t$.
        \STATE \textcolor{blue}{Update model and bonus}: $\widehat{P}_{t+1}:\Scal\times\Acal\to\Scal$ and $b_{t+1}:\Scal\times\Acal\to\mathbb{R}^+$ using buffer $\Dcal_t$. \label{line:model_fit}
        
	    \STATE \textcolor{blue}{Optimistic model-based min-max IL}: obtain $\pi_{t+1}$ by solving equation~(\ref{eq:model-based-ipm}) with $\widehat{P}_{t+1}, b_{t+1},\Dcal_e$.\label{line:planning}
    \ENDFOR
    \STATE  {\bf Return} $\pi_{T}$.
\end{algorithmic}
\end{algorithm}
\noindent We introduce~\algname~(Algorithm~\ref{alg:main_alg_bonus}) for the ILFO problem. 
~\algname~utilizes (a) a function class $\Fcal$ for Integral Probability Metric (IPM) based distribution matching, (b) a transition dynamics model class $\Pcal$ for model learning, (c) a bonus parameterization $\Bcal$ for exploration, (d) a policy class $\Pi$ for policy optimization.  
At every iteration, \algname~(in  \pref{alg:main_alg_bonus}) performs the following steps:
\begin{enumerate}[leftmargin=*]
    \itemsep 0em
    \item \textbf{Dynamics Model Learning:} execute policy in the environment online to obtain state-action-next state $(s,a,s')$ triples which are appended to the buffer $\Dcal$. Fit a transition model $\widehat{P}$ on $\Dcal$.
    \item \textbf{Bonus Design:} design bonus to incentivize exploration where the learnt dynamics model is uncertain, i.e. the bonus $b(s,a)$ is large at state $s$ where $\widehat{P}(\cdot | s,a)$ is uncertain in terms of estimating $P^\star(\cdot|s,a)$, while $b(s,a)$ is small where $\widehat{P}(\cdot | s,a)$ is certain.
    \item \textbf{Imitation-Exploration tradeoff:} Given discriminators $\Fcal$, model $\widehat{P}$, bonus $b$ and expert dataset $\Dcal_e$, perform distribution matching by solving the model-based IPM objective with bonus:
\begin{align}
        &\pi_{t+1}\leftarrow \arg\min_{\pi\in\Pi}\max_{f\in\Fcal}\  L(\pi,f;\widehat{P}, b, \Dcal_e):=   \mathbb{E}_{(s,a)\sim d^{\pi}_{\widehat{P}}} \left[f(s) - b(s,a)\right] - \mathbb{E}_{s\sim\Dcal_e} \left[f(s)\right],\label{eq:model-based-ipm}
\end{align} where $\EE_{s\sim \Dcal_e}f(s) : = \sum_{s\in\Dcal_e} f(s) / |\Dcal_e|$.
\end{enumerate}
Intuitively, the bonus cancels out discriminator's power in parts of the state space where the dynamics model $\widehat{P}$ is not accurate, thus offering freedom for \algname~to explore. We first explain \algname's components and then discuss \algname's key property---which is to trade-off \emph{exploration and imitation}. 

\subsection{Components of~\algname}
This section details \algname's components.

{\bf Dynamics model learning:} For the model fitting step in \pref{line:model_fit}, we assume that we get a calibrated model in the sense that:
$\| \widehat{P}_{t}(\cdot|s,a) - P^\star(\cdot|s,a) \|_{1} \leq \sigma_t(s,a), \forall s,a$ for some uncertainty measure $\sigma_t(s,a)$, similar to model-based RL works, e.g.~\citep{curi2020efficient}. We discuss ways to estimate $\sigma_t(s,a)$ in the bonus estimation below. There are many examples (discussed in \pref{sec:analysis}) that permit efficient estimation of these quantities including tabular MDPs, Kernelized nonlinear regulator, nonparametric model such as Gaussian Processes. 
Consider a general function class $\Gcal \subset \Scal\times\Acal\mapsto \Scal$, 
one can learn $\widehat{g}_t$ via solving a regression problem, i.e., 
\begin{align}\label{eq:model_learning}
\widehat{g}_t = \argmin_{g\in\Gcal} \sum_{s,a,s'\in\Dcal_t} \| g(s,a) - s' \|_2^2,
\end{align}
and setting $\widehat{P}_t(\cdot | s,a) = \Ncal\left( \widehat{g}_t(s,a), \sigma^2 I \right)$, where, $\sigma$ is the standard deviation of error induced by $\widehat{g}_t$. 
In practice, such parameterizations have been employed in several settings in RL with $\Gcal$ being a multi-layer perceptron (MLP) based function class (e.g.,\citep{RajeswaranGameMBRL}). In \pref{sec:analysis}, we also connect this with prior works in provable model-based RL literature.\\
{\bf Bonus:} We utilize bonuses as a means to incentivize the policy to efficiently explore unknown parts of the state space for improved model learning (and hence better distribution matching). With the uncertainty measure $\sigma_t(s,a)$ obtained from calibrated model fitting, we can simply set the bonus $b_t(s,a) = O(H \sigma_t(s,a))$. How do we obtain $\sigma_t(s,a)$ in practice? For a general class $\Gcal$, given the least square solution $\widehat{g}_t$, we can define a version space $\Gcal_t$ as:
    $\Gcal_t = \left\{g\in\Gcal: \sum_{i=0}^{t-1}\sum_{h=0}^{H-1} \| g(s_h^t,a_h^t) - \widehat{g}_t(s_h^t,a_h^t) \|_2^2 \leq z_t \right\}$,
with $z_t$ being a hyper parameter. The version space $\Gcal_t$ is an \emph{ensemble of functions} $g\in\Gcal$ which has training error on $\Dcal_t$ almost as small as the training error of the least square solution $\widehat{g}_t$. In other words, version space $\Gcal_t$ contains functions that agree on the training set $\Dcal_t$.  
The uncertainty measure at $(s,a)$ is then the \emph{maximum disagreement} among models in $\Gcal_t$, with $\sigma_t(s,a) \propto \sup_{f_1,f_2\in\Gcal_t} \| f_1(s,a) - f_2(s,a) \|_2$.  Since $g\in\Gcal_t$ agree on $\Dcal_t$, a large $\sigma_t(s,a)$ indicates $(s,a)$ is novel. See example~\ref{exp:general_G} for more theoretical details. 

Empirically, disagreement among an ensemble~\citep{OsbandAC18, Azizzadenesheli18, BurdaESK19, PathakG019, POLO} is used for designing bonuses that incentivize exploration.  We utilize an neural network ensemble
, where each model is trained on $\Dcal_t$ (via SGD on squared loss Eq.~\ref{eq:model_learning}) with different initialization. This approximates the version space $\Gcal_t$, and the bonus is set as a function of maximum disagreement among the ensemble's predictions.

{\bf Optimistic model-based min-max IL:} For model-based imitation (\pref{line:planning}), \algname~takes the current model $\widehat{P}_t$ and the discriminators ${\Fcal}$ as inputs and performs policy search to minimize the divergence defined by $\widehat{P}_n$ and ${\Fcal}$: $d_t(\pi, \pi^e) :=\max_{f\in{\Fcal}} \left[\EE_{s,a\sim d^{\pi}_{\widehat{P}_t}} (f(s) - b_t(s,a))  - \EE_{s\sim d^{\pi^e}} f(s) \right]$.
Note that, for a fixed $\pi$, the $\arg\max_{f\in\Fcal}$ is identical with or without the bonus term, since $\EE_{s,a\sim d^{\pi}_{\widehat{P}_t}} b_t(s,a)$ is independent of $f$. 
In our implementation, we use the Maximum Mean Discrepancy (MMD) with a Radial Basis Function (RBF) kernel to model discriminators $\Fcal$.\footnote{For MMD with kernel $k$, $\Fcal = \{ w^{\top} \phi(s,a) | \|w\|_2 \leq 1 \}$ where $\phi$: $\langle \phi(s,a), \phi(s',a') \rangle = k((s,a),(s',a'))$. } We compute $\argmin_{\pi} d_t(\pi, \pi^e)$ by iteratively (1) computing the $\argmax$ discriminator $f$ 
given the current $\pi$, and (2) using policy gradient methods (e.g., TRPO) to update $\pi$ inside $\widehat{P}_t$ with $f - b_t$ as the cost. Specifically, to find $\pi_t$ (\pref{line:planning}), we iterate between the following two steps:
\begin{align*} 
    \text{1. Cost update:}\hat{f} = \argmax_{f\in\Fcal} \EE_{s\sim d^{\hat\pi}_{\widehat{P}_t}} f(s) - \EE_{s\sim \Dcal^e} f(s),\quad \text{2. PG Step:}\hat{\pi} = \hat{\pi} - \eta \cdot \nabla_{\pi} V^{\hat\pi}_{\widehat{P}_t, \hat{f}-b_t},
\end{align*}
where the PG step uses the learnt dynamics model $\widehat{P}_t$ and the optimistic IPM cost $\hat{f}(s) - b_t(s,a)$. Note that for MMD, the cost update step has a closed-form solution. 
\subsection{Exploration And Imitation Tradeoff}
We note that \algname~is performing an automatic \emph{trade-off between exploration and imitation}. 
More specifically, the bonus is designed such that it has high values in the state space that have not been visited, and low values in the state space that have been frequently visited by the sequence of learned policies so far. 
Thus, by incorporating the bonus into the discriminator $f\in\Fcal$ (e.g., $\widetilde{f}(s,a) = f(s) - b_t(s,a)$), we diminish the power of discriminator $f$ at novel state-action space regions, which relaxes the state-matching constraint (as the bonus cancels the penalty from the discriminators) at those novel regions so that exploration is encouraged. For well explored states, we force the learner's states to match the expert's using the full power of the discriminators. Our work uses optimism (via coupling bonus and discriminators) to carefully balance imitation and exploration. 


\section{Analysis}
\label{sec:analysis}
This section presents a general  theorem for \algname~that uses the notion of \emph{information gain} \cite{srinivas2009gaussian}, and then specializes this result to common classes of stochastic MDPs such as discrete (tabular) MDPs, Kernelized nonlinear regulator \cite{KakadeKNR}, and general function class with bounded Eluder dimension \cite{RussoEluder}. 

Recall, \pref{alg:main_alg_bonus} generates one state-action trajectory $\tau^t := \{ s_h^t, a^t_h\}_{h=0}^{H}$ at iteration $t$ and estimates model $\widehat{P}_{t}$ based on $\Dcal_{t}= \tau^0,\dots, \tau^{t-1}$. We present our theorem under the assumption that model fitting gives us a model $\widehat{P}$ and a confidence interval of the model's prediction. \begin{assum}[Calibrated Model]
    \label{assum:model_calibrate}
    For all iteration $t$ with $t\in \mathbb{N}$, with probability $1-\delta$, we have a model $\widehat{P}_t$ and its associated uncertainty measure $\sigma_t: \Scal\times\Acal\mapsto \mathbb{R}^+$, such that for all $s,a\in \Scal\times\Acal$\footnote{the uncertainty measure $\sigma_t(s,a)$  will depend on the input failure probability $\delta$, which we drop here for notational simplicity. When we introduce specific examples, we will be explicit about the dependence on the failure probability $\delta$ which usually is in the order of $\ln(1/\delta)$.}
    \begin{align*}
        \left\| \widehat{P}_t(\cdot | s,a) - P^\star(\cdot | s,a) \right\|_{1} \leq \min\left\{\sigma_t(s,a), 2 \right\} . 
    \end{align*}
\end{assum}
Assumption~\ref{assum:model_calibrate} has featured in prior works (e.g., \cite{curi2020efficient}) to prove regret bounds in model-based RL. Below we demonstrate  examples that satisfy the above assumption. 

\begin{example}[Discrete MDPs] Given $\Dcal_t$, denote $N(s,a)$ as the number of times $(s,a)$ appears in $\Dcal_t$, and $N(s,a,s')$ number of times $(s,a,s')$ appears in $\Dcal_t$. We can set $\widehat{P}_t(s'|s,a) = N(s,a,s') / N(s,a), \forall s,a,s'$. We can set $\sigma_t(s,a) = \widetilde{O}\left(\sqrt{S / N(s,a)}\right)$.
\end{example}

\begin{example}[KNRs \cite{KakadeKNR}]
\label{exp:knr}
For KNR, we have $P^\star(\cdot | s,a) = \mathcal{N}\left( W^\star\phi(s,a), \sigma^2 I \right)$ where feature mapping $\phi(s,a) \in \mathbb{R}^{d}$ and $\|\phi(s,a)\|_2\leq 1$ for all $s,a$.\footnote{The covariance matrix can be generalized to any PSD matrix with bounded condition number.} 
We can learn $\widehat{P}_t$ via Kernel Ridge regression, i.e., $\widehat{g}_t(s,a) = \widehat{W}_t \phi(s,a)$ where $$\widehat{W}_t = \argmin_{W} \sum_{s,a,s'\in\Dcal_t} \left\|W\phi(s,a) - s' \right\|_2^2 + \lambda \left\|W \right\|_F^2$$ where $\|\cdot\|_F$ is the Frobenius norm.  
The uncertainty measure $\sigma_t(s,a) = \frac{\beta_t }{\sigma} \left\| \phi(s,a) \right\|_{\Sigma_t^{-1}}$, 
$\beta_t=  \{ 2\lambda \|W^\star\|^2_2  + 8 \sigma^2\cdot [d_s \ln(5) + 2 \ln( t^2 / \delta) + \ln(4) + \allowbreak\ln\left( \det(\Sigma_t) / \det(\lambda I)  \right)] \}^{1/2}
$,
and, $\Sigma_t = \sum_{k = 0}^{t-1} \sum_{h=1}^{H-1} \phi(s_h^k, a_h^k)\phi(s_h^k,a_h^k)^{\top} + \lambda I \text{ with } \lambda>0$.
See \pref{prop:knr_bonus} for more details.
\end{example}
Similar to RKHS, Gaussian processes (GPs) offers a calibrated model \citep{srinivas2009gaussian}.  Note that GPs offer similar regret bounds as RKHS; so we do not discuss GPs and instead refer readers to \cite{curi2020efficient}.
%


\begin{example}[General class $\Gcal$] \label{exp:general_G}
In this case, assume we have $P^\star(\cdot | s,a) = \mathcal{N}(g^\star(s,a), \sigma^2 I)$ with $g^\star\in\Gcal$. Assume $\Gcal$ is discrete (but could be exponentially large with complexity measure, $\ln(|\Gcal|)$), and $\sup_{g\in\Gcal,s,a} \|g(s,a)\|_2 \leq G\in\mathbb{R}^+$. Suppose model learning step is done by least square:
$\widehat{g}_t = \argmin_{g\in\Gcal} \sum_{k=0}^{t-1}\sum_{h=0}^{H-1} \left\| g(s_h^k,a_h^k) - s_{h+1}^k \right\|_2^2$.
Compute a version space $\Gcal_t = \left\{ g\in\Gcal: \sum_{k=0}^{t-1}\sum_{h=0}^{H-1} \left\| g(s_h^k,a_h^k) - \widehat{g}_t(s_h^k,a_h^k)  \right\|_2^2 \leq z_t \right\}$, where $z_t =  2\sigma^2 G^2 {\ln(2t^2 |\Gcal| / \delta) }$ and use this for uncertainty computation.  In particular, set uncertainty $\sigma_t(s,a) = \frac{1}{\sigma} \max_{g_1\in\Gcal ,g_2\in\Gcal} \| g_1(s,a) - g_2(s,a) \|_2$, i.e., the maximum disagreement between any two functions in the version space $\Gcal_t$. Refer to~\pref{prop:uncertainty_eluder} for more details. \looseness=-1
\end{example}
The maximum disagreement above motivates our practical implementation where we use an ensemble of neural networks to approximate the version space and use the maximum disagreement among the models' predictions as the bonus. We refer readers to \pref{sec:exp} for more details.

\subsection{Regret Bound}
We bound regret with the quantity named \emph{Information Gain} $\Ical$ (up to some constant scaling factor)~\citep{srinivas2009gaussian}:
\begin{align}
\label{eq:info_gain}
\Ical_T :=  \max_{\text{Alg}}\EE_{\text{Alg}} \left[ \sum_{t=0}^{T-1} \sum_{h=0}^{H-1} \min\left\{\sigma^2_t(s_h^t, a_h^t), 1\right\}\right],
\end{align}  where $\text{Alg}$ is any adaptive algorithm (thus including \pref{alg:main_alg_bonus}) that maps from history before iteration $t$ to some policy $\pi_t \in \Pi$. 
After the main theorem, we give concrete examples for $\Ical_T$ where we show that $\Ical_{T}$ has extremely mild growth rate with respect to $T$ (i.e., logarithimic).  Denote $V^{\pi}$ as the expected total cost of $\pi$ under the true cost function $c$ and the real dynamics $P^\star$. 

\begin{theorem}[Main result]  \label{thm:main_unified}Assume model learning is calibrated (i.e., \pref{assum:model_calibrate} holds for all $t$) and \pref{assum:realizable} holds.  In \pref{alg:main_alg_bonus}, set bonus $b_t(s,a): = H \min\{\sigma_t(s,a),2\}$. There exists a set of parameters, such that after running \pref{alg:main_alg_bonus} for $T$ iterations, we have:
\begin{align*}
    \EE\left[ \min_{t\in[0,\dots, T-1]} V^{\pi_t} - V^{\pi^e} \right] \leq 
    O\left(\frac{H^{2.5}\sqrt{\Ical_T}}{\sqrt{T}} + H \sqrt{\frac{\ln(T H |\Fcal|)}{ N }}\right).
\end{align*}
\end{theorem}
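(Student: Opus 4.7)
The plan is to start from Assumption~\ref{assum:realizable}, which, together with $c:\Scal\to[0,1]$, yields the IPM upper bound $V^{\pi_t}-V^{\pi^e}\le H\max_{f\in\Fcal}\bigl(\EE_{s\sim d^{\pi_t}_{P^\star}}f(s)-\EE_{s\sim d^{\pi^e}_{P^\star}}f(s)\bigr)$. Letting $f^\star$ be the maximizer, I would add and subtract the surrogate objective $\EE_{(s,a)\sim d^{\pi_t}_{\widehat{P}_t}}[f^\star(s)-b_t(s,a)]-\EE_{s\sim\Dcal_e}f^\star(s)$ to decompose the inner gap into three pieces: (A) a learner-side model-mismatch term $\EE_{d^{\pi_t}_{P^\star}}f^\star-\EE_{d^{\pi_t}_{\widehat{P}_t}}(f^\star-b_t)$; (B) the surrogate value at $(\pi_t,f^\star)$ minus $\EE_{\Dcal_e}f^\star$; and (C) the expert-side generalization gap $\EE_{\Dcal_e}f^\star-\EE_{d^{\pi^e}_{P^\star}}f^\star$.

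Step two handles the three pieces with the simulation lemma, the min--max optimality from Line~\ref{line:planning}, and Hoeffding. For (A), the form of the simulation lemma that samples under $\widehat{P}_t$ and evaluates the value function under $P^\star$, together with Assumption~\ref{assum:model_calibrate} and the choice $b_t=H\min\{\sigma_t,2\}$, gives $\EE_{d^{\pi_t}_{P^\star}}f^\star-\EE_{d^{\pi_t}_{\widehat{P}_t}}f^\star\le \EE_{d^{\pi_t}_{\widehat{P}_t}}b_t$, so (A)$\le 2\EE_{d^{\pi_t}_{\widehat{P}_t}}b_t$. For (B), optimality of $\pi_t$ allows me to swap it for $\pi^e$ in the inner min; calling the new inner maximizer $\tilde f$, the same simulation argument gives $\EE_{d^{\pi^e}_{\widehat{P}_t}}\tilde f-\EE_{d^{\pi^e}_{P^\star}}\tilde f\le \EE_{d^{\pi^e}_{\widehat{P}_t}}b_t$, which cancels exactly against the $-b_t$ in the surrogate. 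What survives is $\EE_{d^{\pi^e}_{P^\star}}\tilde f-\EE_{\Dcal_e}\tilde f$, which Hoeffding plus a union bound over $\Fcal$ and $t\in[T]$ controls by $O\bigl(\sqrt{\ln(TH|\Fcal|/\delta)/N}\bigr)$; piece (C) is identical. This cancellation between bonus and expert-side model error is the key payoff of coupling $b_t$ to the discriminator.

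Step three sums over $t$ and uses $\min_t\le$ average, giving $(2H/T)\sum_t\EE_{d^{\pi_t}_{\widehat{P}_t}}b_t+O\bigl(H\sqrt{\ln(TH|\Fcal|)/N}\bigr)$. To convert the bonus sum into the information gain $\Ical_T$, I would apply the simulation lemma once more with $b_t$ itself as cost (whose $Q$-function is bounded by $2H^2$), yielding $\EE_{d^{\pi_t}_{\widehat{P}_t}}b_t\le(1+2H)\EE_{d^{\pi_t}_{P^\star}}b_t$. Cauchy--Schwarz with $\min\{\sigma_t,2\}^2\le 4\min\{\sigma_t^2,1\}$ then gives $\sum_t\EE_{d^{\pi_t}_{P^\star}}\min\{\sigma_t,2\}\le 2\sqrt{T\Ical_T/H}$, so $\sum_t\EE_{d^{\pi_t}_{\widehat{P}_t}}b_t=O(H^{1.5}\sqrt{T\Ical_T})$; dividing by $T$ produces the claimed $H^{2.5}\sqrt{\Ical_T/T}$ rate.

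The main obstacle I anticipate is precisely this last conversion: naively bounding $\sum_t\EE_{d^{\pi_t}_{\widehat{P}_t}}b_t$ by any quantity involving only real-trajectory visits pays an extra $H$ because $\|V_{b_t}\|_\infty\le 2H^2$, and this is exactly what drives the $H^{2.5}$ horizon factor in the statement. A secondary technicality is keeping Assumption~\ref{assum:model_calibrate} and the Hoeffding event simultaneously valid across all $T$ iterations and all $f\in\Fcal$, which a standard union bound absorbs into the $\ln(TH|\Fcal|)$ factor.
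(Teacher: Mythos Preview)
Your proposal is correct and reaches the same $H^{2.5}\sqrt{\Ical_T/T}$ rate, but the decomposition differs from the paper's in an instructive way. The paper does \emph{not} pass through the IPM maximizer $f^\star$: it first uses the optimism lemma together with $\pi^e\in\Pi$ and min--max optimality to show that the surrogate at $\pi_t$ is at most $\epsilon_{stats}$, then specializes the discriminator to $c\in\Fcal$ to obtain $V^{\pi_t}_{\widehat{P}_t,\,c-b_t}\le V^{\pi^e}+2H\epsilon_{stats}$, and finally applies the simulation lemma \emph{once}, in the direction that samples under $d^{\pi_t}_{P^\star}$, to bound $V^{\pi_t}-V^{\pi_t}_{\widehat{P}_t,\,c-b_t}$. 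Because the expectation already lives under $P^\star$, the bonus term lands under the true distribution immediately and the Cauchy--Schwarz reduction to $\Ical_T$ follows with no further conversion. Your route---IPM bound first, then the three-piece split---naturally leaves the bonus under the model distribution $d^{\pi_t}_{\widehat{P}_t}$ (since that is where the surrogate is defined), forcing the extra simulation-lemma step you flag as the ``main obstacle''. That step is valid and recovers the right $H$-dependence, but it is avoidable: the paper's choice to invoke $c\in\Fcal$ after the surrogate bound, rather than at the very start via the IPM inequality, is precisely what lets a single simulation lemma suffice. Your approach is arguably more modular (the optimism cancellation on the expert side in piece~(B) is isolated cleanly), while the paper's is shorter.
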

Appendix~\ref{sec:app_proofs} contains proof of Theorem~\ref{thm:main_unified}. This theorem indicates that as long as $\Ical_T$ grows sublinearly $o({T})$, we find a policy that is at least as good as the expert policy when $T$ and $N$ approach infinity. 
For any discrete MDP, KNR \cite{KakadeKNR}, Gaussian Processes models  \cite{srinivas2009gaussian},  and general $\Gcal$ with bounded Eluder dimension (\cite{russo2014learning,osband2014model}), we can show that the growth rate of $\Ical_T$ with respect to $T$ is mild. 

\begin{corollary}[Discrete MDP]\label{coro:discrete}
For discrete MDPs, $\Ical_{T} = \widetilde{O}(H{S^2A})$ where $S = |\Scal|, A = |\Acal|$.
Thus:
\begin{align*}
\EE\left[ \min_{t\in[0,\dots, T-1]} V^{\pi_t} - V^{\pi^e} \right] = \widetilde{O}\left( \frac{ H^3  S \sqrt{A}}{\sqrt{T}} + H \sqrt{ \frac{\ln( |\Fcal|)}{N}}    \right).
\end{align*}
\end{corollary}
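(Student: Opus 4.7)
The plan is to deduce this corollary from Theorem~\ref{thm:main_unified} by (a) verifying that the tabular plug-in estimator $\widehat{P}_t(s'\mid s,a) = N(s,a,s')/N(s,a)$ satisfies Assumption~\ref{assum:model_calibrate} with $\sigma_t(s,a) = \widetilde{O}(\sqrt{S/N_t(s,a)})$, and (b) bounding the information gain by $\Ical_T = \widetilde{O}(HS^2A)$. Part (a) is a standard concentration exercise: for fixed $(s,a)$ with $N_t(s,a)$ i.i.d.\ next-state samples from $P^\star(\cdot\mid s,a)$, Weissman's $\ell_1$ deviation bound for an empirical categorical distribution gives $\|\widehat{P}_t(\cdot\mid s,a) - P^\star(\cdot\mid s,a)\|_1 = O(\sqrt{S\log(1/\delta')/N_t(s,a)})$ with probability at least $1-\delta'$. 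Union-bounding over the $SA$ pairs, the $T$ iterations, and the possible counts $N_t(s,a) \in \{1,\dots,TH\}$ with $\delta' \sim \delta/(SAT^2H)$ yields a uniform-in-time $\sigma_t$ of the advertised order.

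For part (b) I would rewrite the information gain by grouping visits per state-action pair:
\[
\Ical_T \;=\; \sum_{(s,a)} \sum_{t=0}^{T-1} n_t(s,a) \cdot \min\!\left\{\tfrac{c\, S}{N_t(s,a)\vee 1},\; 1\right\},
\]
where $n_t(s,a) \le H$ denotes the number of visits to $(s,a)$ during iteration $t$, $N_t(s,a) = \sum_{k<t} n_k(s,a)$, and $c$ absorbs the log factors from the union bound. For each fixed $(s,a)$ I would split iterations into a \emph{burn-in} phase where $N_t(s,a) \le cS$ (so the $\min$ saturates at $1$) and a \emph{tail} phase where $N_t(s,a)$ exceeds this threshold. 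The burn-in phase contributes at most the total number of visits accrued before its exit, which is $\widetilde{O}(S) + H$ because the count can overshoot the threshold by at most $H$ in a single iteration. The tail phase contributes at most $cS \cdot \sum_{t}\! n_t(s,a)/N_t(s,a)$; because all (up to $H$) visits in a single iteration share the same denominator $N_t(s,a)$, the standard continuous-update pigeonhole bound $\sum_{j=1}^{N_T} 1/j = O(\log N_T)$ is inflated by at most a factor of $H$, yielding $\widetilde{O}(HS)$ per pair.

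Summing over the $SA$ state-action pairs then gives $\Ical_T = \widetilde{O}(HS^2A)$. Substituting into Theorem~\ref{thm:main_unified} and simplifying the dominant exploration term produces $H^{2.5}\sqrt{\Ical_T/T} = \widetilde{O}(H^{3}S\sqrt{A/T})$, matching the stated rate; the expert-sample term $H\sqrt{\ln|\Fcal|/N}$ passes through unchanged. The principal obstacle I anticipate is the tail-phase pigeonhole: the fact that $N_t(s,a)$ only refreshes between full trajectories (not after each individual transition) is precisely what forces the extra $H$ factor relative to the classical online tabular analysis, and quantifying this cleanly---while simultaneously handling the burn-in overshoot---is where the bookkeeping must be done carefully. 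The Weissman-style concentration, the union bound, and the final substitution into the main theorem are otherwise essentially mechanical.
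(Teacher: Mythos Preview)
Your proposal is correct. Part (a) is identical in spirit to the paper: the paper also invokes a Weissman-type $\ell_1$ concentration bound (quoted as Lemma~6.2 of \cite{agarwal2019reinforcement}) together with a union bound over $(s,a)$ and iterations to get $\sigma_t(s,a)=\widetilde{O}(\sqrt{S/N_t(s,a)})$.

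For part (b) you take a genuinely different route. The paper observes that the tabular model is a special case of the KNR setting with one-hot features $\phi(s,a)\in\mathbb{R}^{SA}$, and then imports the elliptical-potential style bound (Lemma~B.6 of \cite{kakade2020information}) to obtain $\sum_{t}\min\{\sum_{h}1/N_t(s_h^t,a_h^t),\,1\}\le 2SA\ln(1+TH)$ in one line; multiplying by the $S\ln(\cdot)$ coming from $\sigma_t^2$ and an extra $H$ from the inner-sum/min swap gives $\Ical_T=\widetilde{O}(HS^2A)$. Your argument instead decomposes by $(s,a)$ pair and handles the batched-count issue with an explicit burn-in/tail split and a harmonic-sum bound, arriving at the same $\widetilde{O}(HS)$ per pair. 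The paper's approach is shorter and reuses machinery already developed for the KNR case; yours is more self-contained and makes transparent exactly where the extra $H$ factor (from within-trajectory count staleness) enters, at the cost of the bookkeeping you flag. Both are valid and lead to the same final rate after substitution into Theorem~\ref{thm:main_unified}.
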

Note that Corollary~\ref{coro:discrete} (proof in \pref{app:discrete_mdp}) hold for \emph{any} MDPs (not just injective MDPs) and any stochastic expert policy. The dependence on $A,T$ is tight (see lower bound in \pref{ssec:theory_explore}). Now we specialize \pref{thm:main_unified} to continuous MDPs below. 
\begin{corollary}[KNRs (Example~\ref{exp:knr})]For simplicity, consider the finite dimension setting $\phi:\Scal\times\Acal\mapsto \RR^d$. We can show that $\Ical_{T} = \widetilde{O}\left( H d + H d d_s + H d^2  \right)$ (see \pref{prop:IG_knr} for details), where $d$ is the dimension of the feature $\phi(s,a)$ and $d_s$ is the dimension of the state space. 
Thus, we have \footnote{We use $\widetilde{O}$ to suppress log term except the $\ln(|\Gcal|)$ and $\ln(|\Fcal|)$ which present the  complexity of $\Fcal$ and $\Gcal$.}
\begin{align*}
\EE\left[ \min_{t\in[0,\dots, T-1]} V^{\pi_t} - V^{\pi^e} \right] = \widetilde{O}\left( \frac{ H^3  \sqrt{d d_s + d^2}}{\sqrt{T}} + H \sqrt{ \frac{\ln( |\Fcal|)}{N}}    \right).
\end{align*}
\end{corollary}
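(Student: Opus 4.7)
The plan is to specialize the general bound of Theorem~\ref{thm:main_unified} to the KNR setting by explicitly controlling the information gain $\Ical_T$ defined in~(\ref{eq:info_gain}). From Example~\ref{exp:knr}, the calibrated uncertainty measure is $\sigma_t(s,a) = (\beta_t/\sigma)\,\|\phi(s,a)\|_{\Sigma_t^{-1}}$ with $\Sigma_t = \lambda I + \sum_{k<t}\sum_{h} \phi(s_h^k,a_h^k)\phi(s_h^k,a_h^k)^\top$, so I will split the work into (i) bounding $\beta_t^2/\sigma^2$ uniformly in $t$ and (ii) bounding the summed squared feature norms through an elliptical-potential argument.

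For step (i), I would substitute the definition of $\beta_t$ from Example~\ref{exp:knr} and use the standard determinant-trace inequality $\ln\det(\Sigma_t/\lambda) \le d\,\ln(1 + TH/(\lambda d))$ (since $\|\phi\|_2\le 1$ and at most $TH$ rank-one updates have been applied). This gives $\beta_t^2/\sigma^2 = \widetilde{O}(d_s + d + \|W^\star\|_F^2\,\lambda/\sigma^2)$, and in particular the leading-order dependence is $\widetilde{O}(d_s + d)$ after choosing $\lambda = \Theta(1)$.

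For step (ii), I would bound $\sum_{t=0}^{T-1}\sum_{h=0}^{H-1}\min\{\|\phi(s_h^t,a_h^t)\|_{\Sigma_t^{-1}}^2,\,1\}$. The subtlety is that $\Sigma_t$ is held fixed throughout a trajectory rather than updated every step, so the textbook elliptical-potential lemma does not apply verbatim. I would introduce the auxiliary within-trajectory matrices $\widetilde\Sigma_{t,h} := \Sigma_t + \sum_{h'<h}\phi(s_{h'}^t,a_{h'}^t)\phi(s_{h'}^t,a_{h'}^t)^\top$, observe that $\widetilde\Sigma_{t,h}\preceq \Sigma_t + H\cdot I \preceq (1+H/\lambda)\Sigma_t$, and therefore $\|\phi\|_{\Sigma_t^{-1}}^2 \le (1+H/\lambda)\,\|\phi\|_{\widetilde\Sigma_{t,h}^{-1}}^2$. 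Applying the standard lemma to the per-step updates $\widetilde\Sigma_{t,h}$ yields $\sum_{t,h}\min\{\|\phi\|_{\widetilde\Sigma_{t,h}^{-1}}^2,1\} = O(d\ln(1+TH/(\lambda d)))$, so the summed quantity is $\widetilde{O}(Hd)$. Combining (i) and (ii) gives $\Ical_T \le (\max_t \beta_t^2/\sigma^2)\cdot\widetilde{O}(Hd) = \widetilde{O}(Hd + Hdd_s + Hd^2)$, which is the claimed information-gain bound.

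Finally, I plug this into Theorem~\ref{thm:main_unified}: $H^{2.5}\sqrt{\Ical_T/T} = \widetilde{O}(H^3\sqrt{dd_s + d^2}/\sqrt{T})$, while the second term $H\sqrt{\ln(TH|\Fcal|)/N}$ is already in the desired form. I expect the main obstacle to be the fixed-within-trajectory covariance: one must justify the $(1+H/\lambda)$ conversion factor cleanly (the alternative, applying $\det(\Sigma_{t+1})/\det(\Sigma_t)\ge 1+\sum_h\|\phi\|_{\Sigma_t^{-1}}^2$ and $\log(1+x)\ge x/2$, requires an additional capping step because $\sum_h\|\phi\|_{\Sigma_t^{-1}}^2$ need not be bounded by $1$). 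Beyond that, each estimate is a direct consequence of standard self-normalized/elliptical-potential bookkeeping combined with the main theorem's statement.
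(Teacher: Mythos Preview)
Your proposal is correct and follows the same high-level decomposition as the paper: bound $\beta_t^2/\sigma^2$ uniformly in $t$, bound the elliptical-potential sum, multiply, and plug into Theorem~\ref{thm:main_unified}. The one place you diverge is precisely the ``obstacle'' you flagged---how to handle $\Sigma_t$ being frozen over an $H$-step trajectory. The paper does not introduce per-step auxiliary matrices; instead it uses the elementary inequality $\sum_{h=0}^{H-1}\min\{x_h,1\}\le H\min\{\sum_{h}x_h,1\}$ to pass to a trajectory-level sum and then invokes the batched elliptical-potential lemma (Lemma~B.6 of \cite{kakade2020information}) directly on $\sum_t \min\{\sum_h \|\phi(s_h^t,a_h^t)\|_{\Sigma_t^{-1}}^2,1\}\le 2\ln\det(\Sigma_T/\lambda I)$. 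Your $(1+H/\lambda)$ conversion via $\widetilde\Sigma_{t,h}$ is equally valid and yields the same $\widetilde O(Hd)$ bound; the paper's route is just a bit shorter since it avoids the auxiliary sequence altogether. One small point you glossed over: to pass from $\min\{\sigma_t^2(s,a),1\}$ to $(\beta_t^2/\sigma^2)\min\{\|\phi(s,a)\|_{\Sigma_t^{-1}}^2,1\}$ you need $\beta_t^2/\sigma^2\ge 1$; the paper arranges this by choosing $\lambda=\sigma^2/\|W^\star\|_2^2$ rather than a generic $\lambda=\Theta(1)$.
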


\begin{corollary}[General $\Gcal$ with bounded Eluder dimension (Example~\ref{exp:general_G})]~\label{cor:generalG}~For general $\Gcal$, assume that $\Gcal$ has Eluder-dimension $d_{E}(\epsilon)$ (Definition 3 in \cite{osband2014model}). Denote $d_{E} = d_{E}(1/ TH)$. The information gain is upper bounded as $\Ical_T = {O}\left( H d_{E} + d_{E}\ln(T^3H  |\Gcal| )\ln(TH) \right)$ (see \pref{prop:eluder_bound_ig}). 
Thus, $$\EE\left[ \min_{t\in[0,\dots, T-1]} V^{\pi_t} - V^{\pi^e} \right] = \widetilde{O}\left( \frac{H^3 \sqrt{d_E \ln(TH|\Gcal| )}}{ \sqrt{T} } +  H \sqrt{\frac{ \ln( |\Fcal|)} { N}}  \right).$$
\end{corollary}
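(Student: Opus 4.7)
The plan is to first verify that the model-learning scheme described in Example~\ref{exp:general_G} satisfies the calibrated-model Assumption~\ref{assum:model_calibrate} (this is Proposition~\ref{prop:uncertainty_eluder}), then upper bound the information gain $\Ical_T$ in terms of the Eluder dimension $d_E = d_E(1/TH)$ and $\ln|\Gcal|$, and finally substitute that bound into Theorem~\ref{thm:main_unified}. With a calibration constant of the form $\beta \asymp \sigma G \sqrt{\ln(|\Gcal|/\delta)}$ (finite-class least-squares concentration via a union bound over $\Gcal$), standard Gaussian total-variation bounds translate the version-space width $\sigma_t(s,a)=\tfrac{1}{\sigma}\sup_{g_1,g_2\in\Gcal_t}\|g_1(s,a)-g_2(s,a)\|_2$ into an $L_1$ bound on $\widehat{P}_t(\cdot|s,a)-P^\star(\cdot|s,a)$, which closes the realizability-to-calibration argument.

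For the core step, I would bound $\Ical_T$ using the Eluder-dimension potential-function argument (following Russo--Van Roy, Osband--Van Roy). The key lemma says that for any data-dependent sequence of predictors $\widehat{g}_1,\dots,\widehat{g}_N$ whose version spaces $\Gcal_n$ satisfy $\sum_{i<n}\|g-\widehat{g}_i\|_2^2 \leq z_n$ for all $g\in\Gcal_n$, the cumulative squared widths obey
\begin{align*}
\sum_{n=1}^{N}\min\{1,\, w_n(x_n)^2\} \;\leq\; O\!\big(d_E(1/N)\cdot z_N \cdot \ln N \,+\, d_E(1/N)\big).
\end{align*}
Instantiating this with $N=TH$, $z_N = 2\sigma^2 G^2 \ln(2 T^2 |\Gcal|/\delta)$, and dividing by $\sigma^2$ to convert widths on $g$ to widths on $\sigma_t$, yields $\Ical_T = O(d_E \ln(T^3 H|\Gcal|)\ln(TH))$ up to the minor correction noted below.

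The main obstacle is handling the episodic structure: within one episode the model $\widehat{P}_t$ (and hence $\sigma_t$) is held fixed across all $H$ steps rather than refreshed after every sample. This is what produces the additive $H\,d_E$ term. I would address it by a ``rare-switching'' / lazy-update variant of the Eluder argument: partition the $TH$ points into episodes and, within each episode, use that the widths can increase by at most a constant factor if no update occurs; charging the in-episode widths to the between-episode potential increments costs an extra $d_E$ per episode in the low-order term, which sums to $H\,d_E$ once we account for the $H$ intra-episode repetitions. (Alternatively, one can first apply the per-sample Eluder bound pretending the model were updated after each step, then add a per-episode ``stale-model'' correction of order $d_E$.) The $\ln(T^3 H|\Gcal|)$ factor absorbs the failure-probability choice $\delta=1/T^2$ together with the $\ln(TH)$ from the potential telescoping.

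Finally, I would plug $\Ical_T = O(Hd_E + d_E\ln(T^3 H|\Gcal|)\ln(TH)) = \widetilde{O}(d_E\ln(TH|\Gcal|))$ into the regret bound of Theorem~\ref{thm:main_unified}, whose leading term is $H^{2.5}\sqrt{\Ical_T/T}$. This gives
\begin{align*}
\EE\!\left[\min_{t\in[0,\dots,T-1]} V^{\pi_t}-V^{\pi^e}\right] \;=\; \widetilde{O}\!\left(\frac{H^3\sqrt{d_E\ln(TH|\Gcal|)}}{\sqrt{T}} + H\sqrt{\tfrac{\ln|\Fcal|}{N}}\right),
\end{align*}
where an extra factor of $\sqrt{H}$ relative to $H^{2.5}$ appears because converting the $\min\{\sigma_t^2,1\}$ summation back to the $\min\{\sigma_t,2\}$ quantities used in the bonus costs a Cauchy--Schwarz factor of $\sqrt{H}$ per episode. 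No step beyond the Eluder argument above requires more than routine bookkeeping, so the Eluder/episodic interaction is the only delicate piece.
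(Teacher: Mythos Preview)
Your overall strategy---verify calibration via Proposition~\ref{prop:uncertainty_eluder}, bound $\Ical_T$ using the Eluder-dimension machinery of Russo--Van Roy and Osband--Van Roy, then substitute into Theorem~\ref{thm:main_unified}---is exactly the paper's approach. Two details are off, however.

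First, your accounting for the additive $Hd_E$ term is muddled: you write that ``charging the in-episode widths \dots\ costs an extra $d_E$ per episode \dots\ which sums to $Hd_E$,'' but there are $T$ episodes, not $H$, so this bookkeeping does not produce $Hd_E$ as written. The paper sidesteps any ad hoc stale-model correction by quoting Lemma~1 of \cite{osband2014model} directly, which already has the episodic $+H$ built into the count
\[
\sum_{i=0}^{t-1}\sum_{h=0}^{H-1}\one\{w_{i;h}^2>\epsilon\}\;\leq\;\Bigl(\tfrac{4\beta_t}{\epsilon}+H\Bigr)d_E(\sqrt{\epsilon}).
\]
It then reorders all widths in decreasing order and observes that the $\tau$-th largest satisfies $w_\tau^2\leq 4\beta_T d_E/(\tau-Hd_E)$; the first $Hd_E$ widths are bounded trivially by $G^2$, which is where the $Hd_E G^2$ term comes from, and the tail telescopes to $4\beta_T d_E\ln(TH)$.

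Second, and more substantively, your explanation of the $H^{2.5}\to H^3$ jump is incorrect. The Cauchy--Schwarz step you describe---converting $\min\{\sigma_t,2\}$ sums to $\min\{\sigma_t^2,1\}$ sums at a cost of $\sqrt{H}$ per episode---is already inside the proof of Theorem~\ref{thm:main_unified} and is precisely what produces the $H^{2.5}$ stated there. Applying it again would yield $H^{3.5}$, not $H^3$. The actual source of the extra $\sqrt{H}$ in the corollary is the $Hd_E$ term sitting inside $\Ical_T$: substituting gives $H^{2.5}\sqrt{Hd_E/T}=H^3\sqrt{d_E/T}$, and the corollary's stated $\widetilde{O}$ bound simply dominates both pieces of $\Ical_T$ by this larger expression.
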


Thus as long as $\Gcal$ has bounded complexity in terms of the Eluder dimension \cite{russo2014learning,osband2014model}, \algname~with the maximum disagreement-based optimism leads to near-optimal guarantees. 

\subsection{Exploration in ILFO and the Exponential Gap between IL and ILFO}\label{ssec:theory_explore}
To show the benefit of strategic exploration over random exploration in ILFO, we present a {\em novel} reduction of the ILFO problem to a bandit optimization problem, for which strategic exploration is known to be {\em necessary}~\citep{BubeckC12} for optimal bounds while random exploration is suboptimal; this reduction indicates that benefit of strategic exploration for solving ILFO efficiently. This reduction also demonstrate that there exists an exponential gap in terms of sample complexity between ILFO and classic IL that has access to expert actions.  We leave the details of the reduction framework in \pref{app:low_bound}. The reduction allows us to derive the following lower bound for any ILFO algorithm.
\begin{theorem}
There exists an MDP with number of actions $A \geq 2$, such that even with infinitely many expert data, any ILFO algorithm must occur expected commutative regret $\Omega(\sqrt{AT})$.
    \label{thm:ILFO_lower_bound}
\end{theorem}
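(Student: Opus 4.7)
The plan is to carry out the reduction sketched in the paragraph preceding the theorem: given any ILFO algorithm $\mathcal{A}$ with regret $o(\sqrt{AT})$ on every MDP in a certain family, build a multi-armed bandit algorithm that violates the classical $\Omega(\sqrt{AT})$ stochastic-MAB lower bound \citep{BubeckC12}. First I would fix the MDP family: horizon $H=2$, fixed initial state $s_0$, action set $\{a_1,\dots,a_A\}$, and transition $P(\cdot\mid s_0,a_i)=\mathcal{N}(\mu_i,1)$ with $s_1\in\mathbb{R}$ absorbing. The ground truth cost is taken to be $c(s_1)=-s_1$ (clipped/shifted to $[0,1]$ if one insists on the paper's normalization; this does not affect the argument). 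The expert plays the best arm $\pi^e(s_0)=a_{i^\star}$, where $i^\star=\arg\max_i\mu_i$, so expert trajectories are i.i.d.\ samples of $\mathcal{N}(\mu_{i^\star},1)$.

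Next I would argue that the ``infinitely many expert trajectories'' assumption is equivalent, information-theoretically, to revealing the scalar $\mu_{i^\star}$ to the learner but \emph{not} the identity $i^\star$. This is the crucial step where the ILFO setting differs from classic IL: an ILFO learner never sees which action the expert picked, so knowing $\mu_{i^\star}$ does not reveal $i^\star$. I would then exhibit a subfamily of MDPs indexed by $i\in\{1,\dots,A\}$ in which all arms have mean $1/2$ except arm $i$, whose mean is $1/2+\Delta$ for a gap $\Delta\asymp\sqrt{A/T}$; the expert-revealed scalar $\mu_{i^\star}=1/2+\Delta$ is then the \emph{same} for every instance in the subfamily, so expert data carries zero information for distinguishing which instance we are in.

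The core lower bound step is then a standard Le Cam / KL-divergence argument (exactly as in Theorem 3.5 of \citet{BubeckC12}, or Auer et al.'s construction for stochastic bandits). Given any ILFO algorithm $\mathcal{A}$, its pull counts $N_i(T)$ on arms in this MDP family satisfy the usual $\sum_i N_i(T)=T$ and the distribution of $(N_1(T),\dots,N_A(T))$ under instance $i$ versus the uniform instance differs in KL by at most $\tfrac{1}{2}\mathbb{E}_i[N_i(T)]\Delta^2$. Averaging over $i$ and applying Pinsker's inequality in the standard way yields an expected regret of $\Omega(T\Delta)=\Omega(\sqrt{AT})$ on at least one instance. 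I would package this as: $\mathbb{E}[\mathrm{Reg}(T)]\ge \tfrac{1}{A}\sum_{i=1}^A \Delta\cdot\mathbb{E}_i[T-N_i(T)]\ge c\sqrt{AT}$.

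The main obstacle, and the only part that is not boilerplate, is formalizing the claim that infinite expert data is genuinely equivalent to knowing the scalar $\mu_{i^\star}$ but not $i^\star$. I would prove this by a data-processing / sufficient-statistic argument: for the Gaussian observation model, the empirical mean of expert next-states is a sufficient statistic for $\mu_{i^\star}$, and as the expert sample size $N\to\infty$ it converges almost surely to $\mu_{i^\star}$; since the subfamily above is constructed so that $\mu_{i^\star}$ is identical across all members, conditioning on the expert data leaves the posterior over $i^\star$ unchanged from the prior. Everything else (the Le Cam bound, converting the MDP interaction to arm pulls, and concluding $\Omega(\sqrt{AT})$) is routine once this equivalence is in place.
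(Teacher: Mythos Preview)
Your proposal is correct and follows essentially the same route as the paper: construct a family of two-step Gaussian MDPs indexed by the location of the single $\Delta$-boosted arm, observe that the expert data reveals only the common value $\mu_{i^\star}$ and hence carries no information for identifying $i^\star$, and then run the standard MAB lower-bound machinery (null/reference instance, KL decomposition into per-arm pull counts, choose $\Delta\asymp\sqrt{A/T}$). The paper's execution differs only in cosmetic details---it centers the means at $0$ rather than $1/2$, compares each instance to an all-zeros reference instance on which the algorithm is still fed $\Delta$ as side information, and bounds $\mathbb{E}_i[N_i(T)]-\mathbb{E}_0[N_i(T)]$ via Pinsker and then sums---but the skeleton and the key observation (that the hard family can be chosen so that $\mu_{i^\star}$ is constant across instances) are identical to yours.
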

 Specifically we rely on the following reduction where solving ILFO, with even infinite expert data, is at least as hard as solving an MAB problem with the known optimal arm's mean reward which itself occurs the same worst case $\sqrt{AT}$ cumulative regret bound as the one in the classic MAB setting. For MAB, it is known that random exploration such as $\epsilon$-greedy will occur suboptimal regret $O(T^{2/3})$. Thus to achieve optimal $\sqrt{T}$ rate, one needs to leverage strategic exploration (e.g., optimism).

Methods such as BC for IL have sample complexity that scales as $\text{poly}\ln(A)$, e.g., see \cite[Theorem 14.3, Chapter 14]{agarwal2019reinforcement} which shows that for tabular MDP, BC learns a policy whose performance is $O(H^2 \sqrt{ S\ln(A)  / N})$ away from the expert's performance (here $S$ is the number of states in the tabular MDP). Similarly, in interactive IL setting, DAgger~\cite{DAgger} can also achieve poly $\ln(A)$ dependence in sample complexity.  The \emph{exponential gap} in the sample complexity dependence on $A$ between IL and ILFO formalizes the  additional difficulty encountered by learning algorithms in ILFO. 
\section{Practical Instantiation of~\algname}\label{sec:prac_inst}
We present a brief practical instantiation~\algname's components with details in Appendix~\pref{sec:implementation_details}.\\
{\bf Dynamics model learning:}We employ Gaussian Dynamics Models parameterized by an MLP~\citep{RajeswaranGameMBRL,MOReL}, i.e., $\widehat{P}(s,a):=\mathcal{N}(h_{\theta}(s,a), \sigma^2 I)$, where, $h_\theta(s,a) = s + \sigma_{\Delta_s}\cdot \text{MLP}_\theta(s_c,a_c)$, where, $\theta$ are MLP's trainable parameters, $s_c = (s-\mu_s)/\sigma_s$, $a_c = (a-\mu_a)/\sigma_a$ with $\mu_s,\mu_a$ (and $\sigma_s,\sigma_a$) being the mean of states, actions (and standard deviation of states and actions) in the replay buffer $\mathcal{D}$. Next, for $(s,a,s')\in\mathcal{D}$, $\Delta_s=s'-s$ and $\sigma_{\Delta_s}$ is the standard deviation of the state differences $\Delta_s\in\Dcal$. We use SGD with momentum~\citep{SutskeverMomentum} for training the parameters $\theta$ of the MLP.\\
{\bf Discriminator parameterization:}We utilize MMD as our choice of IPM and define the discriminator as $f(s) = w^\top \psi(s)$, where, $\psi(s)$ are Random Fourier Features~\citep{rahimi2008random}.\\
{\bf Bonus parameterization:}We utilize the discrepancy between predictions of a pair of dynamics models $h_{\theta_1}(s,a)$ and $h_{\theta_2}(s,a)$ for designing the bonus. Empirically, we found that using more than two models in the ensemble offered little to no improvements.
Denote the disagreement at any $(s,a)$ as $\delta(s,a) =  \ \left\| h_{\theta_1}(s,a)-h_{\theta_2}(s,a) \right\|_2$, and $\delta_{\mathcal{D}} = \max_{(s,a)\sim\mathcal{D}} \delta(s,a)$ is the max discrepancy of a replay buffer $\Dcal$. We set bonus as $b(s,a) =\lambda \cdot \min(\delta(s,a)/\delta_\mathcal{D}$, where $\lambda>0$ is a tunable parameter.\\
{\bf PG oracle:}We use TRPO~\citep{SchulmanTRPO} to perform incremental policy optimization inside the learned model. 
\section{Experiments}
\begin{figure*}[t]
    \centering
    \begin{subfigure}
        \centering
        \includegraphics[width=\textwidth]{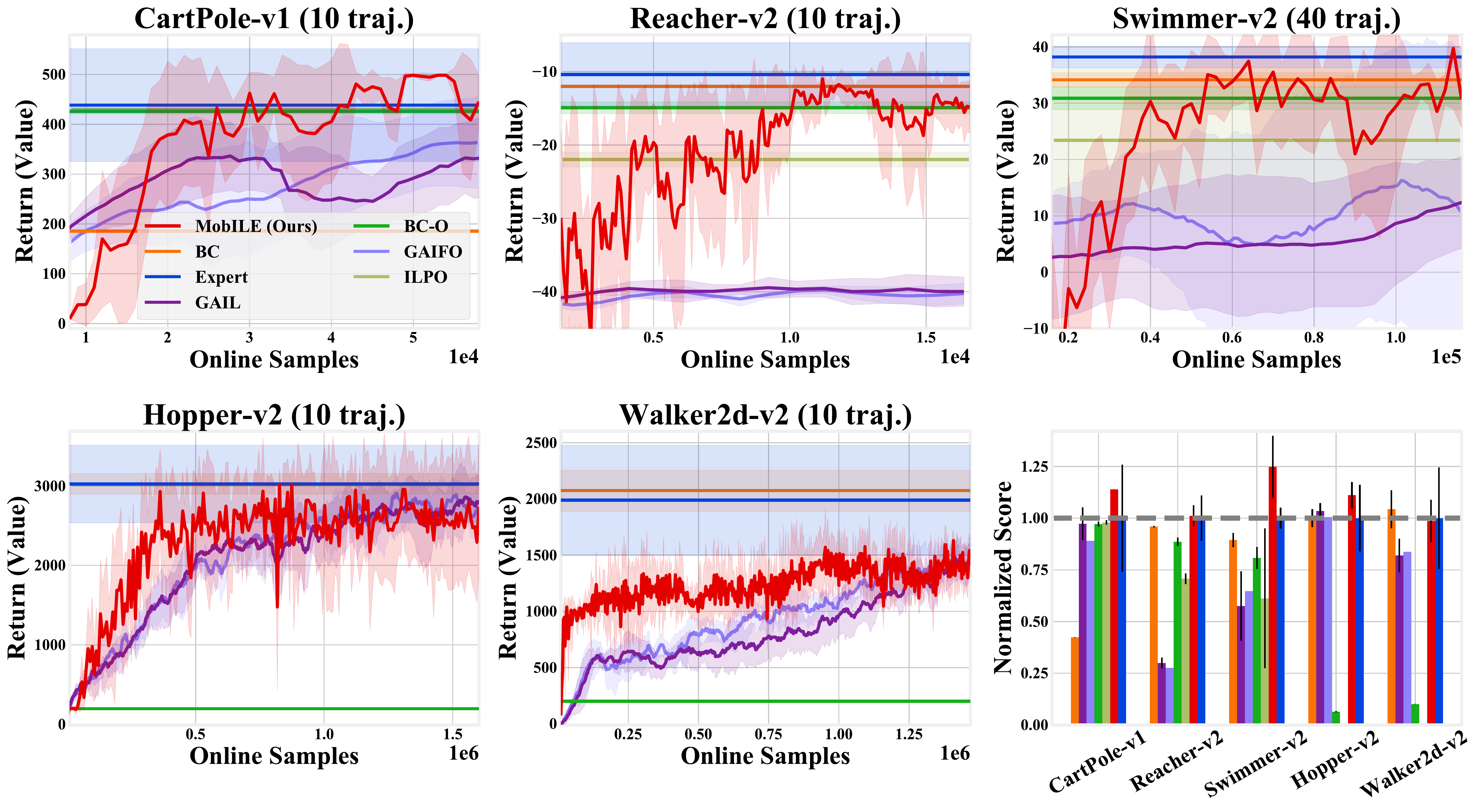}
    \end{subfigure}
    \vspace{-2mm}
    \caption{
    Comparing \algname~(red) against BC (orange), BC-O (green), GAIL (purple), GAIFO (periwinkle), ILPO (green olive). The learning curves are obtained by averaging all algorithms over $5$ seeds. \algname~outperforms BC-O, GAIL and matches BC's behavior despite \algname~not having access to expert actions. The bar plot (bottom-right) presents the best performing policy outputted by each algorithm averaged across $5$ seeds for each algorithm. \algname~clearly outperforms BC-O, GAIFO, ILPO while matching the behavior of IL algorithms like BC/GAIL which use expert actions.}
    \label{fig:performance}
    \vspace{-2mm}
\end{figure*}
\label{sec:exp}
This section seeks to answer the following questions: (1) How does \algname{} compare against other benchmark algorithms?
    (2) How does optimism impact sample efficiency/final performance?
    (3) How does increasing the number of expert samples impact the quality of policy outputted by \algname?

We consider tasks from Open AI Gym~\citep{brockman2016openai} simulated with Mujoco~\citep{todorov2012mujoco}: \cartpole, \reacher, \swimmer, \hopper~and \walker.
We train an expert for each task using TRPO \citep{SchulmanTRPO} until we obtain an expert policy of average value $460, -10, 38, 3000, 2000$ respectively. We setup \swimmer, \hopper,\walker{} similar to prior model-based RL works~\citep{KurutachCDTA18,nagabandi2018neural,luo2018algorithmic,RajeswaranGameMBRL,MOReL}. 

We compare \algname~against the following algorithms: Behavior Cloning (BC), GAIL~\citep{HoEr16GAIL},  BC-O~\citep{TorabiBCO}, ILPO~\citep{EdwardsILPO} (for environments with discrete actions), GAIFO~\citep{torabi2018gaifo}. Furthermore, recall that BC and GAIL utilize both expert states and actions, information that is not available for ILFO. This makes both BC and GAIL idealistic targets for comparing ILFO methods like \algname~against. As reported by Torabi et al.~\citep{TorabiBCO}, BC outperforms BC-O in all benchmark results. Moreover, our results indicate \algname~outperforms GAIL and GAIFO in terms of sample efficiency. With reasonable amount of parameter tuning, BC serves as a very strong baseline and nearly solves \emph{deterministic} Mujoco environments. We use code released by the authors for BC-O and ILPO. For GAIL we use an open source implementation~\citep{stable-baselines}, and for GAIFO, we modify the GAIL implementation as described by the authors. We present our results through (a) learning curves obtained by averaging the progress of the algorithm across $5$ seeds, and, (b) bar plot showing expert normalized scores averaged across $5$ seeds using the best performing policy obtained with each seed. Normalized score refers to ratio of policy's score over the expert score (so that expert has normalized score of 1). For \reacher, since the expert policy has a negative score, we add an constant before normalization. More details can be found in Appendix~\ref{sec:implementation_details}.
\begin{figure*}[b]
    \centering
    \begin{subfigure}
        \centering
        \includegraphics[width=\textwidth]{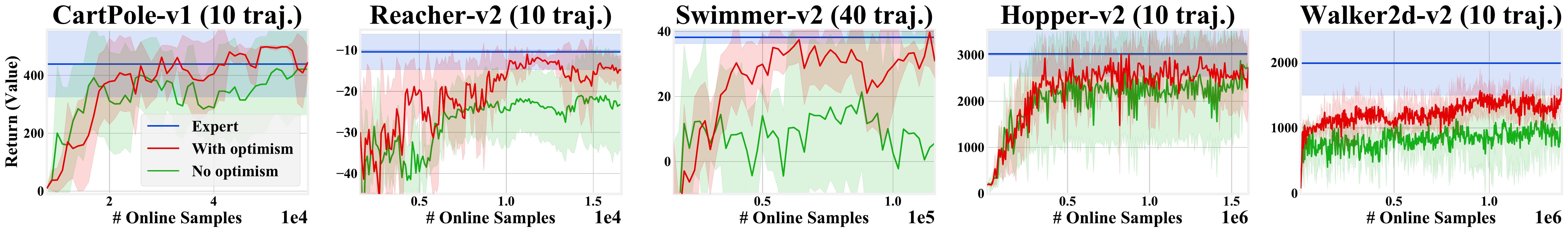}
    \end{subfigure}
    \vspace{-2mm}
    \caption{Learning curves obtained by running \algname~with (red) and without (green) optimism. Without optimism, the algorithm learns slowly or does not match the expert, whereas, with optimism, \algname~shows improved behavior by automatically trading off exploration and imitation.}\label{fig:bonus}
    \vspace{-4mm}
\end{figure*}
\subsection{Benchmarking~\algname~on MuJoCo suite}\label{ssec:expt_benchmark} Figure~\ref{fig:performance} compares \algname~with BC, BC-O, GAIL, GAIFO and ILPO. \algname~consistently matches or exceeds BC/GAIL's performance \emph{despite BC/GAIL having access to actions taken by the expert} and \algname~functioning {\em without} expert action information. \algname, also, consistently improves upon the behavior of ILFO methods such as BC-O, ILPO, and GAIFO. We see that BC does remarkably well in these benchmarks owing to determinism in the transition dynamics; in the appendix, we consider a variant of the cartpole environment with stochastic dynamics. Our results suggest that BC struggles with stochasticity in the dynamics and fails to solve this task, while \algname~continues to reliably solve this task. Also, note that we utilize $10$ expert trajectories for all environments except \swimmer; this is because all algorithms (including \algname) present results with high variance. We include a learning curve for \swimmer~with $10$ expert trajectories in the appendix. The bar plot in Figure~\ref{fig:performance} shows that within the sample budget shown in the learning curves, \algname\ (being a model-based algorithm), presents superior performance in terms of matching expert, thus indicating it is more sample efficient than GAIFO, GAIL (both being model-free methods), ILPO and BC-O.

\subsection{Importance of the optimistic MDP construction}\label{ssec:expt_ablation}
Figure~\ref{fig:bonus} presents results obtained by running \algname~with and without optimism. In the absence of optimism, the algorithm either tends to be sample inefficient in achieving expert performance or completely fails to solve the problem. Note that without optimism, the algorithm isn't explicitly incentivized to explore -- only implicitly exploring due to noise induced by sampling actions. This, however, is not sufficient to solve the problem efficiently. In contrast, \algname~with optimism presents improved behavior and in most cases, solves the environments with fewer online interactions. 
\subsection{Varying Number of Expert Samples}\label{ssec:expert_ablation}
\begin{wraptable}[7]{r}{0.5\textwidth}
    \vspace{-12mm}
    \centering
    \caption{Expert normalized score and standard deviation of policy outputted by \algname~when varying number of expert trajectories as $E_1$ and $E_2$ (specific values represented in parentheses)}
    \resizebox{0.5\textwidth}{!}{
    \begin{tabular}{c|ccc}
    \toprule
        Environment &  $E_1$ & $E_2$ & Expert \\
    \midrule
        \cartpole      & $1.07\pm0.15 \ (5)$ & $1.14\pm0\ (10)$ & $1\pm0.25$\\
        \reacher      & $1.01\pm0.05\ (10)$ & $0.997\pm0.055\ (20)$& $1\pm0.11$ \\
        \swimmer & $1.54\pm1.1\ (10)$& $1.25\pm0.15\ (40)$& $1\pm0.05$ \\
        \hopper         &$1.11\pm0.064\ (10)$ &$1.16\pm0.03\ (40)$ &$1\pm0.16$ \\
        \walker    & $0.975\pm0.12\ (10)$ & $0.94\pm0.038\ (50)$ & $1\pm0.25$ \\
    \bottomrule
    \end{tabular}
    }
    \label{tab:exp_ablate}
\end{wraptable}
Table~\ref{tab:exp_ablate} shows the impact of increasing the number of samples drawn from the expert policy for solving the ILFO problem. The main takeaway is that increasing the number of expert samples aids \algname~in reliably solving the problem (i.e. with lesser variance).

\section{Conclusions}\label{sec:discuss}
This paper introduces~\algname, a model-based ILFO approach that is applicable to MDPs with stochastic dynamics and continuous action spaces. \algname~trades-off exploration and imitation, and this perspective is shown to be important for solving the ILFO efficiently both in theory and in practice. Future works include exploring other means for learning dynamics models, performing strategic exploration and extending \algname~to problems with rich observation spaces (e.g. videos).

By not even needing the actions to imitate, ILFO algorithms allow for learning algorithms to capitalize on large amounts of video data available online. Moreover, in ILFO, the learner is successful if it learns to imitate the expert. Any expert policy designed by bad actors can naturally lead to obtaining new policies that continue to imitate and be a negative influence to the society. With this perspective in mind, any expert policy must be thoroughly vetted in order to ensure ILFO algorithms including \algname~are employed in ways that benefit the society.


\section*{Acknowledgements}
Rahul Kidambi acknowledges funding from NSF TRIPODS Award $\text{CCF}-1740822$ at Cornell University. All content represents the opinion of the authors, which is not necessarily shared or endorsed by their respective employers and/or sponsors.

\newpage
\bibliographystyle{abbrv}

\newpage
\tableofcontents
\appendix
\section{Analysis of \pref{alg:main_alg_bonus}}
\label{sec:app_proofs}
We start by presenting the proof for the unified main result in \pref{thm:main_unified}. We then discuss the bounds for special instances individually. 

The following lemma shows that under \pref{assum:model_calibrate}, with $b_t(s,a) = H \min\{\sigma_t(s,a),2\}$, we achieve \emph{optimism} at all iterations. 

\begin{lemma}[Optimism] Assume \pref{assum:model_calibrate} holds, and set $b_t(s,a) = H \min\left\{\sigma_t(s,a),2\right\}$.  For all state-wise cost function $f: \Scal\mapsto [0,1]$, denote the bonus enhance cost as $\widetilde{f}_t(s,a) := f(s) - b_t(s,a)$. For all policy $\pi$, we have the following optimism:
 \begin{align*}
 V^{\pi}_{\widehat{P}_t, \widetilde{f}_t} \leq V^{\pi}_{P, f}, \forall t. 
 \end{align*} 
 \label{lem:optimism}
\end{lemma}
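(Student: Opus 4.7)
The plan is to establish optimism by the standard simulation-lemma argument: the value difference between $\widehat{P}_t$ and $P^\star$ under the same policy $\pi$ and cost $f$ is controlled, step by step, by the total variation between the two kernels, and the calibrated-model assumption converts this into exactly the bonus we have chosen. Because $b_t$ is defined to cancel this worst-case mismatch, subtracting it from $f$ makes the simulated value an under-estimate of the true value.

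First I would rewrite both values in terms of per-step expectations under the rollouts in $\widehat{P}_t$. Since costs are state-only and in $[0,1]$, writing $V^{\pi}_{\widehat{P}_t,\widetilde{f}_t} = V^{\pi}_{\widehat{P}_t,f} - \sum_{h=0}^{H-1}\mathbb{E}_{(s_h,a_h)\sim \widehat{P}_t,\pi}\bigl[b_t(s_h,a_h)\bigr]$ reduces the claim to
\[
V^{\pi}_{\widehat{P}_t,f} - V^{\pi}_{P^\star,f} \;\leq\; \sum_{h=0}^{H-1}\mathbb{E}_{(s_h,a_h)\sim \widehat{P}_t,\pi}\bigl[b_t(s_h,a_h)\bigr].
\]
Second, I would invoke the simulation (telescoping) lemma, which expresses $V^{\pi}_{\widehat{P}_t,f} - V^{\pi}_{P^\star,f}$ as a sum over horizon $h$ of terms of the form $\mathbb{E}_{(s_h,a_h)\sim \widehat{P}_t,\pi}\bigl[\langle \widehat{P}_t(\cdot\mid s_h,a_h) - P^\star(\cdot\mid s_h,a_h),\,V^{\pi}_{h+1;P^\star,f}\rangle\bigr]$. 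Using Hölder's inequality and the trivial bound $\lVert V^{\pi}_{h+1;P^\star,f}\rVert_\infty \leq H$ (since $f\in[0,1]$ and the remaining horizon is at most $H$), each summand is at most $H\cdot \mathbb{E}_{(s_h,a_h)\sim\widehat{P}_t,\pi}\bigl[\lVert \widehat{P}_t(\cdot\mid s_h,a_h)-P^\star(\cdot\mid s_h,a_h)\rVert_1\bigr]$.

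Third, I would plug in Assumption~\ref{assum:model_calibrate}, giving $\lVert \widehat{P}_t(\cdot\mid s,a)-P^\star(\cdot\mid s,a)\rVert_1\leq \min\{\sigma_t(s,a),2\}$, so each step's error is bounded by $H\min\{\sigma_t(s_h,a_h),2\}$, which is exactly $b_t(s_h,a_h)$ by our choice of bonus. Summing over $h$ gives the required inequality. The only subtlety is ensuring the direction of the inequality is correct: the simulation lemma naturally yields an absolute value, but since we have chosen a bonus that dominates the worst-case one-sided difference, the one-sided bound $V^{\pi}_{\widehat{P}_t,f}-V^{\pi}_{P^\star,f}\leq \sum_h \mathbb{E}_{\widehat{P}_t}[b_t]$ is immediate from the two-sided one.

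The main (modest) obstacle is carefully aligning definitions: confirming that the value functions are defined with respect to expectations under $\widehat{P}_t$ on the left and $P^\star$ on the right, that the cost is state-only (so the bonus really does reduce to a deterministic per-step subtraction), and that the remaining-horizon value is bounded by $H$ rather than $H-h$ (which only loses a constant and keeps the bonus scaling $H\sigma_t$ tight). Everything else is a routine application of the simulation lemma combined with Hölder's inequality and the calibration assumption.
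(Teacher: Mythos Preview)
Your proof is correct. The paper proves the same result by direct backward induction on the time-indexed value functions: it shows $\widehat{Q}^{\pi}_h(s,a)\le Q^{\pi}_h(s,a)$ for every $h$, comparing the two Bellman backups and using $\|V^{\pi}_{h+1}\|_\infty\le H$ together with \pref{assum:model_calibrate} so that the subtracted bonus $-H\min\{\sigma_t(s,a),2\}$ cancels the worst-case one-step transition error. Your route---first splitting off the bonus term, then bounding $V^{\pi}_{\widehat{P}_t,f}-V^{\pi}_{P^\star,f}$ via the simulation lemma with the rollout taken under $\widehat{P}_t$ and the inner value function under $P^\star$---is equivalent in substance (the simulation lemma is precisely the telescoped induction), just organized differently. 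Your decomposition makes the role of the bonus more transparent and modular; the paper's induction is slightly more self-contained and never separates the bonus from the cost. Both arguments hinge on the same three ingredients: the bound $\|V^{\pi}_{h+1;P^\star,f}\|_\infty\le H$, H\"older's inequality for the one-step model mismatch, and the calibrated-model assumption.
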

\begin{proof} 
In the proof, we drop subscript $t$ for notation simplicity. We consider a fixed function $f$ and policy $\pi$. 
Also let us denote $\widehat{V}^{\pi}$ as the value function of $\pi$ under $(\widehat{P}, \widetilde{f})$, and $V^{\pi}$ as the value function under $(P, f)$.

Let us start from $h = H$, where we have $\widehat{V}^{\pi}_{H}(s) = V^{\pi}_H(s) = 0$. Assume inductive hypothesis holds at $h+1$, i.e., for any $s,a$, we have $\widehat{Q}^{\pi}_{h+1}(s,a) \leq Q^{\pi}_{h+1}(s,a)$. Now let us move to $h$. We have:
\begin{align*}
\widehat{Q}^{\pi}_h(s,a) - Q^{\pi}_h(s,a) & =  \widetilde{f}(s,a) + \EE_{s'\sim \widehat{P}(\cdot | s,a)} \widehat{V}^{\pi}_{h+1}(s') - {f}(s) - \EE_{s'\sim {P}(\cdot | s,a)} {V}^{\pi}_{h+1}(s') \\
& \leq -H \min\{\sigma(s,a),2\}  +  \EE_{s'\sim \widehat{P}(\cdot | s,a)} {V}^{\pi}_{h+1}(s')  - \EE_{s'\sim {P}(\cdot | s,a)} {V}^{\pi}_{h+1}(s') \\
& \leq - H \min\{\sigma(s,a),2\}  + H \left\| \widehat{P}(\cdot | s,a) - P(\cdot | s,a) \right\|_1 \\
& \leq - H \min\{ \sigma(s,a),2\}  + H \min\{\sigma(s,a),2\} = 0,
\end{align*}  where the first inequality uses the inductive hypothesis at time step $h+1$. 
Finally, note that $V^{\pi}_h(s) = \EE_{a\sim \pi(s)} Q^{\pi}_h(s,a)$, which leads to $\widehat{V}^{\pi}_h(s) \leq V^{\pi}_h(s)$. 
This concludes the induction step. 
\end{proof}
The next lemma concerns the statistical error from finite sample estimation of $\EE_{s \sim d^{\pi^e}} f(s)$.  
\begin{lemma}Fix $\delta \in (0,1)$. For all $t$, we have that with probability at least $1-\delta$, 
\begin{align*}
\left\lvert \EE_{s\sim d^{\pi^e}} f(s) -  \sum_{i=1}^N f(s^e_i) / N   \right\rvert \leq 2\sqrt{ \frac{ \ln\left( 2 t^2  |\Fcal | / \delta \right)  }{ N }}, \forall f \in{\Fcal}. 
\end{align*}
\label{lem:concentration}
\end{lemma}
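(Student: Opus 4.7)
The plan is a standard Hoeffding-plus-union-bound argument, whose only subtlety is carefully handling independence and choosing the right schedule of failure probabilities across iterations. First I would fix $t$ and a single $f \in \Fcal$. Since $f : \Scal \to [0,1]$ by the definition of $\Fcal$, and the expert samples $\{s^e_i\}_{i=1}^N$ are i.i.d.\ draws from $d^{\pi^e}$ that are independent of everything the algorithm has done up through iteration $t$ (the expert dataset $\Dcal_e$ is drawn once before the algorithm runs), Hoeffding's inequality applied to the bounded i.i.d.\ random variables $f(s^e_i)$ immediately yields
$$\Pr\!\left[\left|\EE_{s\sim d^{\pi^e}} f(s) - \tfrac{1}{N}\sum_{i=1}^N f(s^e_i)\right| > \eta\right] \leq 2\exp(-2N\eta^2).$$

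Next I would take a union bound over $f \in \Fcal$, which is finite by the discreteness assumption on the discriminator class. Taking a further union bound over iterations $t \in \{1,2,\dots\}$ with per-iteration failure budget $\delta/t^2$ (so that the total failure probability is bounded by $\delta \sum_{t \geq 1} t^{-2} = \pi^2\delta/6$, which after rescaling $\delta$ is at most $\delta$), I would allocate failure probability $\delta/(t^2 |\Fcal|)$ to each pair $(t,f)$. Solving $2\exp(-2N\eta_t^2) = \delta/(t^2|\Fcal|)$ gives the tight rate $\eta_t = \sqrt{\ln(2t^2|\Fcal|/\delta)/(2N)}$; the stated bound $2\sqrt{\ln(2t^2|\Fcal|/\delta)/N}$ is weaker than this by a factor of $2\sqrt{2}$, so it follows immediately (the looseness is simply absorbed into the constant $2$ in front).

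The main (and really the only) obstacle is the independence check: one must verify that the sample average $\tfrac{1}{N}\sum_i f(s^e_i)$ is a sum of i.i.d.\ bounded random variables even though $f$ itself could in principle be chosen based on $\Dcal_e$. This is why the uniform bound over the entire discriminator class $\Fcal$ (not a single data-dependent $f$) is stated: it gives a deterministic statement holding for every $f \in \Fcal$ simultaneously, so later in the analysis one is free to plug in the $\argmax$ discriminator from \pref{line:planning} without reintroducing any statistical dependence. With that observation, the remaining calculation is purely routine.
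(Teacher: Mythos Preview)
Your proposal is correct and follows essentially the same argument as the paper: apply Hoeffding's inequality using $f(s)\in[0,1]$, take a union bound over $f\in\Fcal$, and allocate a $\Theta(\delta/t^2)$ failure budget across iterations so that the total failure probability sums to at most $\delta$. Your additional remark about why the uniform bound over all of $\Fcal$ is needed (so that a data-dependent discriminator can later be plugged in) is a useful clarification, but the core mechanics are identical to the paper's proof.
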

\begin{proof}
For any $t$, we set the failure probability to be $6 \delta /  (t^2 \pi^2) $ at iteration $t$ where we abuse notation and point out that $\pi = 3.14159...$. Thus the total failure probability for all $t\in\mathbb{N}$ is at most $\delta$. We then apply classic Hoeffding inequality to bound $\EE_{s\sim d^{\pi^e}} f(s) -  \sum_{i=1}^N f(s^e_i) / N$ with the fact that $f(s) \in [0,1]$ for all $s$. We conclude the proof by taking a union bound over all $f\in\Fcal$. 
\end{proof}

Note that here we have assumed $s_i^e \sim d^{\pi^e}$ is i.i.d sampled from $d^{\pi^e}$. This can easily be achieved by randomly sampling a state from each expert trajectory.  Note that we can easily deal with i.i.d trajectories, i.e., if our expert data contains $N$ many i.i.d trajectories $\{\tau^1,\dots, \tau^{N}\}$, we can apply concentration on the trajectory level, and get:
\begin{align*}
\left\lvert \mathbb{E}_{\tau\sim \pi^e} \left[ \sum_{h=0}^{H-1} f(s_h) \right]   - \frac{1}{N} \sum_{i=1}^{N} \sum_{h=0}^{H-1} f(s^i_{h}) \right\rvert \leq O\left( H \sqrt{ \frac{\ln(t^2 |\Fcal| / \delta)}{N}  }\right),
\end{align*} where $\tau\sim \pi$ denotes that a trajectory $\tau$ being sampled based on $\pi$, $s_h^i$ denotes the state at time step $h$ on the i-th expert trajectory. Also note that we have $\mathbb{E}_{s\sim d^{\pi}} f(s) = \frac{1}{H} \mathbb{E}_{\tau \sim \pi} \left[ \sum_{h=0}^{H-1} f(s_{h})  \right] $ for any $\pi,f$. Together this immediately implies that:
\begin{align*}
\left\lvert \mathbb{E}_{s\sim d^{\pi^e}} f(s) -  \frac{1}{NH} \sum_{i=1}^{N} \sum_{h=0}^{H-1} f(s^i_{h}) \right\rvert \leq O\left(\sqrt{ \frac{\ln(t^2 |\Fcal| / \delta)}{N}  }\right),
\end{align*} which matches to the bound in \pref{lem:concentration}.

Now we conclude the proof for \pref{thm:main_unified}.
\begin{proof}[Proof of \pref{thm:main_unified}]
Assume that \pref{assum:model_calibrate} and the event in \pref{lem:concentration} hold. Denote the joint of these two events as $\Ecal$. Note that the probability of $\overline{\Ecal}$ is at most $2\delta$.  For notation simplicity, denote $\epsilon_{stats} = 2\sqrt{ \frac{ \ln\left( 2 T^2  |\Fcal | / \delta \right)  }{ N }}$. 

In each model-based planning phase, recall that we perform model-based optimization on the following objective:
\begin{align*}
\pi_{t} = \argmin_{\pi\in \Pi} \max_{{f} \in {F}} \left[ \mathbb{E}_{s,a \sim d^{\pi}_{\widehat{P}_t} } \left[f(s) - b_t(s,a)\right]  - \sum_{i=1}^N f({s}^e_i) / N  \right].
\end{align*}
Note that for any $\pi$, using the inequality in \pref{lem:concentration}, we have:
\begin{align*}
&\max_{{f} \in {\Fcal}_t} \left[ \mathbb{E}_{s,a \sim d^{\pi}_{\widehat{P}_t} } ({f}(s) - b_t(s,a))  - \sum_{i=1}^N {f}({s}^e_i) / N  \right]  \\
& = \max_{{f} \in {\Fcal}} \left[ \mathbb{E}_{s,a \sim d^{\pi}_{\widehat{P}_t} } ({f}(s) - b_t(s,a))  -   \EE_{s\sim d^{\pi^e}} f(s) +   \EE_{s\sim d^{\pi^e}} {f}(s)-  \sum_{i=1}^N {f}({s}^e_i) / N  \right] \\
& \leq  \max_{{f} \in {\Fcal}}\left[ \mathbb{E}_{s,a \sim d^{\pi}_{\widehat{P}_t} } ({f}(s) - b_t(s,a))  -   \EE_{s\sim d^{\pi^e}} {f}(s)   \right] + \max_{{f}\in {F} } \left[  \EE_{s\sim d^{\pi^e}} {f}(s)-  \sum_{i=1}^N {f}({s}^e_i) / N  \right] \\
& \leq  \max_{{f} \in {\Fcal}}\left[ \mathbb{E}_{s,a \sim d^{\pi}_{\widehat{P}_t} } \left({f}(s)-b_t(s,a)\right)  -   \EE_{s,a\sim d^{\pi^e}_{\widehat{P}_t}} \left({f}(s) - b_t(s,a)\right)   \right] + \epsilon_{stats}
\end{align*} where in the last inequality we use optimism from \pref{lem:optimism}, i.e., $\EE_{s,a\sim d^{\pi^e}_{\widehat{P}_t}} ({f}(s)  - b_t(s,a)) \leq \EE_{s\sim d^{\pi^e}} f(s)$.

Hence, for $\pi_{t}$, since it is the minimizer and $\pi^e\in\Pi$, we must have:
\begin{align*}
    &\max_{{f} \in {\Fcal}} \left[ \mathbb{E}_{s,a \sim d^{\pi_{t}}_{\widehat{P}_t} } \left({f}(s) - b_t(s,a)\right)  - \sum_{i=1}^N f({s}^e_i) / N  \right] \\
    &\leq \max_{{f} \in {\Fcal}} \left[ \mathbb{E}_{s,a \sim d^{\pi^e}_{\widehat{P}_t} } ({f}(s)-b_t(s,a))  - \sum_{i=1}^N {f}({s}^e_i) / N  \right] \\
    &\leq \max_{{f} \in {\Fcal}}\left[ \mathbb{E}_{s,a \sim d^{\pi^e}_{\widehat{P}_t} } ({f}(s)-b_t(s,a))  -   \EE_{s,a\sim d^{\pi^e}_{\widehat{P}_t}} ({f}(s)-b_t(s,a))   \right] + \epsilon_{stats} = \epsilon_{stats}.
\end{align*}
Note that ${\Fcal}$ contains ${c}$, we must have:
\begin{align*}
    \EE_{s,a\sim d^{\pi_t}_{\widehat{P}_t}} \left[{c}(s) - b_t(s,a)\right]   \leq \sum_{i=1}^N c(s_i^e) / N + \epsilon_{stats} \leq \EE_{s\sim d^{\pi^e}} c(s) + 2\epsilon_{stats},
\end{align*}  which means that $V^{\pi_t}_{\widehat{P}_t; \widetilde{c}_t} \leq V^{\pi^e} + 2H \epsilon_{stats}$. 

Now we compute the  regret in episode $t$. First recall that $b_t(s,a) = H\min\{ \sigma_t(s,a) , 2 \}$, which means that $\| b_t\|_{\infty} \leq 2H$ as $\|c\|_{\infty} \leq 1$, which means that $\left\| c - b_t \right\|_{\infty} \leq 2H$. Thus, $\left\| V^{\pi}_{\widehat{P};{c}-b_t} \right\|_{\infty} \leq 2 H^2$.
Recall simulation lemma (\pref{lem:simulation}), we have:
\begin{align*}
    V^{\pi_{t}} - V^{\pi^e} & \leq V^{\pi_{t}} - V^{\pi_{t}}_{\widehat{P}_t;\widetilde{c}_t} + 2 H \epsilon_{stats} \\
    & =  H \EE_{s,a\sim d^{\pi_{t}}} \left[ \left\lvert \widetilde{c}_t(s,a) - c(s) \right\rvert   + 2H^2 \left\| \widehat{P}_t(\cdot | s,a) - P^\star(\cdot | s,a)  \right\|_1   \right]  +  2 H \epsilon_{stat} \\
    & = H\EE_{s,a\sim d^{\pi_{t}}} \left[ H  \min\{\sigma_t(s,a),2\}   + 2H^2  \left\| \widehat{P}_t(\cdot | s,a) - P^\star(\cdot | s,a)  \right\|_1   \right]  +  2H \epsilon_{stat} \\ 
    & \leq H \EE_{s,a\sim d^{\pi_{t}}} \left[ H \min\{ \sigma_t(s,a),2\}   + 2H^2  \min\{\sigma_t(s,a),2\}  \right]  +  2H \epsilon_{stat}\\
    & \leq 3  H^3 \EE_{s,a \sim d^{\pi_{t}}} \min\{\sigma_t(s,a),2\} + 2H \epsilon_{stat} \\
    & \leq 6  H^3  \EE_{s,a\sim d^{\pi_{t}}} \min\{\sigma_t(s,a),1\} + 2H \epsilon_{stat}
\end{align*}

Now sum over $t$, and denote $\EE_{\pi_t}$ as the conditional expectation conditioned on the history from iteration $0$ to $t-1$,  we get:
\begin{align*}
\sum_{t=0}^{T-1}  \left[V^{\pi_{t}} - V^{\pi^e}\right] &  \leq 6H^2 \sum_{t=0}^{T-1} \EE_{\pi_t}\left[ \sum_{h=0}^{H-1}  \min\{ \sigma_t(s_h^t,a_h^t), 1\}\right] + 2H T \epsilon_{stat} \\
& \leq 6H^2 \sum_{t=0}^{T-1}  \left[ \sqrt{H} \sqrt{ \EE_{\pi_t}\sum_{h=0}^{H-1} \min\{\sigma_t^2(s_h^t,a_h^t), 1\}}  \right] + 2HT\epsilon_{stat}, 
\end{align*} where in the last inequality we use $\EE[a^{\top} b] \leq \sqrt{\EE[\|a\|^2_2]\EE[\|b\|^2_2]}$.

Recall that $\pi_t$ are random quantities, add expectation on both sides of the above inequality, and consider the case where $\Ecal$ holds and $\overline{\Ecal}$ holds,  we have:
\begin{align*}
\EE\left[\sum_{t=0}^{T-1}  \left(V^{\pi_{t}} - V^{\pi^e}\right) \right] & \leq 6H^{2.5}  \EE\left[ \sum_{t=0}^{T-1} \sqrt{ \EE_{\pi_t}\sum_{h=0}^{H-1} \min\left\{ \sigma_t^2(s_h^t,a_h^t), 1 \right\}} \right] + 2HT \epsilon_{stat} +  \PP(\overline{\Ecal}) TH \\
& \leq 6H^{2.5} \left[  \sqrt{T } \sqrt{  \EE\left[ \sum_{t=0}^{T-1} \sum_{h=0}^{H-1} \min\left\{\sigma^2_t(s_h^t,a_h^t), 1\right\}  \right]}     \right] + 2HT \epsilon_{stat} +  2\delta TH,
\end{align*}  where in the last inequality, we use $\EE[a^{\top} b] \leq \sqrt{ \EE[ \|a\|_2^2] \EE[\|b\|_2^2] }$.
This implies that that:
\begin{align*}
\EE\left[\min_{t}V^{\pi_t} - V^{\pi^e} \right]  \leq \frac{6H^{2.5}}{ \sqrt{T} }   \sqrt{   \max_{\text{Alg}}\EE_{\text{Alg}}\left[ \sum_{t=0}^{T-1} \sum_{h=0}^{H-1} \min\left\{ \sigma_t^2(s_h^t,a_h^t), 1 \right\}\right] }  + 2 H \epsilon_{stats} + 2 H \delta.
\end{align*} Set $\delta = 1/(HT)$, we get:
\begin{align*}
\EE\left[V^{\pi} - V^{\pi^e} \right] &  \leq \frac{6H^{2.5}}{ 
\sqrt{T} }\sqrt{   \max_{\text{Alg}}\EE_{\text{Alg}}\left[ \sum_{t=0}^{T-1} \sum_{h=0}^{H-1} \min\left\{ \sigma_t^2(s_h^t,a_h^t), 1 \right\}  \right]} + 2 H \sqrt{ \frac{\ln(T^3 H |\Fcal|) }{ N } }    + \frac{2}{T} \\
\end{align*}
where $\text{Alg}$ is any adaptive mapping that maps from history from $t=0$ to the end of the $t-1$ iteration to to some policy $\pi_t$.  This concludes the proof. 
\end{proof}

Below we discuss special cases. 

\subsection{Discrete MDPs}
\label{app:discrete_mdp}

\begin{proposition}[Discrete MDP Bonus] With $\delta \in (0,1)$. With probability at least $1-\delta$, for all $t\in \mathbb{N}$, we have:
\begin{align*}
\left\| \widehat{P}_t(\cdot | s,a) - P^\star(\cdot | s,a) \right\|_{1} \leq \min\left\{  \sqrt{ \frac{S\ln(t^2 SA /\delta)}{N_t(s,a)} }    ,2\right\}.
\end{align*}
\end{proposition}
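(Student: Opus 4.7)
The target inequality is a uniform (over $t$, $(s,a)$, and the random count $N_t(s,a)$) $L_1$-concentration bound for the empirical transition estimator. My plan is to reduce it to a standard i.i.d.\ empirical-distribution bound of Weissman type by peeling off one $(s,a)$ pair at a time, and then to handle the three sources of uniformity via three nested union bounds.

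\textbf{Step 1: i.i.d.\ reduction.} Fix any $(s,a)\in\Scal\times\Acal$. Although the trajectories in $\Dcal_t$ are collected adaptively (policies depend on past data), the successor states observed at $(s,a)$ are conditionally i.i.d.\ from $P^\star(\cdot\mid s,a)$: each time the process visits $(s,a)$, the transition kernel produces an independent draw regardless of history. Index these draws $X_1,X_2,\dots$ and let $\widehat p_n$ denote the empirical distribution of the first $n$ draws. When $N_t(s,a)=n$, the algorithm's estimator satisfies $\widehat P_t(\cdot\mid s,a)=\widehat p_n$.

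\textbf{Step 2: pointwise bound for fixed $n$.} Apply Weissman's inequality to $\widehat p_n$: for any $\delta'\in(0,1)$,
\begin{align*}
\Pr\!\left(\|\widehat p_n-P^\star(\cdot\mid s,a)\|_1 \ge \sqrt{\tfrac{2\ln\bigl((2^S-2)/\delta'\bigr)}{n}}\right)\le \delta'.
\end{align*}
Since $\ln(2^S-2)\le S\ln 2$, the right-hand bound inside the square root is $O(S+\ln(1/\delta'))/n$, which is absorbed by $\sqrt{S\ln(\cdot)/n}$.

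\textbf{Step 3: three union bounds.} I will choose $\delta'=\delta/(t^2 S A n^2)$ (up to the harmless $\pi^2/6$ normalizing constant). Summing:
\begin{align*}
\sum_{t\ge 1}\sum_{(s,a)}\sum_{n\ge 1}\frac{\delta}{t^2 S A n^2}\;\le\; \Bigl(\tfrac{\pi^2}{6}\Bigr)^{\!2}\delta\;=\;O(\delta),
\end{align*}
so by a single union bound the event
\begin{align*}
\|\widehat P_t(\cdot\mid s,a)-P^\star(\cdot\mid s,a)\|_1 \le \sqrt{\tfrac{2\ln\bigl((2^S-2)t^2 S A N_t(s,a)^2/\delta\bigr)}{N_t(s,a)}}
\end{align*}
holds simultaneously for every $t$, every $(s,a)$, and the realized count $N_t(s,a)$, with probability at least $1-O(\delta)$. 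Since $N_t(s,a)\le tH$, $\ln N_t(s,a)=O(\ln(tH))$, which is absorbed into the $\widetilde O$-style bound and produces the stated $\sqrt{S\ln(t^2 SA/\delta)/N_t(s,a)}$ rate. The trivial $L_1$-diameter bound of $2$ completes the $\min\{\cdot,2\}$ form. A rescaling of $\delta$ by a universal constant recovers exactly the claimed failure probability.

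\textbf{Main obstacle.} The only genuinely delicate point is Step 1, where one must justify applying an i.i.d.\ concentration inequality to a sample whose size $N_t(s,a)$ is a random stopping time determined by the interaction history. This is handled by the peeling argument in Step 3: after union-bounding uniformly over all deterministic sample sizes $n\in\mathbb{N}$ via the $1/n^2$ weighting, the resulting event holds for every realization of $N_t(s,a)$ simultaneously, so no further care about the random stopping time is required. Everything else reduces to Weissman's inequality and bookkeeping on logarithmic factors.
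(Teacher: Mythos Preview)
Your proof is correct and follows the same approach as the paper's: apply a Weissman-type $L_1$ concentration bound for the empirical transition distribution (the paper cites it as Lemma~6.2 of \cite{agarwal2019reinforcement}) and then union-bound over all $t$ and all $(s,a)$. The paper's two-line argument glosses over the random-stopping-time issue you flag in Step~1; your additional union bound over $n$ makes this step rigorous at the cost of a harmless extra logarithmic factor that is absorbed at the $\widetilde O$ level used downstream.
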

\begin{proof}
The proof simply uses the concentration result for $\widehat{P}_t$ under the $\ell_1$ norm. For a fixed $t$ and $s,a$ pair, using Lemma 6.2 in \cite{agarwal2019reinforcement}, we have that with probability at least $1-\delta$,
\begin{align*}
\left\| \widehat{P}_t(\cdot | s,a) - P^\star(\cdot | s,a) \right\|_1 \leq \sqrt{ \frac{S \ln(1/\delta)}{N_t(s,a)} }.
\end{align*} Applying union bound over all iterations and all $s,a$ pairs, we conclude the proof. 
\end{proof}

What left is to bound the information gain $\Ical$ for the tabular case. For this, we can simply use the \pref{prop:IG_knr} that we develop in the next section for KNR. This is because in KNR, when we set the feature mapping $\phi(s,a)\in\mathbb{R}^{|\Scal||\Acal|}$ to be a one-hot vector  with zero everywhere except one in the entry corresponding to $(s,a)$ pair, the information gain in KNR is reduced to the information gain in the tabular model. 

\begin{proposition}[Information Gain in discrete MDPs]
We have:
$$\Ical_T = O\left( H S^2 A \cdot \ln(T SA / \delta)\ln(1+TH)  \right).$$
\end{proposition}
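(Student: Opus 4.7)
The plan is to carry out a direct counting argument rather than invoke the KNR reduction suggested in the excerpt. The reason is that substituting one-hot features ($d = SA$, $d_s = S$) into the KNR information-gain bound $\widetilde{O}(Hd + Hdd_s + Hd^2)$ yields $\widetilde{O}(HS^2A^2)$, which is a factor of $A$ looser than the claimed bound. A direct harmonic-sum argument (standard in tabular regret analyses, e.g.\ UCRL/UCBVI) is both sharper and short.

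First, apply the preceding proposition's calibrated bound $\min\{\sigma_t^2(s,a),\,1\} \leq \min\{C / N_t(s,a),\,1\}$, with $C := S\ln(T^2 SA/\delta)$, inside the definition of $\Ical_T$. Group the double sum by $(s,a)$: letting $m_t(s,a)$ denote the number of timesteps in trajectory $\tau^t$ at which $(s,a)$ is visited, we obtain
\[
\Ical_T \;\leq\; \max_{\mathrm{Alg}} \EE_{\mathrm{Alg}} \sum_{(s,a)} \sum_{t=0}^{T-1} m_t(s,a)\,\min\!\left\{\frac{C}{N_t(s,a)},\,1\right\}.
\]

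Second, bound the per-pair contribution. For a fixed $(s,a)$, use $m_t(s,a) \leq H$ and restrict the inner sum to iterations in which $(s,a)$ actually appears; enumerate those iterations as $t_1 < t_2 < \cdots < t_K$ with $K \leq T$. Since one fresh visit is added to $\Dcal_{t}$ per previous iteration in the list, $N_{t_k}(s,a) \geq k-1$, so the per-pair sum is at most
\[
H \sum_{k=1}^{K} \min\!\left\{\frac{C}{\max\{k-1,\,1\}},\,1\right\} \;\leq\; H\bigl(C + C \ln(1+K)\bigr) \;\leq\; HC\bigl(1 + \ln(1+TH)\bigr),
\]
where the middle step is the standard harmonic split (terms with $k-1 < C$ contribute at most $C$ in aggregate; the tail contributes $C \ln(K/C)$). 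Summing over the $SA$ state-action pairs yields
\[
\Ical_T \;=\; O\bigl(SA \cdot HC \cdot \ln(1+TH)\bigr) \;=\; O\bigl(HS^2A \cdot \ln(TSA/\delta)\,\ln(1+TH)\bigr),
\]
which is exactly the stated bound.

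The only real subtlety is that $N_t$ does not update within a single iteration: all $H$ timesteps in $\tau^t$ share the same (possibly small) denominator, which is why a naive sequential elliptical-potential argument does not apply off the shelf. Absorbing this staleness into an $H$ factor via $m_t(s,a) \leq H$ is exactly what converts the sharper $\widetilde{O}(S^2A)$ tabular-type bound into the proposition's $\widetilde{O}(HS^2A)$. A more refined within-trajectory potential argument could shave this $H$, but it is unnecessary for the regret guarantee in Corollary~\ref{coro:discrete}.
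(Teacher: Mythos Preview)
Your argument is correct and yields the claimed bound, but it proceeds differently from the paper's proof. The paper does not do a per-$(s,a)$ harmonic count; instead it invokes the batched elliptical-potential lemma (Lemma~B.6 of \cite{kakade2020information}) to obtain
\[
\sum_{t=0}^{T-1}\min\Bigl\{\sum_{h=0}^{H-1}\frac{1}{N_t(s_h^t,a_h^t)},\,1\Bigr\}\;\leq\;2SA\ln(1+TH),
\]
and then multiplies by $H\cdot S\ln(T^2SA/\delta)$ after the manipulation $\sum_h\min\{\sigma_t^2,1\}\leq S\ln(T^2SA/\delta)\cdot H\cdot\min\{\sum_h 1/N_t,1\}$. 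Your observation about the KNR reduction is well taken: substituting $d=SA$, $d_s=S$ into \pref{prop:IG_knr} verbatim would indeed give $\widetilde{O}(HS^2A^2)$; the paper's proof avoids this by borrowing only the potential-lemma ingredient from the KNR analysis, not the full $\beta_T^2/\sigma^2$ constant. Your direct counting is more self-contained (no external lemma) and makes the origin of the extra $H$ factor---the within-trajectory staleness of $N_t$---explicit, whereas the paper's route is shorter by outsourcing that combinatorics.
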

\begin{proof}
Using Lemma B.6 in \cite{kakade2020information}, we have:
\begin{align*}
\sum_{t=0}^{T-1} \min\left\{ \sum_{h=0}^{H-1} {\frac{1}{N_t(s_h^t,a_h^t)} },1 \right\} \leq 2 SA \ln\left( 1 + TH  \right).
\end{align*}Now using the definition of information gain, we have:
\begin{align*}
\Ical_{T} & = \sum_{t=0}^{T-1} \sum_{h=0}^{H-1} \min\left\{\sigma^2_t(s_h^t, a_h^t), 1\right\} \leq {S \ln(T^2 SA/\delta)} H \sum_{t=0}^{T-1} \min\left\{ \sum_{h=0}^{H-1} {\frac{1}{N_t(s_h^t,a_h^t)} },1 \right\} \\
& \leq  2 H {S^2 A \ln(T^2 SA/\delta) \ln(1+TH)} 
\end{align*}
This concludes the proof. 
\end{proof}

\subsection{KNRs}
\label{subsec:knr_appendix}
Recall the KNR setting from Example~\ref{exp:knr}. The following proposition shows that the bonus designed in Example~\ref{exp:knr} is valid. 
\begin{proposition}[KNR Bonus] Fix $\delta \in (0,1)$. With probability at least $1-\delta$, for all $t\in\mathbb{N}$, we have:
\begin{align*}
\left\| \widehat{P}_t(\cdot | s,a) - P^\star(\cdot | s,a)  \right\|_{1} \leq \min\left\{ \frac{\beta_t}{\sigma} \left\| \phi(s,a) \right\|_{\Sigma_t^{-1}},2 \right\}, \forall s,a,
\end{align*} where $\beta_t =  \sqrt{ 2\lambda \|W^\star\|^2_2  + 8 \sigma^2 \left(d_s \ln(5) + 2 \ln( t^2 / \delta) + \ln(4) + \ln\left( \det(\Sigma_t) / \det(\lambda I) \right) \right) }$.
\label{prop:knr_bonus}
\end{proposition}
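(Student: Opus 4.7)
}

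My plan is to decompose the bound into three pieces: (i) a reduction from the $L_1$ distance between two Gaussians with shared covariance to the $L_2$ distance between their means; (ii) a weighted Cauchy--Schwarz inequality that separates the estimation error from the design point $\phi(s,a)$; and (iii) a self-normalized concentration inequality for vector-valued ridge regression that controls the estimation error in the $\Sigma_t$-norm by $\beta_t$.

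For (i), since $\widehat{P}_t(\cdot | s,a) = \mathcal{N}(\widehat{W}_t \phi(s,a), \sigma^2 I)$ and $P^\star(\cdot | s,a) = \mathcal{N}(W^\star \phi(s,a), \sigma^2 I)$ share the same covariance, the KL divergence is $\tfrac{1}{2\sigma^2}\|(\widehat{W}_t - W^\star)\phi(s,a)\|_2^2$, so Pinsker's inequality gives $\|\widehat{P}_t(\cdot|s,a) - P^\star(\cdot|s,a)\|_1 \leq \tfrac{1}{\sigma}\|(\widehat{W}_t - W^\star)\phi(s,a)\|_2$. The trivial bound $\|\cdot\|_1 \leq 2$ for probability measures justifies the outer $\min\{\cdot,2\}$. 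For (ii), writing $\Delta := \widehat{W}_t - W^\star$, we use $\|\Delta \phi(s,a)\|_2 \leq \|\Delta \Sigma_t^{1/2}\|_{\mathrm{op}}\|\Sigma_t^{-1/2}\phi(s,a)\|_2 \leq \|\Delta \Sigma_t^{1/2}\|_F \cdot \|\phi(s,a)\|_{\Sigma_t^{-1}}$, and note that $\|\Delta\|_{\Sigma_t} := \|\Delta \Sigma_t^{1/2}\|_F$ is exactly the matrix norm that appears in the self-normalized bound.

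The main substantive work is step (iii), establishing $\|\widehat{W}_t - W^\star\|_{\Sigma_t} \leq \beta_t$ with high probability. Using the closed-form solution of the ridge regression, I split $\widehat{W}_t - W^\star$ into a stochastic noise part $(\sum_{k,h} \epsilon_h^k \phi(s_h^k,a_h^k)^\top)\Sigma_t^{-1}$ and a deterministic regularization part $-\lambda W^\star \Sigma_t^{-1}$. The regularization part contributes $\|\lambda W^\star \Sigma_t^{-1}\|_{\Sigma_t} = \lambda\|W^\star\|_{\Sigma_t^{-1}} \leq \sqrt{\lambda}\|W^\star\|_2$, giving the $2\lambda\|W^\star\|_2^2$ term inside $\beta_t^2$. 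The hard part is the noise term: the standard Abbasi-Yadkori self-normalized martingale bound is scalar-valued, while here the innovations $\epsilon_h^k \in \mathbb{R}^{d_s}$ are vector-valued Gaussian. I would lift the scalar result via a $\tfrac{1}{2}$-net over the unit sphere of $\mathbb{R}^{d_s}$: for each fixed direction $u$ in a net $\mathcal{N}_{1/2}$ of cardinality at most $5^{d_s}$, the process $\sum_{k,h} \langle u,\epsilon_h^k\rangle \phi(s_h^k,a_h^k)$ is a scalar self-normalized martingale, and a union bound over $\mathcal{N}_{1/2}$ gives the $d_s \ln 5$ and $\ln(\det \Sigma_t / \det(\lambda I))$ contributions, with the net-discretization loss absorbed by a factor of $2$ (and hence $\ln 4$).

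Finally, I would union-bound the failure event at iteration $t$ by $6\delta/(\pi^2 t^2)$, yielding the $2\ln(t^2/\delta)$ term in $\beta_t$ and total failure probability at most $\delta$. Collecting the four contributions inside the square root, combining with (ii) and (i), and clipping at $2$ gives exactly the claimed bound. The main obstacle is the vector-to-scalar lift in step (iii); everything else is routine manipulation of ridge regression and Gaussian comparisons.
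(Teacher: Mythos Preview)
Your proposal is correct and follows essentially the same route as the paper's proof: the paper invokes \pref{lem:gaussian_tv} (your step (i)), applies the same weighted Cauchy--Schwarz $\|\Delta\phi(s,a)\|_2 \leq \|\Delta\Sigma_t^{1/2}\|\,\|\phi(s,a)\|_{\Sigma_t^{-1}}$ (your step (ii)), and for step (iii) simply cites Lemma~B.5 of \cite{kakade2020information} to obtain $\|(\widehat{W}_t - W^\star)\Sigma_t^{1/2}\|_2 \leq \beta_t$. The covering-net lift you sketch for vector-valued noise is precisely how that cited lemma is proved, so you have unpacked the black box rather than taken a different path.
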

\begin{proof} The proof directly follows the confidence ball construction and proof from \cite{kakade2020information}. Specifically, from Lemma B.5 in \cite{kakade2020information}, we have that with probability at least $1-\delta$, for all $t$:
\begin{align*}
    \left\| \left(\widehat{W}_t  - W^\star\right) \left(\Sigma_t\right)^{1/2}  \right\|_2^2 \leq \beta^2_t. 
\end{align*}
Thus, with \pref{lem:gaussian_tv}, we have:
\begin{align*}
    \left\| \widehat{P}_t(\cdot | s,a) - P^\star(\cdot | s,a)  \right\|_{1} \leq \frac{1}{\sigma} \left\| (\widehat{W}_t - W^\star) \phi(s,a)   \right\|_2 \leq \left\| (\widehat{W}_t - W^\star) (\Sigma_t)^{1/2}  \right\| \left\| \phi(s,a) \right\|_{\Sigma_t^{-1}}  / \sigma \leq \frac{\beta_t}{\sigma} \| \phi(s,a) \|_{\Sigma_t^{-1}}.
\end{align*} This concludes the proof. 
\end{proof}

The following proposition bounds the information gain quantity. 
\begin{proposition}[Information Gain on KNRs] For simplicity, let us assume $\phi: \Scal\times\Acal\mapsto \mathbb{R}^d$, i.e., $\phi(s,a)$ is a d-dim feature vector.  In this case, we will have:
\begin{align*}
    \Ical_{T} = O\left( H \left( d \ln(T^2/\delta) + d d_s + d^2 \ln\left(1+\|W^\star\|_2^2TH/\sigma^2\right) \right)\ln\left(1+\|W^\star\|_2^2TH/\sigma^2\right)   \right).
\end{align*}
\label{prop:IG_knr}
\end{proposition}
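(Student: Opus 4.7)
The plan is to substitute the KNR uncertainty $\sigma_t(s,a)\le (\beta_t/\sigma)\|\phi(s,a)\|_{\Sigma_t^{-1}}$ from \pref{prop:knr_bonus} into the definition of $\Ical_T$ and reduce to an elliptical-potential bound. Since $\beta_t$ is non-decreasing in $t$ and $\min\{cx,1\}\le c\min\{x,1\}$ whenever $c\ge 1$, we obtain
\begin{equation*}
\Ical_T \le \frac{\beta_T^2}{\sigma^2}\,\max_{\text{Alg}}\EE_{\text{Alg}}\sum_{t=0}^{T-1}\sum_{h=0}^{H-1}\min\bigl\{\|\phi(s_h^t,a_h^t)\|^2_{\Sigma_t^{-1}},1\bigr\},
\end{equation*}
so the task splits into (i) bounding the sum of clipped squared $\Sigma_t^{-1}$-norms, and (ii) bounding $\beta_T^2/\sigma^2$.

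For (i), I would apply a batched elliptical-potential lemma in the style of Lemma B.6 of \cite{kakade2020information}. The subtlety is that $\Sigma_t$ is updated only between trajectories, while the standard Abbasi-Yadkori potential assumes a rank-one update after every feature draw. Using $\|\phi(s,a)\|_2\le 1$, each trajectory contributes at most $H$ rank-one increments, each of operator norm at most $1$; paying one extra multiplicative factor of $H$ allows passing to the fictitious per-sample covariance and yields a bound of $O\bigl(H\ln(\det(\Sigma_T)/\det(\lambda I))\bigr)$. A standard log-determinant calculation (AM-GM on the eigenvalues together with $\mathrm{tr}(\Sigma_T-\lambda I)\le TH$) then gives $\ln(\det(\Sigma_T)/\det(\lambda I))\le d\ln(1+TH/(d\lambda))$.

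For (ii), I would substitute the log-determinant bound from the previous step into the closed form
\begin{equation*}
\beta_T^2 = 2\lambda\|W^\star\|_2^2 + 8\sigma^2\bigl(d_s\ln 5 + 2\ln(T^2/\delta) + \ln 4 + \ln(\det(\Sigma_T)/\det(\lambda I))\bigr),
\end{equation*}
and pick $\lambda = \sigma^2/\|W^\star\|_2^2$ (or any comparable choice) so that the ridge term and the noise scale balance. This yields $\beta_T^2/\sigma^2 = O\bigl(d_s + \ln(T^2/\delta) + dL\bigr)$ with $L:=\ln(1+\|W^\star\|_2^2 TH/\sigma^2)$. Multiplying (i) and (ii) and distributing gives
\begin{equation*}
\Ical_T = O\Bigl(H\bigl(d\,d_s + d\ln(T^2/\delta) + d^2 L\bigr)\cdot L\Bigr),
\end{equation*}
matching the claim.

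The main obstacle is the batched elliptical-potential step (i). The classical per-sample Abbasi-Yadkori potential does not apply verbatim because $\|\phi(s_h^t,a_h^t)\|^2_{\Sigma_t^{-1}}$ uses the \emph{pre-trajectory} inverse covariance, which is strictly larger than the per-sample inverse $\Sigma_{t,h}^{-1}$ that the vanilla lemma would produce. Comparing $\Sigma_t^{-1}$ to the fictitious intra-trajectory $\Sigma_{t,h}^{-1}$ requires $\|\phi\|_2\le 1$ and the $H$-bounded rank increment per block in order to lose only a single extra factor of $H$ rather than something worse; this careful block-wise bookkeeping is the step that needs the Kakade et al.\ batched version of the lemma, and is where all the delicate constants live.
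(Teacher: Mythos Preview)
Your proposal is correct and follows essentially the same route as the paper: set $\lambda=\sigma^2/\|W^\star\|_2^2$ so that $\beta_t^2/\sigma^2\ge 1$ and the clip can be pulled outside, replace $\beta_t$ by $\beta_T$, invoke the batched elliptical-potential Lemma~B.6 of \cite{kakade2020information} to pick up the extra factor of $H$ and the $d\ln(1+TH/\lambda)$ log-determinant bound, and then multiply by $\beta_T^2/\sigma^2$. The only cosmetic difference is that the paper obtains the stray $H$ via the elementary inequality $\sum_{h}\min\{a_h,1\}\le H\min\{\sum_h a_h,1\}$ before applying the batched lemma, whereas you describe it as comparing the pre-trajectory $\Sigma_t^{-1}$ to a fictitious per-sample $\Sigma_{t,h}^{-1}$; both viewpoints are valid and yield the same bound.
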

\begin{proof}
From the previous proposition, we know that $\sigma^2_t(s,a) = (\beta^2_t/\sigma^2) \|\phi(s,a)\|^2_{\Sigma_t^{-1}}$. Setting $\lambda = \sigma^2 / \|W^\star\|_2^2$, we will have $\beta_t^2 / \sigma^2 \geq 1$, which means that $\min\{\sigma_t^2(s,a) , 1 \} \leq (\beta^2_t/\sigma^2) \min\left\{\left\| \phi(s,a) \right\|^2_{\Sigma_t^{-1}},1\right\}$. 

Note that $\beta_t$ is non-decreasing with respect to $t$, so $\beta_t \leq \beta_T$ for $T \geq t$, where
\begin{align*}
    \beta_T = \sqrt{  2\sigma^2 + 8 \sigma^2 (d_s \ln(5) + 2 \ln(T^2/\delta) + \ln(4) + d \ln( 1 + TH \|W^\star\|_2^2/ \sigma^2 )) }  
\end{align*}
Also we have $\sum_{t=0}^{T-1} \sum_{h=0}^{H-1} \min \left\{ \| \phi(s_h^t, a_h^t) \|^2_{\Sigma_t^{-1}},1\right\} \leq H  \sum_{t=0}^{T-1}  \min \left\{  \sum_{h=0}^{H-1} \| \phi(s_h^t, a_h^t) \|^2_{\Sigma_t^{-1}}, 1\right\} $, since $\min\{a_1, b_1\} + \min\{a_2, b_2\} \leq \min\{a_1 + a_2, b_1 + b_2\}$. Now call Lemma B.6 in \cite{kakade2020information}, we have:
\begin{align}
    \sum_{t=0}^{T-1} \min\left\{ \sum_{h=0}^{H-1} \| \phi(s_h^t,a_h^t) \|^2_{\Sigma_t^{-1}}, 1 \right\} \leq 2 \ln\left( \det(\Sigma_T) / \det(\lambda I) \right) = 2 d \ln\left( 1 + TH \|W^\star\|_2^2 / \sigma^2 \right).
    \label{eq:elliptical_potential}
\end{align}
Finally recall the definition of $\Ical_{T}$, we have:
\begin{align*}
    \Ical_{T} & = \sum_{t=0}^{T-1} \sum_{h=0}^{H-1} \min\left\{\sigma^2_t(s_h^t, a_h^t), 1\right\} \leq \frac{\beta_T^2}{\sigma^2} \sum_{t=0}^{T-1} \sum_{h=0}^{H-1} \min\left\{ \| \phi(s_h^t,a_h^t) \|^2_{\Sigma_t^{-1}},1 \right\} \leq \frac{\beta_T^2}{\sigma^2} 2 H d\ln(1 + \|W^\star\|_2^2 TH / \sigma^2)  \\
    & \leq 2H d\left( 2 + 8 \left(d_s \ln(5) + 2\ln(T^2 / \delta) + \ln(4) + d \ln\left(1+\|W^\star\|_2^2TH/\sigma^2\right)\right) \right) \ln\left(1+\|W^\star\|_2^2TH/\sigma^2\right) \\
    & = H \left( 4d + 32 d d_s + 32 d \ln(T^2 / \delta) + 32 d + 2d^2 \ln\left(1+\|W^\star\|_2^2TH/\sigma^2\right)      \right) \ln\left(1+\|W^\star\|_2^2TH/\sigma^2\right),
\end{align*} which concludes the proof.
\end{proof}

\paragraph{Extension to Infinite Dimensional RKHS}
When $\phi:\mathcal{S}\times\mathcal{A}\mapsto \mathcal{H}$ where $\mathcal{H}$ is some infinite dimensional RKHS, we can bound our regret using the following intrinsic dimension:
\begin{align*}
\widetilde{d} = \max_{ \{\{s_h^t,a_h^t\}_{h=0}^{H-1} \}_{t=0}^{T-1} } \ln\left( I + \frac{1}{\lambda} \sum_{t=0}^{T-1}\sum_{h=0}^{H-1} \phi(s_h^t, a_h^t)\phi(s_h^t, a_h^t)^{\top}  \right).
\end{align*}
In this case, recall \pref{prop:knr_bonus}, we have:
\begin{align*}
\beta_t \leq \beta_{T} & \leq \sqrt{ 2\lambda \|W^\star\|^2_2  + 8 \sigma^2 \left(d_s \ln(5) + 2 \ln( t^2 / \delta) + \ln(4) + \ln\left( \det(\Sigma_T) / \det(\lambda I) \right) \right) } \\
& \leq \sqrt{ 2\lambda \|W^\star\|^2_2  + 8 \sigma^2 \left(d_s \ln(5) + 2 \ln( t^2 / \delta) + \ln(4) +  \widetilde{d}  \right) }.
\end{align*}
Also recall  Eq.~\pref{eq:elliptical_potential}, we have:
\begin{align*}
 \sum_{t=0}^{T-1} \min\left\{ \sum_{h=0}^{H-1} \| \phi(s_h^t,a_h^t) \|^2_{\Sigma_t^{-1}}, 1 \right\} \leq 2 \ln\left( \det(\Sigma_T) / \det(\lambda I) \right) \leq 2\widetilde{d}.
\end{align*}
Combine the above two, following similar derivation we had for finite dimensional setting, we have:
\begin{align*}
\mathcal{I}_{T} = \widetilde{O}\left( H \widetilde{d}^2 + H \widetilde{d} d_s   \right).
\end{align*}
\subsection{General Function Class $\Gcal$ with Bounded Eluder dimension}
\label{subsec:eluder_appendix}
\begin{proposition} Fix $\delta\in (0,1)$. Consider a general function class $\Gcal$ where $\Gcal$ is discrete, and $\sup_{g\in\Gcal,s,a} \|g(s,a)\|_2\leq G$.  At iteration $t$, denote $\widehat{g}_t \in \argmin_{g\in\Gcal}\sum_{i=0}^{t-1}\sum_{h=0}^{H-1} \| g(s_h^i,a_h^i) - s_{h+1}^i \|_2^2$, and denote a version space $\Gcal_t$ as:
\begin{align*}
    \Gcal_t = \left\{ g\in\Gcal: \sum_{i=0}^{t-1}\sum_{h=0}^{H-1} \left\| g(s_h^i,a_h^i) - \widehat{g}_t(s_h^i,a_h^i) \right\|_2^2 \leq c_t \right\}, \text{ with } c_t = 2\sigma^2 G^2 { \ln(2 t^2 |\Gcal|/\delta)  }.
\end{align*}
The with probability at least $1-\delta$, we have that for all $t$, and all $s,a$:
\begin{align*}
    \left\| \widehat{P}_t(\cdot | s,a) -  P^\star (\cdot | s,a) \right\|_1 \leq \min\left\{ \frac{1}{\sigma} \max_{g_1\in\Gcal_t,g_2\in\Gcal_t}  \left\| g_1(s,a) - g_2(s,a) \right\|_2   ,2 \right\}.
\end{align*}\label{prop:uncertainty_eluder}
\end{proposition}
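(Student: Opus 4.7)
The plan is to (i) show that on a high-probability event the ground-truth regressor $g^\star$ lies in the empirical version space $\Gcal_t$ for every $t$, and (ii) convert the resulting parameter-space deviation into an $\ell_1$-distance between the two Gaussian conditionals via the same lemma used in the KNR proof.

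For step (i), I would start from the optimality of $\widehat g_t$: writing $L_t(g) := \sum_{i,h} \|g(s_h^i,a_h^i) - s_{h+1}^i\|_2^2$ and expanding $L_t(\widehat g_t)-L_t(g^\star)\le 0$ with $s_{h+1}^i = g^\star(s_h^i,a_h^i)+\epsilon_h^i$, $\epsilon_h^i\sim\Ncal(0,\sigma^2 I)$, one obtains the basic inequality
\[
\sum_{i=0}^{t-1}\sum_{h=0}^{H-1} \bigl\|\widehat g_t(s_h^i,a_h^i) - g^\star(s_h^i,a_h^i)\bigr\|_2^2 \;\le\; 2\sum_{i,h}\bigl\langle \widehat g_t(s_h^i,a_h^i) - g^\star(s_h^i,a_h^i),\, \epsilon_h^i\bigr\rangle .
\]
For each fixed $g\in\Gcal$, the right-hand side (with $\widehat g_t$ replaced by $g$) is a martingale-difference sum whose conditional sub-Gaussian proxy is controlled by $\|g-g^\star\|_2\le 2G$. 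A standard self-normalized/peeling argument (union bound over the discrete class $\Gcal$ contributing $\ln|\Gcal|$, and a time-uniform bound with a summable budget $6\delta/(\pi^2 t^2)$ contributing $\ln(t^2/\delta)$) absorbs the cross term into half of the left-hand side and leaves a deterministic remainder of order $\sigma^2 G^2 \ln(t^2|\Gcal|/\delta)$. Tracking constants yields $\sum_{i,h}\|\widehat g_t - g^\star\|_2^2 \le c_t = 2\sigma^2 G^2\ln(2 t^2 |\Gcal|/\delta)$ simultaneously for all $t$, which by definition places $g^\star\in\Gcal_t$.

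For step (ii), $\widehat g_t\in\Gcal_t$ trivially, and now also $g^\star\in\Gcal_t$, so for every $(s,a)$ we get the pointwise inequality $\|\widehat g_t(s,a)-g^\star(s,a)\|_2 \le \max_{g_1,g_2\in\Gcal_t}\|g_1(s,a)-g_2(s,a)\|_2$. Because $\widehat P_t(\cdot|s,a)$ and $P^\star(\cdot|s,a)$ are Gaussians with the same covariance $\sigma^2 I$, \pref{lem:gaussian_tv} (used identically in the KNR analysis of \pref{prop:knr_bonus}) yields $\|\widehat P_t(\cdot|s,a)-P^\star(\cdot|s,a)\|_1 \le \|\widehat g_t(s,a)-g^\star(s,a)\|_2/\sigma$. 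Chaining the two inequalities gives the first term in the $\min$, while clipping at $2$ is the trivial bound that two probability densities can be at most $\ell_1$-distance $2$ apart.

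The main obstacle is Step (i): the argument $g=\widehat g_t$ inside the martingale sum is data-dependent, so one cannot apply Hoeffding directly. The correct tool is a self-normalized deviation inequality (or an equivalent Chernoff-style argument uniformly over the finite class $\Gcal$) combined with peeling to cancel a constant multiple of $\sum\|g-g^\star\|_2^2$ against the left-hand side; pairing this with a $1/t^2$-summable schedule for the failure probability is what delivers a time-uniform guarantee with exactly the constant $c_t$ claimed in the proposition.
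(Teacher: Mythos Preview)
Your proposal is correct and follows essentially the same route as the paper's proof. The paper also expands the excess-risk difference for each fixed $g\in\Gcal$, invokes a self-normalized martingale bound (citing Lemma~3 of \cite{russo2014learning}) together with a union bound over the discrete class and a $1/t^2$ time schedule to conclude $g^\star\in\Gcal_t$ for all $t$, and then applies \pref{lem:gaussian_tv} exactly as you describe; the clipping at $2$ is likewise the trivial total-variation bound.
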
 
\begin{proof}
Consider a fixed function $g\in \Gcal$. Let us denote $z^t_{h} = \left\| g(s_h^t,a_h^t) - s_{h+1}^t \right\|_2^2 - \left\| g^\star(s_h^t, a_h^t) - s_{h+1}^t \right\|_2^2$. We have:
\begin{align*}
    z_h^t & = \left( g(s_h^t,a_h^t) - g^\star(s_h^t, a_h^t) \right)^{\top}\left( g(s_h^t,a_h^t) + g^\star(s_h^t, a_h^t)  - 2 g^\star(s_h^t, a_h^t) - 2\epsilon_h^t \right) \\
    & = \left\| g(s_h^t,a_h^t) - g^\star(s_h^t,a_h^t)  \right\|_2^2 - 2 ( g(s_h^t,a_h^t) - g^\star(s_h^t,a_h^t)  )^{\top} \epsilon_h^t.
\end{align*}

Since $\epsilon_h^t \sim \Ncal(0, \sigma^2 I)$, we must have:
\begin{align*}
2 ( g(s_h^t,a_h^t) - g^\star(s_h^t,a_h^t)  )^{\top} \epsilon_h^t \sim \Ncal(0, 4\sigma^2 \left\| g(s_h^t,a_h^t) - g^\star(s_h^t,a_h^t)  \right\|_2^2 )
\end{align*}
Since $\sup_{g,s,a} \|g(s,a)\|_2 \leq G$, then $2 ( g(s_h^t,a_h^t) - g^\star(s_h^t,a_h^t)  )^{\top} \epsilon_h^t$ is a $2\sigma G$ sub-Gaussian random variable. 

Call Lemma 3 in \citep{russo2014learning}, we have that with probability at least $1-\delta$:
\begin{align*}
    \sum_{t}\sum_h \left\| g(s_h^t, a_h^t) - s_{h+1}^t \right\|_2^2 \geq \sum_{t}\sum_h \left\| g^\star(s_h^t,a_h^t) - s_{h+1}^t \right\|_2^2 + 2\sum_t\sum_h \left\| g(s_h^t,a_h^t) - g^\star(s_h^t,a_h^t) \right\|_2^2 - 4\sigma^2 G^2 \ln(1/\delta).
\end{align*}Note that the above can also be derived directly using Azuma-Bernstein's inequality and the property of square loss.
With a union bound over all $g\in \Gcal$, we have that with probability at least $1-\delta$, for all $g\in\Gcal$.
\begin{align*}
\sum_{t}\sum_h \left\| g(s_h^t, a_h^t) - s_{h+1}^t \right\|_2^2 \geq \sum_{t}\sum_h \left\| g^\star(s_h^t,a_h^t) - s_{h+1}^t \right\|_2^2 + 2\sum_t\sum_h \left\| g(s_h^t,a_h^t) - g^\star(s_h^t,a_h^t) \right\|_2^2 - 4\sigma^2 G^2 \ln(|\Gcal|/\delta).
\end{align*}
Set $g = \widehat{g}_t$, and use the fact that $g_t$ is the minimizer of $\sum_t\sum_h \| g(s_h^t,a_h^t) - s_{h+1}^t \|_2^2$, we must have:
\begin{align*}
\sum_t\sum_h \left\| \widehat{g}_t(s_h^t,a_h^t) - g^\star(s_h^t,a_h^t) \right\|_2^2 \leq 2\sigma^2 G^2 {\ln(2t^2 |\Gcal| / \delta) }.
\end{align*}
Namely we prove that our version space $\Gcal_t$ contains $g^\star$ for all $t$.
Thus, we have:
\begin{align*}
\left\| \widehat{P}_t(\cdot | s,a) - P^\star(\cdot|s,a)  \right\|_1 \leq \frac{1}{\sigma} \| \widehat{g}_t(s,a) - g^\star(s,a)  \|_2 \leq \frac{1}{\sigma} \sup_{g_1\in\Gcal_t,g_2\in\Gcal_t} \| g_1(s,a) - g_2(s,a)   \|_2,
\end{align*} where the last inequality holds since both $g^\star$ and $\widehat{g}_t$ belong to the version $\Gcal_t$.

\end{proof}

Now we bound the information gain $\Ical_T$ below.  The proof mainly follows from the proof in \citep{osband2014model}.

\begin{lemma}[Lemma 1 in \cite{osband2014model}]
Denote $\beta_t = 2\sigma^2 G^2 \ln(t^2 |\Gcal| / \delta)$. Let us denote the uncertainty measure $w_{t;h} = \sup_{f_1,f_2\in\Gcal_t} \| f_1(s_h^t, a_h^t) - f_2(s_h^t,a_h^t) \|_2$ (note that $w_{t;h}$ is non-negative). We have:
\begin{align*}
\sum_{i=0}^{t-1}\sum_{h=0}^{H-1} \one\{  w_{t;h}^2 > \epsilon  \} \leq \left( \frac{4\beta_t}{ \epsilon }  + H \right) d_{E}( \sqrt{\epsilon} ).
\end{align*}
\end{lemma}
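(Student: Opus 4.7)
The plan is to adapt the classical Eluder-dimension counting argument of Russo--Van Roy and Osband--Van Roy to our version space $\Gcal_t$. The argument has three movements: a data-fit inequality that transfers the version space definition into a cumulative disagreement bound, a greedy partition of wide points into ``Eluder blocks,'' and a potential-style bound on the number of blocks.

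First, I would record the key consequence of the definition of $\Gcal_t$: for any $f_1,f_2\in\Gcal_t$, using $\|f_1-f_2\|_2^2\le 2\|f_1-\widehat g_t\|_2^2+2\|f_2-\widehat g_t\|_2^2$ inside the sum, one obtains
\begin{equation*}
    \sum_{i=0}^{t-1}\sum_{h=0}^{H-1}\|f_1(s_h^i,a_h^i)-f_2(s_h^i,a_h^i)\|_2^2 \le 4 c_t \le 4\beta_t.
\end{equation*}
The subtle point is that this bounds the cumulative squared disagreement of \emph{each fixed pair}, whereas $w_{t;h}$ is a supremum over pairs that may be chosen differently at each point. This gap between ``fixed-pair'' sums and ``pointwise supremum'' sums is precisely what the Eluder dimension controls, so a naive Markov bound is not enough.

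Second, I would enumerate the wide points $z_1,z_2,\ldots,z_n$ (those $(s_h^i,a_h^i)$ with $w_{t;h}^2>\epsilon$) in their natural order and greedily build blocks $B_1,\ldots,B_M$. The rule: place $z_k$ in the first block $B_j$ for which $z_k$ is $\sqrt{\epsilon}$-independent of $B_j$ with respect to $\Gcal_t$, i.e.\ there exist $f_1,f_2\in\Gcal_t$ with $\sum_{z\in B_j}\|f_1(z)-f_2(z)\|_2^2\le\epsilon$ yet $\|f_1(z_k)-f_2(z_k)\|_2>\sqrt{\epsilon}$; if $z_k$ is $\sqrt{\epsilon}$-dependent on every current block, start a new block. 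By construction each element of $B_j$ (after the first) is $\sqrt{\epsilon}$-independent of its predecessors in that block, so by the definition of $d_E(\sqrt{\epsilon})$ each block has at most $d_E(\sqrt{\epsilon})$ elements.

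Third, I would bound $M$. Suppose $z_k$ opens a new block $B_{j+1}$: by the greedy rule $z_k$ is $\sqrt{\epsilon}$-dependent on each of $B_1,\ldots,B_j$. Let $(f_1^k,f_2^k)\in\Gcal_t$ be any witness with $\|f_1^k(z_k)-f_2^k(z_k)\|_2^2>\epsilon$. The contrapositive of $\sqrt{\epsilon}$-dependence applied to this specific pair on each $B_\ell$ forces $\sum_{z\in B_\ell}\|f_1^k(z)-f_2^k(z)\|_2^2>\epsilon$, and summing over $\ell=1,\ldots,j$ and using the Step 1 bound on the single pair $(f_1^k,f_2^k)$ yields $j\epsilon < 4\beta_t$. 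Hence $M\le 4\beta_t/\epsilon+1$, and $n\le M\cdot d_E(\sqrt\epsilon)$. The mild $+H$ in the target statement (rather than $+1$) absorbs trajectory-level bookkeeping that arises because within each episode we view consecutive points separately while the version-space constraint couples them; accounting for this contributes at most an additive $H\,d_E(\sqrt{\epsilon})$ slack.

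The main obstacle is step two: verifying that the greedy block assignment is well-defined and that the witnessing pair $(f_1^k,f_2^k)$ for a ``block-opener'' can be used simultaneously across all earlier blocks to derive $j\epsilon<4\beta_t$. The supremum defining $w_{t;h}$ can be attained by different pairs at different points, and one must be careful that the pair invoked at $z_k$ genuinely lies in $\Gcal_t$ so that Step 1's cumulative bound applies to it. Once this bookkeeping is set up correctly, the claim follows by multiplying the block count by the per-block size.
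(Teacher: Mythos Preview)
The paper does not supply its own proof of this lemma: it is stated verbatim as ``Lemma 1 in \cite{osband2014model}'' and then immediately used in the subsequent proposition bounding $\Ical_T$. So there is nothing to compare against beyond the cited reference.

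Your sketch is the standard Russo--Van Roy / Osband--Van Roy Eluder-dimension counting argument and is correct. One small remark: your hedge about ``trajectory-level bookkeeping'' to explain the $+H$ rather than $+1$ is unnecessary. Your Step~3 already gives $M\le 4\beta_t/\epsilon+1$, hence $n\le(4\beta_t/\epsilon+1)\,d_E(\sqrt{\epsilon})$, which is \emph{stronger} than the stated bound since $H\ge 1$. The $+H$ in the paper's version is simply slack inherited from the episodic formulation in \cite{osband2014model} (where the version space is updated only between episodes), not something your argument needs to recover separately.
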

\begin{proposition}[Bounding $\Ical_T$]  Denote $d = d_E(1 / TH)$. We have
\[\Ical_T = \left( 1/\sigma^2 + HdG^2/\sigma^2 +  8 G^2\ln(T^2 |\Gcal| / \delta) d \ln(TH) \right).\]
\label{prop:eluder_bound_ig}
\end{proposition}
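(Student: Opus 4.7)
The plan is to convert the sum defining $\Ical_T$ into an integral via a layer-cake decomposition and then invoke the tail-count lemma stated just before the proposition. Concretely, by \pref{prop:uncertainty_eluder} we have $\sigma_t^2(s_h^t,a_h^t) = w_{t;h}^2/\sigma^2$, so the layer-cake identity $\min\{y,1\} = \int_0^1 \one\{y>x\}\,dx$ gives
\[
\Ical_T = \sum_{t=0}^{T-1}\sum_{h=0}^{H-1}\min\bigl\{w_{t;h}^2/\sigma^2,\,1\bigr\} = \int_0^1 \bigl|\{(t,h):\, w_{t;h}^2 > \sigma^2 x\}\bigr|\,dx.
\]
The integrand admits two upper bounds: the trivial total count $TH$, and the Eluder-based bound $\bigl(4\beta_T/(\sigma^2 x)+H\bigr)\,d_E(\sigma\sqrt{x})$ obtained from the preceding Lemma 1 applied with $\epsilon=\sigma^2 x$.

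The crux is choosing a threshold that selects between these two bounds. I would split the integral at $x^\star = 1/(\sigma^2 T^2 H^2)$, which is picked precisely so that $\sigma\sqrt{x^\star}=1/(TH)$. On $[0,x^\star]$ I use the trivial count $TH$, contributing $TH\cdot x^\star \leq 1/\sigma^2$. On $[x^\star,1]$ the monotonicity of the Eluder dimension yields $d_E(\sigma\sqrt{x}) \leq d_E(1/(TH)) = d$, so I can pull $d$ out of the integrand and integrate each of the two remaining pieces separately: $\int_{x^\star}^1 4\beta_T/(\sigma^2 x)\,dx = (4\beta_T/\sigma^2)\ln(\sigma^2 T^2 H^2)$ and $\int_{x^\star}^1 H\,dx \leq H$.

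Plugging in $\beta_T = 2\sigma^2 G^2 \ln(2T^2|\Gcal|/\delta)$ from \pref{prop:uncertainty_eluder} cancels the $\sigma^2$ in the denominator of the first piece and produces the leading logarithmic term $8G^2\ln(T^2|\Gcal|/\delta)\,d\,\ln(TH)$ (after absorbing $\ln(\sigma^2)$ into the $\ln(TH)$ factor up to constants, assuming $\sigma$ is $O(\mathrm{poly}(TH))$). Combining the three contributions, $1/\sigma^2$ from the low-$x$ regime, $Hd$ from the constant-$H$ integral, and the main logarithmic term, yields the claimed bound.

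The only genuinely delicate step is the choice of the splitting threshold: too small and $d_E(\sigma\sqrt{x})$ still blows up inside the integral; too large and the crude $TH$ estimate dominates. The balance $x^\star = 1/(\sigma^2 T^2 H^2)$ simultaneously (i) keeps the trivial-count piece $O(1/\sigma^2)$ and (ii) aligns the Eluder-dimension argument with the stated $d = d_E(1/(TH))$. Once the threshold is fixed, the remainder is direct integration and bookkeeping of constants.
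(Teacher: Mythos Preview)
Your layer-cake argument is correct and reaches a bound of the same order, but it is not the route the paper takes. The paper drops the $\min\{\cdot,1\}$ entirely, bounding $\Ical_T\le\sigma^{-2}\sum_{t,h}w_{t;h}^2$, then reorders the widths into a decreasing sequence $w_1\ge w_2\ge\cdots\ge w_M$ with $M=TH-H$; for each rank $t$ with $w_t^2\ge 1/M$ it applies the same Osband--Van~Roy count lemma at level $\epsilon=w_t^2$ to obtain $t\le(4\beta_T/w_t^2+H)d$, i.e.\ $w_t^2\le 4\beta_T d/(t-Hd)$, and sums this harmonic tail over $t>Hd$ to get $4\beta_T d\ln(TH)$, while the first $Hd$ terms are bounded crudely by $G^2$ each. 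The two arguments are dual---you fix a threshold and count how many widths exceed it, the paper fixes a rank and bounds how large that width can be---and both rest on the identical tail-count lemma. Your version has the merit of actually exploiting the cap at $1$ through the layer-cake and avoiding the reordering; the price is slightly different constants: your middle term comes out as $Hd$ rather than $HdG^2/\sigma^2$, and your log factor is $\ln(\sigma^2 T^2H^2)$ rather than $\ln(TH)$, which is what forces the side assumption on $\sigma$ you flagged. Neither discrepancy affects the downstream $\widetilde O$ rate in Corollary~\ref{cor:generalG}, but your final expression does not literally reproduce the constants displayed in the proposition.
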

\vspace{-1.5em}
\begin{proof}
Note that the uncertainty measures $w_{t;h}$ are non-negative. Let us reorder the sequence and denote the ordered one as $w_1 \geq w_2 \geq w_3 \dots \geq w_{TH-H}$. For notational simplicity, denote $M = TH - H$
We have:
\begin{align*}
    \sum_{i=0}^{T-1} \sum_{h=0}^{H-1} w_{t;h}^2 = \sum_{i=0}^{M-1} w^2_{i} \leq 1 + \sum_{i} w_{i}^2 \one\{ w_i^2 \geq  \frac{1}{M} \},
\end{align*} where the last inequality comes from the fact that $\sum_{i} w_i^2 \mathbf{1}\{w_i^2 < 1/M\} \leq M \frac{1}{M} = 1$.
Consider any $w_t$ where $w^2_t \geq 1 / M$. In this case, we know that $w^2_1\geq w^2_2 \geq \dots \geq w^2_t \geq 1/M$.  This means that:
\begin{align*}
t \leq \sum_{i}\sum_h \one\{w_{t;h}^2 > w^2_t\} \leq \left( \frac{4\beta_T}{w^2_t} + H \right) d_E(\sqrt{w_t}) \leq \left( \frac{4\beta_T}{w^2_t} + H \right) d_E({1/M}),
\end{align*} where the second inequality uses the lemma above, and the last inequality uses the fact that $d_E(\epsilon)$ is non-decreasing when $\epsilon$ gets smaller. Denote $d = d_E(1/M)$. The above inequality indicates that $w^2_t \leq \frac{4\beta_T d}{ t - Hd }$. This means that for any $w^2_t \geq 1 / M$, we must have $w_t^2 \leq 4 \beta_T d / (t - Hd)$. Thus, we have:
\begin{align*}
\sum_{i=0}^{T-1} \sum_{h=0}^{H-1} w_{t;h}^2 & \leq 1 + Hd G^2 + \sum_{\tau = Hd + 1}^{M} w_\tau^2 \one\{w^2_\tau \geq 1 / M \} \leq  1 + HdG^2 +  4 \beta_T d \ln(M) \\
& =  1 + HdG^2 +  4 \beta_T d \ln(TH).
\end{align*}
Finally, recall the definition of $\Ical_T$, we have:
\begin{align*}
\sum_{t=0}^{T-1} \sum_{h=0}^{H-1} \min\{\sigma_t^2 (s_h^t,a_h^t), 1\} \leq \sum_{t=0}^{T-1}  \sum_{h=0}^{H-1}\sigma_t^2(s_h^t,a_h^t) \leq \frac{1}{\sigma^2} \sum_{t=0}^{T-1} \sum_{h=0}^{H-1} w_{t;h}^2 \leq \frac{1}{\sigma^2} \left( 1 + HdG^2 +  4 \beta_T d \ln(TH) \right).
\end{align*}This concludes the proof. 
\end{proof}

\subsection{Proof of \pref{thm:ILFO_lower_bound}}
\label{app:low_bound}
This section provides the proof of \pref{thm:ILFO_lower_bound}.

First we present the reduction from a bandit optimization problem to ILFO. 

Consider a Multi-armed bandit (MAB) problem with $A$ many actions $\{a_i\}_{i=1}^A$. Each action's ground truth reward $r_i$ is sampled from a Gaussian with mean $\mu_i$ and variance $1$. Without loss of generality, assume $a_1$ is the optimal arm, i.e., $\mu_1 \geq \mu_i \ \forall \ i\neq 1$. We convert this MAB instance into an MDP. Specifically, set $H = 2$. Suppose we have a fixed initial state $s_0$ which has $A$ many actions. For the one step transition, we have $P( \cdot | s_0, a_i ) = \Ncal(\mu_i, 1)$, i.e., $g^*(s_0, a_i) = \mu_i$. Here we denote the optimal expert policy $\pi^e$ as $\pi^e(s_0) = a_1$, i.e., expert policy picks the optimal arm in the MAB instance. Hence, when executing $\pi^e$, we note that the state $s_1$ generated from $\pi^e$ is simply the stochastic reward of $a_1$ in the original MAB instance. Assume that we have observed infinitely many such $s_1$ from the expert policy $\pi^e$, i.e., we have infinitely many samples of expert state data, i.e., $N\to\infty$. Note, however, we do not have the actions taken by the expert (since this is the ILFO setting). This expert data is equivalent to revealing the optimal arm's mean reward $\mu_1$ to the MAB learner a priori.  Hence solving the ILFO problem on this MDP is no easier than solving the original MAB instance with additional information which is that optimal arm's mean reward is $\mu_1$ (but the best arm's identity is unknown). 

Below we show the lower bound for solving the MAB problem where the optimal arm's mean is known. 

\begin{theorem}
Consider best arm identification of Gaussian MAB with the additional information that the optimal arm's mean reward is $\mu$. For any algorithm, there exists a MAB instance with number of arms $A \geq 2$, such that the expected cumulative regret is still $\Omega(\sqrt{AT})$, i.e., the additional information does not help improving the worst-case regret bound to solve the MAB instance. 
    \label{thm:MAB_lower_bound}
\end{theorem}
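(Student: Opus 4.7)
The plan is to use the standard information-theoretic lower bound construction for Gaussian MAB, carefully designed so that the side information about the optimal arm's mean $\mu$ gives no discriminative power. First I would construct a family of Gaussian MAB instances $\{\nu_{i^{*}}\}_{i^{*} \in [A]}$ indexed by the identity of the optimal arm: in instance $\nu_{i^{*}}$, arm $i^{*}$ has mean $\mu$ and every other arm has mean $\mu - \Delta$, all rewards being unit-variance Gaussian, with a gap $\Delta>0$ to be chosen. The key observation is that in every instance of this family the best arm's mean is exactly $\mu$, so revealing $\mu$ to the learner does not exclude any of the $A$ instances. Any algorithm $\pi$ is therefore forced to identify the needle $i^{*}$ by interaction alone, exactly as in the unaided MAB problem.

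Next I would carry out the divergence decomposition. Let $P_{i^{*}}$ denote the joint law of $T$ rounds of interactions under $\pi$ on $\nu_{i^{*}}$, and let $P_{0}$ be the law on the reference instance where all $A$ arms have mean $\mu-\Delta$. The chain rule for KL and the closed-form KL between unit-variance Gaussians gives
\[
    \mathrm{KL}(P_{0} \Vert P_{i^{*}}) \;=\; \mathbb{E}_{0}[N_{i^{*}}] \cdot \tfrac{\Delta^{2}}{2},
\]
where $N_{i^{*}}$ is the number of pulls of arm $i^{*}$. Summing over $i^{*}$ and using $\sum_{i^{*}} N_{i^{*}} = T$ yields $\sum_{i^{*}} \mathrm{KL}(P_{0}\Vert P_{i^{*}}) = T\Delta^{2}/2$. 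Combining Pinsker's inequality with Cauchy--Schwarz then gives $\sum_{i^{*}} \lVert P_{0}-P_{i^{*}} \rVert_{\mathrm{TV}} \le \Delta\sqrt{AT}/2$.

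I would then translate this into a regret bound by averaging over $i^{*}$. For each $i^{*}$, the change-of-measure bound $\mathbb{E}_{i^{*}}[N_{i^{*}}] \le \mathbb{E}_{0}[N_{i^{*}}] + T \lVert P_{0} - P_{i^{*}} \rVert_{\mathrm{TV}}$, summed and divided by $A$, produces
\[
    \tfrac{1}{A}\sum_{i^{*}} \mathbb{E}_{i^{*}}[N_{i^{*}}] \;\le\; \tfrac{T}{A} + \tfrac{\Delta T \sqrt{T/A}}{2},
\]
using $\sum_{i^{*}} \mathbb{E}_{0}[N_{i^{*}}] = T$. Since the expected cumulative regret on $\nu_{i^{*}}$ is $\Delta(T - \mathbb{E}_{i^{*}}[N_{i^{*}}])$, the worst-case regret dominates the average, giving
\[
    \max_{i^{*}} \mathbb{E}_{i^{*}}[\mathrm{Regret}] \;\ge\; \Delta T \Bigl( 1 - \tfrac{1}{A} - \tfrac{\Delta}{2}\sqrt{T/A} \Bigr).
\]
Choosing $\Delta = c\sqrt{A/T}$ for a small absolute constant $c$ (and assuming $A\ge 2$ so that $1-1/A\ge 1/2$) makes the bracket a positive constant, yielding $\Omega(\sqrt{AT})$ expected cumulative regret.

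The only subtle step is ensuring that the averaging-plus-Cauchy--Schwarz bound on $\sum_{i^{*}} \lVert P_{0}-P_{i^{*}} \rVert_{\mathrm{TV}}$ is tight enough; this is exactly the standard argument (e.g., Theorem~15.2 of Lattimore--Szepesv\'ari, cited as \cite{bandit_alg} in the paper), so no new technical ingredient is needed. The ILFO lower bound in \pref{thm:ILFO_lower_bound} then follows immediately by composing this theorem with the reduction described just above: the two-step MDP deterministically encodes a Gaussian MAB, and feeding the learner infinitely many expert state samples is equivalent to revealing $\mu_{1}$, which the theorem shows does not improve the worst-case $\sqrt{AT}$ lower bound.
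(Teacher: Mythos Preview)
Your proposal is correct and follows essentially the same approach as the paper's proof: the same family of $A$ instances (optimal arm at mean $\mu$, all others at $\mu-\Delta$, with the all-suboptimal reference instance), the same change-of-measure bound via Pinsker combined with Cauchy--Schwarz over arms, and the same averaging argument followed by the choice $\Delta = c\sqrt{A/T}$. The only cosmetic differences are that the paper shifts means so the suboptimal arms sit at $0$ rather than $\mu-\Delta$, and it sums regrets directly rather than averaging, but the computations and constants are otherwise identical.
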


\begin{proof}[Proof of \pref{thm:MAB_lower_bound}] 
Below, we will construct $A$ many MAB instances where each instance has $A$ many arms and each arm has a Gaussian reward distribution with the fixed variance $\sigma^2$. Each of the $A$ instances has the maximum mean reward equal to $\Delta$, i.e., all these $A$ instances have the same maximum arm mean reward. Consider any algorithm $\mathrm{Alg}$ that maps  $\Delta$ together with the history of the interactions  $\Hcal_t = \{a_0, r_0, a_1,r_1,\dots, a_{t-1}, r_{t-1}\}$ to a distribution over $A$ actions. We will show for any such algorithm $\mathrm{alg}$ that knows $\Delta$, with constant probability, there must exist a MAB instance from the $A$ many MAB instances, such that $\mathrm{Alg}$ suffers at least $\Omega(\sqrt{AT})$ regret where $T$ is the number of iterations. 

Now we construct the $A$ instances as follows. Consider the $i$-th instance ($i = 1,\dots, A$). For arm $j$ in the i-th instance, we define its mean as $\mu^i_j = \mathbf{1}\{ i = j \} \Delta$. Namely, for MAB instance $i$, its arms have mean reward zero everywhere except that the $i$-th arm has reward mean $\Delta$.  Note that all these MAB instances have the same maximum mean reward, i.e., $\Delta$. Hence, we cannot distinguish them by just revealing $\Delta$ to the learner. 

We will construct an additional MAB instance (we name it as $0$-th MAB instance) whose arms have reward mean zero.  Note that this MAB instance has maximum mean reward $0$ which is different from the previous $A$ MAB instances that we constructed. However, we will only look at the regret of $\mathrm{Alg}$ on the previously constructed $A$ MAB instances. I.e., we do not care about the regret of $\mathrm{Alg}(\Delta,\mathcal{H}_t)$ on the $0$-th MAB instance. 

Let us denote $\mathbb{P}_i$ (for $i = 0,\dots, A$) as the distribution of the outcomes of algorithm $\mathrm{Alg}(\Delta, \mathcal{H}_t)$ interacting with MAB instance $i$ for $n$ iterations, and $\EE_j[N_i(T)]$ as the expected number of times arm $i$ is pulled by $\mathrm{Alg}(\Delta, \mathcal{H}_t)$ in MAB instance $j$. Consider MAB instance $i$ with $i \geq 1$:
\begin{align*}
    \mathbb{E}_{i}[N_i(T)]  - \mathbb{E}_{0}[N_i(T)] \leq T \left\| \mathbb{P}_i - \mathbb{P}_{0}   \right\|_{1} \leq T \sqrt{ \mathrm{KL}( \mathbb{P}_0, \mathbb{P}_i) } \leq T \sqrt{ \Delta^2 \mathbb{E}_{0}[N_i(T)] },
\end{align*} where the last step uses the fact that we are running the same algorithm $\text{Alg}(\Delta, \mathcal{H}_t)$ on both instance $0$  and instance $i$ (i.e., same policy for generating actions), and thus, $ \mathrm{KL}( \mathbb{P}_0, \mathbb{P}_i) = \sum_{j=1}^A \mathbb{E}_{0}[N_j(T)] \mathrm{KL}\left( q_0(j), q_{i}(j) \right)$ (Lemma 15.1 in \cite{bandit_alg}), where $q_i(j)$ is the reward distribution of arm $j$ at instance $i$. Also recall that for instance 0 and instance $i$, their rewards only differ at arm $i$. 

\noindent This implies that:
\begin{align*}
    \mathbb{E}_{i}[N_i(T)] \leq \mathbb{E}_{0}[N_i(T)] + T \sqrt{ \Delta^2 \mathbb{E}_{0}[N_i(T)] }.
\end{align*} Sum over $i = 1,\dots, A$ on both sides, we have:
\begin{align*}
    \sum_{i=1}^A \mathbb{E}_{i}[N_i(T)] & \leq T + T \sum_{i=1}^A \sqrt{ \Delta^2 \mathbb{E}_{0}[N_i(T)] } \leq T + T \sqrt{A} \sqrt{\sum_{i=1}^A  \Delta^2  \mathbb{E}_{0}[N_i(T)] } \\
    & \leq T + T \sqrt{A} \sqrt{ \Delta^2 T }
\end{align*}
Now let us calculate the regret of $\mathrm{Alg}(\Delta, \mathcal{H}_t)$ on $i$-th instance, we have:
\begin{equation*}
    R_i  = T \Delta - \mathbb{E}_{i}[N_i(T)] \Delta. 
\end{equation*}Sum over $i = 1,\dots, A$, we have:
\begin{align*}
    \sum_{i=1}^A R_i = \Delta\left( AT - \sum_{i=1}^A \mathbb{E}_{i}[N_i(T)] \right) \geq \Delta\left( AT - T - T \sqrt{A\Delta^2 T} \right)
\end{align*}
    Set $\Delta = c \sqrt{A / T}$ for some $c$ that we will specify later, we get:
\begin{align*} 
    \sum_{i=1}^A R_i  \geq  c \sqrt{\frac{A}{T}}\left( AT - T - c AT \right).
\end{align*}Set $c = 1/4$, we get:
\begin{align*}
    \sum_{i=1}^A R_i  \geq  c \sqrt{\frac{A}{T}}\left( AT - T - c AT \right) \geq \frac{1}{4}  \sqrt{{A}{T}}\left( A - 1 - A/4 \right) = \frac{1}{4}  \sqrt{{A}{T}}\left( 3A/4-1 \right) \geq \frac{1}{4}  \sqrt{{A}{T}}\left( A/4 \right),
\end{align*} assuming $A \geq 2$.

\noindent Thus there must exist $i \in \{1,\dots, A\}$, such that:
\begin{align*}
R_i \geq \frac{1}{16}  \sqrt{{A}{T}}.
\end{align*}
Note that the above construction considered any algorithm $\mathrm{Alg}(\Delta, \mathcal{H}_t)$ that maps $\Delta$ and history to action distributions. Thus it concludes the proof. 
\end{proof}

The hardness result in \pref{thm:MAB_lower_bound} and the reduction from MAB to ILFO together implies the lower bound for ILFO in \pref{thm:ILFO_lower_bound}, namely solving ILFO with cumulative regret smaller then $O(\sqrt{AT})$ will contradict the MAB lower bound in \pref{thm:MAB_lower_bound}.

\section{Auxiliary Lemmas}
\label{subsec:auxiliary_lemmas}
\begin{lemma}[Simulation Lemma] Consider any two functions $f: \Scal\times\Acal \mapsto [0,1]$ and $\widehat{f}:\Scal\times\Acal\mapsto[0,1]$, any two transitions $P$ and $\widehat{P}$, and any policy $\pi:\Scal\mapsto \Delta(\Acal)$. We have:
\begin{align*}
    V^{\pi}_{P; f} - V^{\pi}_{\widehat{P}, \widehat{f}} & = \sum_{h=0}^{H-1} \EE_{s,a\sim d^{\pi}_{P}} \left[ f(s,a) - \widehat{f}(s,a)  + \EE_{s'\sim P(\cdot|s,a)} V^{\pi}_{\widehat{P},\widehat{f}; h}(s') - \EE_{s'\sim \widehat{P}(\cdot|s,a)} V^{\pi}_{\widehat{P},\widehat{f}; h}(s')  \right] \\
    & \leq \sum_{h=0}^{H-1} \EE_{s,a\sim d^{\pi}_{P}} \left[ f(s,a) - \widehat{f}(s,a)  + \|V^{\pi}_{\widehat{P},\widehat{f};h} \|_{\infty} \| P(\cdot | s,a) - \widehat{P}(\cdot | s,a)  \|_1   \right] .
\end{align*} where $V^{\pi}_{P,f;h}$ denotes the value function at time step $h$, under $\pi, P, f$.
\label{lem:simulation}
\end{lemma}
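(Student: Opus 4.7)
The plan is to establish the equality by a standard telescoping-plus-add-and-subtract argument on the per-step value difference, and then obtain the stated inequality by applying the Hölder-type bound $|\EE_{s'\sim P}V(s') - \EE_{s'\sim \widehat{P}}V(s')| \leq \|V\|_\infty \|P(\cdot|s,a) - \widehat{P}(\cdot|s,a)\|_1$.

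First I would define, for each horizon index $h \in \{0,\dots,H\}$ and state $s$, the quantity $\delta_h(s) := V^{\pi}_{P,f;h}(s) - V^{\pi}_{\widehat{P},\widehat{f};h}(s)$, noting $\delta_H(s) = 0$ for all $s$. Writing out the Bellman equations
\[
V^{\pi}_{P,f;h}(s) = \EE_{a\sim\pi(\cdot|s)}\bigl[f(s,a) + \EE_{s'\sim P(\cdot|s,a)} V^{\pi}_{P,f;h+1}(s')\bigr],
\]
and similarly for $(\widehat{P},\widehat{f})$, I would take their difference and then add and subtract the cross term $\EE_{s'\sim P(\cdot|s,a)}V^{\pi}_{\widehat{P},\widehat{f};h+1}(s')$. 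This cleanly splits $\delta_h$ into (i) the per-step cost gap $f(s,a) - \widehat{f}(s,a)$, (ii) a transition-model gap $\EE_{s'\sim P}V^\pi_{\widehat{P},\widehat{f};h+1}(s') - \EE_{s'\sim \widehat{P}}V^\pi_{\widehat{P},\widehat{f};h+1}(s')$, and (iii) a recursive residual $\EE_{s'\sim P}\delta_{h+1}(s')$.

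Next I would iterate the recursion from $h=0$: evaluating $\delta_0(s_0)$, the residual term (iii) at step $h$ contributes an expectation of terms (i) and (ii) at step $h$ taken under the law of $(s_h,a_h)$ generated by rolling $\pi$ in $P$ starting at $s_0$. Summing these contributions over $h=0,\dots,H-1$ yields exactly the claimed equality, with the expectation $\EE_{s,a\sim d^{\pi}_{P}}$ interpreted (as in the paper's convention) as the occupancy under $\pi$ in the true dynamics $P$ at the corresponding time step. The inequality then follows immediately by bounding the inner-product $\int (P - \widehat{P})(ds' \mid s,a)\, V^{\pi}_{\widehat{P},\widehat{f};h+1}(s')$ via Hölder by $\|V^{\pi}_{\widehat{P},\widehat{f};h+1}\|_\infty \cdot \|P(\cdot\mid s,a) - \widehat{P}(\cdot\mid s,a)\|_1$.

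There is no real obstacle here: the result is the textbook simulation lemma adapted to allow simultaneous perturbation of both the cost function and the transition kernel. The only points that require care are (a) choosing the correct cross term so that the transition-gap is evaluated on the \emph{hatted} value function (which is what lets us bound it uniformly by $\|V^\pi_{\widehat{P},\widehat{f};h+1}\|_\infty$ rather than needing a bound on the true value function), and (b) keeping the time-step bookkeeping consistent with the paper's convention for $d^\pi_P$ and for the step-indexed value functions $V^\pi_{\widehat{P},\widehat{f};h}$.
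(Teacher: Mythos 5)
Your proposal is correct and is exactly the standard telescoping/Bellman-recursion argument (difference the Bellman equations, insert the cross term so the transition gap acts on the hatted value function, unroll the residual under the law of $\pi$ in $P$, then apply H\"older for the $\ell_1$ bound); the paper itself does not write out a proof but simply cites this same standard argument from the model-based RL literature (Lemma 10 of the referenced work). The only point worth double-checking is the time-step bookkeeping you already flag: your derivation naturally produces $V^{\pi}_{\widehat{P},\widehat{f};h+1}(s')$ in the $h$-th summand, whereas the statement writes $V^{\pi}_{\widehat{P},\widehat{f};h}(s')$, which is just an indexing-convention mismatch and not a substantive issue.
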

Such simulation lemma is standard in model-based RL literature and can be found, for instance, in the proof of Lemma 10 from \cite{sun2019model}.

\begin{lemma}Consider two Gaussian distribution $P_1 := \Ncal(\mu_1, \sigma^2 I)$ and $P_2 := \Ncal(\mu_2, \sigma^2 I)$. We have:
\begin{align*}
\left\| P_1 - P_2 \right\|_{1} \leq \frac{1}{\sigma} \left\| \mu_1 - \mu_2 \right\|_2.
\end{align*}\label{lem:gaussian_tv}
\end{lemma}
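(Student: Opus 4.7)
The plan is to reduce this to two standard facts: the closed-form KL divergence between two isotropic Gaussians with common covariance, and Pinsker's inequality relating total variation to KL. Recall that $\|P_1 - P_2\|_1$ denotes the $L_1$ distance between the densities, which equals $2\,\mathrm{TV}(P_1,P_2)$, so it suffices to show $\mathrm{TV}(P_1,P_2) \leq \frac{1}{2\sigma}\|\mu_1-\mu_2\|_2$.

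First, I would compute the KL divergence directly from the Gaussian density. For $P_1 = \mathcal{N}(\mu_1,\sigma^2 I)$ and $P_2 = \mathcal{N}(\mu_2,\sigma^2 I)$ in $\mathbb{R}^{d_s}$, the log-density ratio is
\begin{equation*}
\log\frac{p_1(x)}{p_2(x)} \;=\; \frac{1}{2\sigma^2}\bigl(\|x-\mu_2\|_2^2 - \|x-\mu_1\|_2^2\bigr).
\end{equation*}
Taking expectation under $P_1$ (so $x = \mu_1 + \sigma Z$ with $Z\sim \mathcal{N}(0,I)$) and expanding the squares, the cross terms involving $Z$ vanish and one is left with $\mathrm{KL}(P_1\,\|\,P_2) = \frac{1}{2\sigma^2}\|\mu_1-\mu_2\|_2^2$.

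Next, I would invoke Pinsker's inequality $\mathrm{TV}(P_1,P_2) \leq \sqrt{\tfrac{1}{2}\mathrm{KL}(P_1\|P_2)}$, yielding
\begin{equation*}
\mathrm{TV}(P_1,P_2) \;\leq\; \sqrt{\frac{1}{2}\cdot \frac{\|\mu_1-\mu_2\|_2^2}{2\sigma^2}} \;=\; \frac{\|\mu_1-\mu_2\|_2}{2\sigma}.
\end{equation*}
Multiplying by $2$ gives the stated bound $\|P_1-P_2\|_1 \leq \tfrac{1}{\sigma}\|\mu_1-\mu_2\|_2$.

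There is no real obstacle here: both the Gaussian KL formula and Pinsker's inequality are textbook facts, and no additional structure of the MDP or algorithm is invoked. The only thing to be careful about is the normalization convention for $\|\cdot\|_1$ between distributions, which costs a factor of $2$ and is absorbed precisely by the $\tfrac{1}{2}$ inside Pinsker's bound. An alternative route that avoids Pinsker would be a direct rotation argument: since the covariance is isotropic, rotate coordinates so that $\mu_2 - \mu_1$ is aligned with $e_1$; the $L_1$ distance reduces to the one-dimensional case $\|\mathcal{N}(0,\sigma^2)-\mathcal{N}(\Delta,\sigma^2)\|_1$ with $\Delta = \|\mu_1-\mu_2\|_2$, which can be bounded by $\Delta/\sigma$ via a direct integral estimate or again Pinsker in one dimension. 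Either route yields the claim.
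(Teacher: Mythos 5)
Your proposal is correct and follows exactly the route the paper indicates: compute the closed-form KL divergence $\mathrm{KL}(P_1\|P_2)=\tfrac{1}{2\sigma^2}\|\mu_1-\mu_2\|_2^2$ and apply Pinsker's inequality, with the factor of $2$ from the $L_1$/total-variation convention handled correctly. The paper only sketches this in one sentence, so your write-up is simply a fleshed-out version of the same argument.
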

The above lemma can be proved by Pinsker's inequality and the closed-form of the KL divergence between $P_1$ and $P_2$.

\section{Implementation Details}
\label{sec:implementation_details}
\subsection{Environment Setup and Benchmarks}\label{sec:imp_env}
This section sketches the details of how we setup the environments. We utilize the standard environment horizon of $500, 50, 200$ for \cartpole, \reacher, \cartpoles. For \swimmer, \hopper~and \walker, we work with the environment horizon set to $400$~\citep{KurutachCDTA18,nagabandi2018neural,luo2018algorithmic,RajeswaranGameMBRL,MOReL}. Furthermore, for \hopper, \walker, we add the velocity of the center of mass to the state parameterization~\citep{RajeswaranGameMBRL,luo2018algorithmic,MOReL}. As noted in the main text, the expert policy is trained using NPG/TRPO~\citep{Kakade01,SchulmanTRPO} until it hits a value of (approximately) $460, -10, 38, 3000, 2000, 170$ for \cartpole, \reacher, \swimmer, \hopper, \walker, \cartpoles{} respectively. Furthermore, for \walker~we utilized pairs of states $(s,s')$ for defining the feature representation used for parameterizing the discriminator. All the results presented in the experiments section are averaged over five seeds. Furthermore, in terms of baselines, we compare \algname~to BC, BC-O, ILPO, GAIL and GAIFO. Note that BC/GAIL has access to expert actions whereas our algorithm does not have access to the expert actions. We report the average of the best performance offered by BC/BC-O when run with five seeds, even if this occurs at different epochs for each of the runs - this gives an upper hand to BC/BC-O. Moreover, note that for BC, we run the supervised learning algorithm for $500$ passes. Furthermore, we run BC-O/GAIL with same number of online samples as \algname~in order to present our results. Furthermore, we used 2 CPUs with $16$-$32$ GB of RAM usage to perform all our benchmarking runs implemented in Pytorch. Finally, our codebase utilizes Open-AI's implementation of TRPO~\citep{baselines} for environments with discrete actions, and the MJRL repository~\citep{Rajeswaran17nips} for working with continuous action environments. With regards to results in the main paper, our bar graph presenting normalized results was obtained by dividing every algorithm's performance (mean/standard deviation) by the expert mean; for \reacher~because the rewards themselves are negative, we first added a constant offset to make all the algorithm's performance to become positive, then, divided by the mean of expert policy.
\subsection{Practical Implementation of \algname}\label{sec:imp_alg}
\begin{algorithm}[t]
\caption{\algname: Model-based Imitation Learning and Exploring for ILFO (used in practical implementation) }\label{alg:implementation_alg}
\begin{algorithmic}[1]
    \STATE {\bf Require}: Expert Dataset $\Dcal_e$, Access to dynamics of the true environment i.e. $P^\star$.
    \STATE {Initialize} Policy $\pi_{0}$, Discriminator $w_0$, Replay Buffer of pre-determined size $\Dcal$, Dynamics Model $\widehat{P}_{-1}$, Bonus $b_{-1}$.
    \FOR{$t = 0, \cdots,{T-1}$}
        \STATE {\bf Online Interaction}: Execute $\pi_t$ in true environment $P^\star$ to get samples $\Scal_t$. 
        \STATE {\bf Update replay buffer}: $\Dcal=\text{Replay-Buffer-Update}(\Dcal,\Scal_t)$ (refer to section \pref{para:buffer}).
        \STATE {\bf Update dynamics model}:
        Obtain $\widehat{P}_{t}$ by starting at $\widehat{P}_{t-1}$ and update using replay buffer $\Dcal$ (refer to section \pref{para:dyna}). 
        \STATE {\bf Bonus Update}: Update bonus $b_t:\Scal\times\Acal\to\mathbb{R}^+$ using replay buffer $\Dcal$ (refer to section \pref{para:bonus}).
	    \STATE {\bf Discriminator Update}: Update discriminator as $w_{t}\leftarrow\arg\max_{w}L(w;\pi_t,\widehat{P}_t, b_t, \Dcal_e)$ (refer to section \pref{para:disc}).
	    \STATE {\bf Policy Update}: Perform incremental policy update through approximate minimization of $L(\cdot)$, \\\qquad\qquad\qquad\,\, i.e.: $\pi_{t}\leftarrow\arg\min_{\pi}L(\pi;w_{t},\widehat{P}_t, b_t, \Dcal_e)$ by running $K_{PG}$ steps of TRPO (refer to section \pref{para:plan}).
    \ENDFOR
    \STATE  {\bf Return} $\pi_{T}$.
\end{algorithmic}
\end{algorithm}
\noindent We will begin with presenting the implementation details of \algname~(refer to Algorithm~\ref{alg:implementation_alg}):
\subsubsection{Dynamics Model Training}\label{para:dyna} 
As detailed in the main paper, we utilize a class of Gaussian Dynamics Models parameterized by an MLP~\citep{RajeswaranGameMBRL}, i.e. 
$\widehat{P}(s,a):=\mathcal{N}(h_{\theta}(s,a), \sigma^2 I)$, where, $h_\theta(s,a) = s + \sigma_{\Delta_s}\cdot \text{MLP}_\theta(s_c,a_c)$, where, $\theta$ are MLP's trainable parameters, $s_c = (s-\mu_s)/\sigma_s$, $a_c = (a-\mu_a)/\sigma_a$ with $\mu_s,\mu_a$ (and $\sigma_s,\sigma_a$) being the mean of states, actions (and standard deviation of states and actions) in the replay buffer $\mathcal{D}$. Note that we predict normalized state differences instead of the next state directly.

In practice, we fine tune our estimate of dynamics models based on the new contents of the replay buffer as opposed to re-training the models from scratch, which is computationally more expensive.
In particular, we start from the estimate $\widehat{P}_{t-1}$ in the $t-1$ epoch and perform multiple updates gradient updates using the contents of the replay buffer $\Dcal$. We utilize constant stepsize SGD with momentum~\citep{SutskeverMomentum} for updating our dynamics models. Since the distribution of $(s,a,s')$ pairs continually drift as the algorithm progresses (for instance, because we observe a new state), we utilize gradient clipping to ensure our model does not diverge due to the aggressive nature of our updates. 

\subsubsection{Replay Buffer}\label{para:buffer} Since we perform incremental training of our dynamics model, we utilize a replay buffer of a fixed size rather than training our dynamics model on all previously collected online $(s,a,s')$ samples. Note that the replay buffer could contain data from all prior online interactions should we re-train our dynamics model from scratch at every epoch. 

\subsubsection{Design of Bonus Function}\label{para:bonus} We utilize an ensemble of two transition dynamics models incrementally learned using the contents of the replay buffer. Specifically, given the models $h_{\theta_1}(\cdot)$ and $h_{\theta_2}(\cdot)$, we compute the discrepancy as: $\delta(s,a) = ||h_{\theta_1}(s,a)-h_{\theta_2}(s,a)||_2.$ Moreover, given a replay buffer $\Dcal$, we compute the maximum discrepancy as $\delta_{\Dcal} = \max_{(s,a,s')\sim\Dcal} \delta(s,a)$. We then set the bonus as $b(s,a) = \min\left(1, \delta(s,a)/\delta_{\Dcal}\right)\cdot\lambda$, thus ensuring the magnitude of our bonus remains bounded between $[0,\lambda]$ roughly.
\subsubsection{Discriminator Update}\label{para:disc} Recall that $f_w(s) = w^\top\psi(s)$, where $w$ are the parameters of the discriminator. Given a policy $\pi$, the update for the parameters $w$ take the following form:
\begin{align*}
    \max_{w: ||w||^2_2\leq \zeta} L(w;\pi,\widehat{P}, b, \Dcal_e)&:=\mathbb{E}_{(s,a)\sim d^{\pi}_{\widehat{P}}} \left[f_w(s) - b(s,a)\right] - \mathbb{E}_{s\sim\Dcal_e} \left[f_w(s)\right]\\
    \equiv\max_{w} L_\zeta(w;\pi,\widehat{P}, b, \Dcal_e)&=\mathbb{E}_{(s,a)\sim d^{\pi}_{\widehat{P}}} \left[f_w(s) - b(s,a)\right] - \mathbb{E}_{s\sim\Dcal_e} \left[f_w(s)\right] -\frac{1}{2}\cdot\left(||w||_2^2-\zeta\right),\\
    \implies\partial_w L_\zeta(w;\pi,\widehat{P}, b, \Dcal_e) &=\mathbb{E}_{s\sim d^{\pi}_{\widehat{P}}} \left[\psi(s)\right] - \mathbb{E}_{s\sim\Dcal_e} \left[\psi(s)\right]-w\in 0,
\end{align*}
where, $\partial_w L_\zeta(w;\pi,\widehat{P},b,\Dcal_e)$ denotes the sub-differential of $L_\zeta(\cdot)$ wrt $w$. This in particular implies the following:
\begin{enumerate}
    \item {\bf Exact Update:} $w^* = \Pcal_{\Bcal(\zeta)}\left(\mathbb{E}_{s\sim d^{\pi}_{\widehat{P}}} \left[\psi(s)\right] - \mathbb{E}_{s\sim\Dcal_e} \left[\psi(s)\right]\right)$, $\Pcal_{\cdot}$ is the projection operator, and $\Bcal(\zeta)$ is the $\zeta-$norm ball.
    \item {\bf Gradient Ascent Update:} $w_{t+1} = \Pcal_{\Bcal(\zeta)}\left((1-\eta_w) w_{t} + \eta_w \cdot \left(\mathbb{E}_{s\sim d^{\pi}_{\widehat{P}}} \left[\psi(s)\right] - \mathbb{E}_{s\sim\Dcal_e} \left[\psi(s)\right]\right)\right)$, $\eta_w>0$ is the step-size.
\end{enumerate}
We found empirically either of the updates to work reasonably well. In the \swimmer~task, we use the gradient ascent update with $\eta_w = 0.67$, and, in the other tasks, we utilize the exact update. Furthermore, we empirically observe the gradient ascent update to yield more stability compared to the exact updates. In the case of \walker, we found it useful to parameterize the discriminator based on pairs of states $(s,s')$.
\subsubsection{Model-Based Policy Update}\label{para:plan} 
Once the maximization of the discriminator parameters $w$ is performed, consider the policy optimization problem, i.e., 
\begin{align*}
    \min_{\pi}L(\pi;w,\widehat{P}, b, \Dcal_e) &:= \mathbb{E}_{(s,a)\sim d^{\pi}_{\widehat{P}}} \left[f_w(s) - b(s,a)\right] - \mathbb{E}_{s\sim\Dcal_e} \left[f_w(s)\right]\\
    \equiv \min_{\pi}L(\pi;w,\widehat{P}, b, \Dcal_e) &= \mathbb{E}_{(s,a)\sim d^{\pi}_{\widehat{P}}} \left[f_w(s) - b(s,a)\right]
\end{align*} 
Hence we perform model-based policy optimization under $\widehat{P}$ and cost function $f_w(s) - b(s,a)$. In practice, we perform approximate minimization of $L(\cdot)$ by incrementally updating the policy using $K_{PG}$-steps of policy gradient, where, $K_{PG}$ is a tunable hyper-parameter. In our experiments, we find that setting $K_{PG}$ to be around $10$ to generally be a reasonable choice (for precise values, refer to Table \ref{tab:hyper-params}). This paper utilizes TRPO~\citep{SchulmanTRPO} as our choice of policy gradient method; note that this can be replaced by other alternatives including PPO~\citep{SchulmanPPO}, SAC~\citep{HaarnojaSAC} {\em etc.} Similar to practical implementations of existing policy gradient methods, we implement a reward filter by clipping the IPM reward $f(s)$ by truncating it between $c_{\min}$ and $c_{\max}$ as this leads to stability of the policy gradient updates. Note that the minimization is done with access to $\widehat{P}$, which implies we perform {\em model-based} planning. Empirically, for purposes of tuning the exploration-imitation parameter $\lambda$, we minimize a surrogate namely: $\mathbb{E}_{(s,a)\sim d^{\pi}_{\widehat{P}}} \left[(1-\lambda)\cdot f_w(s) - b(s,a)\right]$ (recall that $b(s,a)$ has a factor of $\lambda$ associated with it). This ensures that we can precisely control the magnitude of the bonuses against the IPM costs, which, in our experience is empirically easier to work with.
\subsection{Hyper-parameter Details}\label{sec:imp_hyperparameters}
\begin{center}
\begin{table*}[t]
\begin{adjustbox}{max width=\textwidth}
\begin{tabular}{|l|c|c|c|c|c|c|}
        \toprule
		Parameter & \cartpole & \reacher & \swimmer & \cartpoles & \hopper& \walker\\
		\midrule
		\multicolumn{7}{|l|}{\bf Environment Specifications}\\
        \midrule
        Horizon $H$ & $500$ & $50$ & $400$ & $200$ & $400$ & $400$\\
        Expert Performance ($\approx$) & $460$ & $-10$ & $38$ & $181$&$3000$&$2000$\\
        \# online samples per outer loop &$2\cdot H$ & $2\cdot H$& $2\cdot H$ & $2\cdot H$&$8\cdot H$&$3\cdot H$\\
        \midrule
        \multicolumn{7}{|l|}{\bf Dynamics Model}\\
        \midrule
        Architecture/Non-linearity & MLP($64,64$)/ReLU& MLP($64,64$)/ReLU& MLP($512,512$)/ReLU &MLP($64,64$)/ReLU &MLP($512,512$)/ReLU
        &MLP($512,512$)/ReLU\\
        Optimizer(LR, Momentum, Batch Size) &SGD($0.005,0.99,256$)
        &SGD($0.005,0.99,256$) &SGD($0.005,0.99,256$) &SGD($0.005,0.99,256$) &SGD($0.005,0.99,256$)
        &SGD($0.005,0.99,256$)\\
        \# train passes per outer loop &$20$ &$100$ &$100$ & $20$&$50$&$200$\\
        Grad Clipping &$2.0$ & $2.0$& $1.0$ & $2.0$&$4.0$&$1.0$\\
        Replay Buffer Size & $10\cdot H$& $10\cdot H$& $10\cdot H$ &$10\cdot H$ &$16\cdot H$&$15\cdot H$\\
        \midrule
        \multicolumn{7}{|l|}{\bf Ensemble based bonus}\\
        \midrule
        \# models/bonus range & $2$/$[0,1]$&$2$/$[0,1]$ &$2$/$[0,1]$ &$2$/$[0,1]$ &$2$/$[0,1]$&$2$/$[0,1]$\\
        \midrule
        \midrule
        \multicolumn{7}{|l|}{\bf IPM parameters}\\
        \midrule
        Step size for $w$ update ($\eta_w$) & Exact &Exact &$0.33$ &Exact &Exact&Exact\\
        \# RFFs/BW Heuristic & $128$/$0.1$ quantile & $128$ / $0.1$ quantile & $128$ / $0.1$ quantile & $128$ / $0.1$ quantile &$128$ / $0.1$ quantile &$128$ / $0.1$ quantile\\
        \midrule
        \multicolumn{7}{|l|}{\bf Policy parameterization}\\
        \midrule
        Architecture/Non-linearity &MLP($64,64$)/TanH &MLP($64,64$)/TanH & MLP($64,64$)/TanH &MLP($32,32$)/TanH &MLP($32,32$)/TanH
        &MLP($32,32$)/TanH\\
        Policy Constraints &None &None &None & None&$\log\sigma_{\min}=-1.0$&$\log\sigma_{\min}=-2.0$\\
        \midrule
        \multicolumn{7}{|l|}{\bf Planning Algorithm}\\
        \midrule
        \# model samples per TRPO step & $2\cdot H$ &$10\cdot H$ & $4\cdot H$ & $4\cdot H$&$8\cdot H$&$20\cdot H$ \\
        \# TRPO steps per outer loop ($K_{PG}$) & $3$&$10$ &$20$ & $5$&$10$&$15$\\
        \midrule
        \makecell[l]{TRPO Parameters\\(CG iters, dampening, kl, $\text{gae}_{\lambda}$, $\gamma$)}  &\makecell{$(50,0.001,0.01,$\\$0.97,0.995)$} & \makecell{$(100,0.001,0.01,$\\$0.97,0.995)$}&\makecell{$(100,0.001,0.01,$\\$0.97,0.995)$} &\makecell{$(100,0.001,0.01,$\\$0.97,0.995)$} &\makecell{$(10,0.0001,0.025,$\\$0.97,0.995)$}&\makecell{$(10,0.0001,0.025,$\\$0.97,0.995)$}\\
        \midrule
        \multicolumn{6}{|l|}{\bf Critic parameterization}\\
        \midrule
        Architecture/Non-linearity & MLP($128,128$)/ReLU& MLP($128,128$)/ReLU& MLP($128,128$)/ReLU&MLP($32,32$)/ReLU &MLP($128,128$)/ReLU&MLP($128,128$)/ReLU\\
        \midrule
        \makecell[l]{Optimizer\\(LR, Batch Size, $\epsilon$, Regularization)} & Adam($0.001,64,1e-5,0$)& Adam($0.001,64,1e-5,0$)& Adam($0.001,64,1e-5,0$)& Adam($0.001,64,1e-5,0$)&Adam($0.001,64,1e-8,1e-3$)&Adam($0.001,64,1e-8,1e-3$) \\
        \midrule
        \# train passes per TRPO update &$1$ & $1$&$1$&$1$&$2$&$2$\\
\bottomrule
\end{tabular}
\end{adjustbox}
\caption{List of various Hyper-parameters employed in \algname's implementation.}\label{tab:hyper-params}
\end{table*}
\end{center}
This section presents an overview of the list of hyper-parameters necessary to implement Algorithm~\ref{alg:main_alg_bonus} in practice, as described in Algorithm~\ref{alg:implementation_alg}. The list of hyper-parameters is precisely listed out in Table~\ref{tab:hyper-params}. The hyper-parameters are broadly categorized into ones corresponding to various components of \algname, namely, (a) environment specifications, (b) dynamics model, (c) ensemble based bonus, (d) IPM parameterization, (e) Policy parameterization, (f) Planning algorithm parameters, (g) Critic parameterization. Note that if there a hyper-parameter that has not been listed, for instance, say, the value of momentum for the ADAM optimizer in the critic, this has been left as is the default value defined in Pytorch.
\section{Additional Experimental Results}\label{app:add_expt_results}
\subsection{Modified \cartpoles~environment with noise added to transition dynamics}\label{sec:cartpoles_learning_curves}
\begin{wrapfigure}{r}{0.4\textwidth}
    \vspace{-5mm}
    \centering
    \begin{subfigure}
        \centering
        \includegraphics[width=0.4\textwidth]{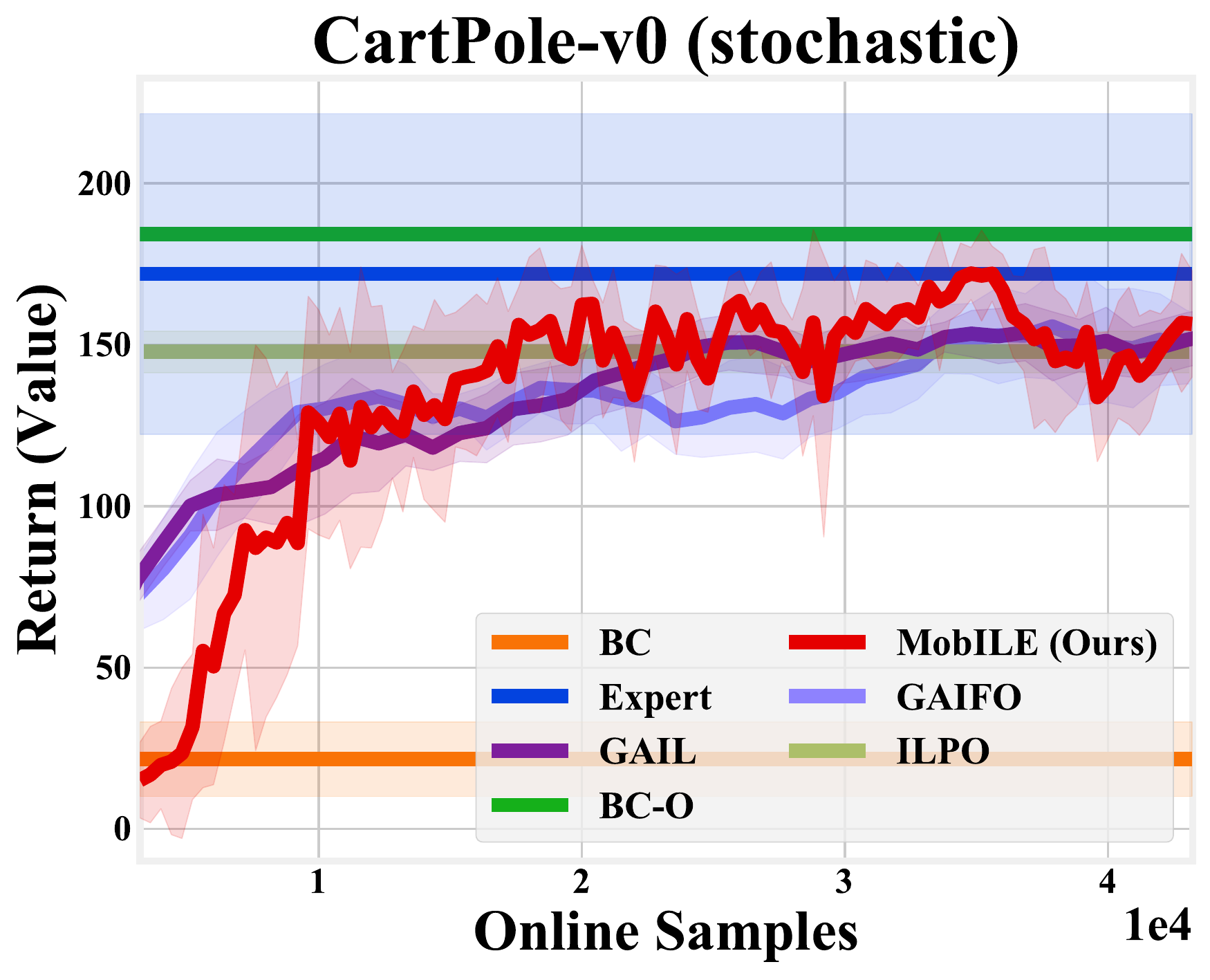}
    \end{subfigure}
    \caption{Learning curves for \cartpoles~with stochastic dynamics with $20$ expert trajectories comparing \algname~with BC, BC-O, GAIL, GAIFO and ILPO. }\label{fig:cartpoles}
    \vspace{-4mm}
\end{wrapfigure}
We consider a stochastic variant of \cartpoles, wherein, we add additive Gaussian noise of variance unknown to the learner in order to make the transition dynamics of the environment to be stochastic. Specifically, we train an expert of value $\approx\ 170$ in \cartpoles{} with stochastic dynamics using TRPO. Now, using $20$ trajectories drawn from this expert, we wish to consider solving the ILFO problem using \algname~as well as other baselines including BC, BC-O, ILPO, GAIL and GAIFO. Figure~\ref{fig:cartpoles} presents the result of this comparison. Note that \algname~compares favorably against other baseline methods - in particular, BC tends suffer in environments like \cartpoles~with stochastic dynamics because of increased generalization error of the supervised learning algorithm used for learning a policy. Our algorithm is competitive with both BC-O, GAIL, GAIFO and ILPO. Note that BC-O tends to outperform BC both in \cartpole~and in \cartpoles~(with stochastic dynamics).
\subsection{Swimmer Learning Curves}\label{sec:swimmer_learning_curves}
We supplement the learning curves for \swimmer~(with 40 expert trajectories) with the learning curves for \swimmer~with 10 expert trajectories in figure~\ref{fig:swimmer}. As can be seen, \algname~outperforms baseline algorithms such as BC, BC-O, ILPO, GAIL and GAIFO in \swimmer~with both $40$ and $10$ expert trajectories. The caveat is that for $10$ expert trajectories, all algorithms tend to show a lot more variance in their behavior and this reduces as we move to the $40$ expert trajectory case.
\begin{figure*}[ht]
    \centering
    \begin{subfigure}
        \centering
        \includegraphics[width=0.6\textwidth]{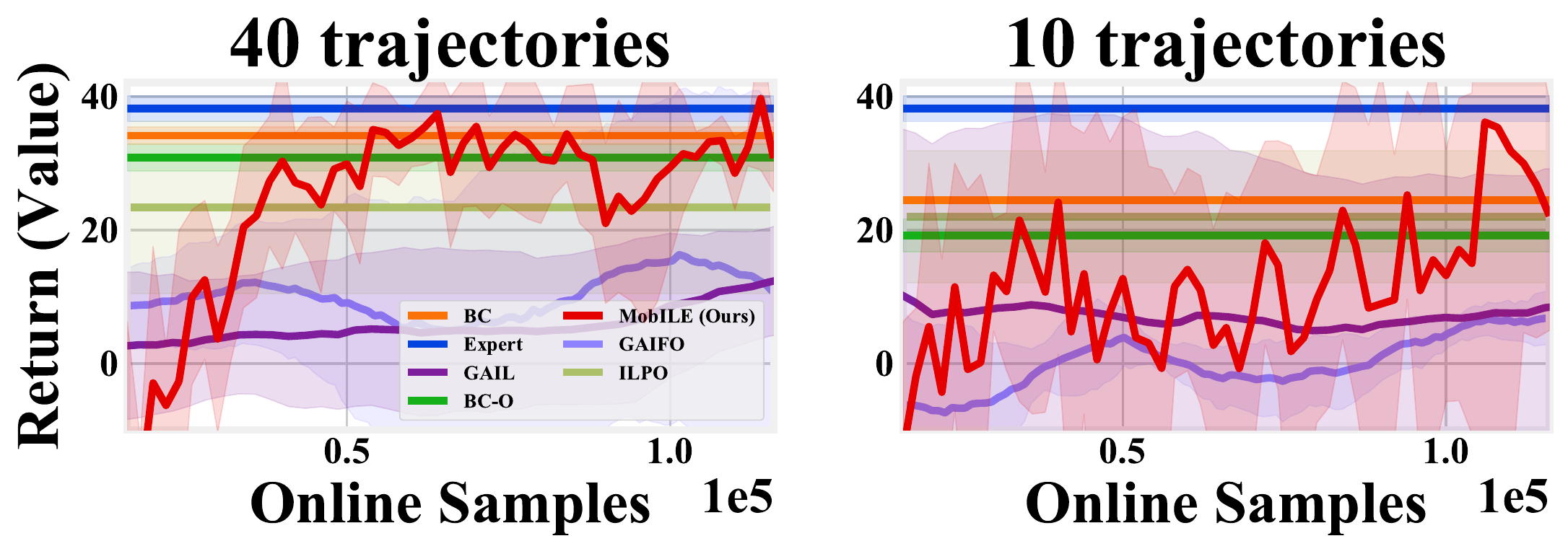}
    \end{subfigure}
    \vspace{-2mm}
    \caption{Learning curves for \swimmer~with $40$ (left) and $10$ (right) expert trajectories comparing \algname~with BC, BC-O, ILPO, GAIL and GAIFO. \algname~continues to perform well relative to all other benchmarks with both $10$ and $40$ expert trajectories. The variance of the algorithm as well as the benchmarks is notably higher with lesser number of expert trajectories.}\label{fig:swimmer}
    \vspace{-4mm}
\end{figure*}
\subsection{Additional Results}\label{sec:max_learning_curves}
\begin{figure*}[ht]
    \centering
    \begin{subfigure}
        \centering
        \includegraphics[width=\textwidth]{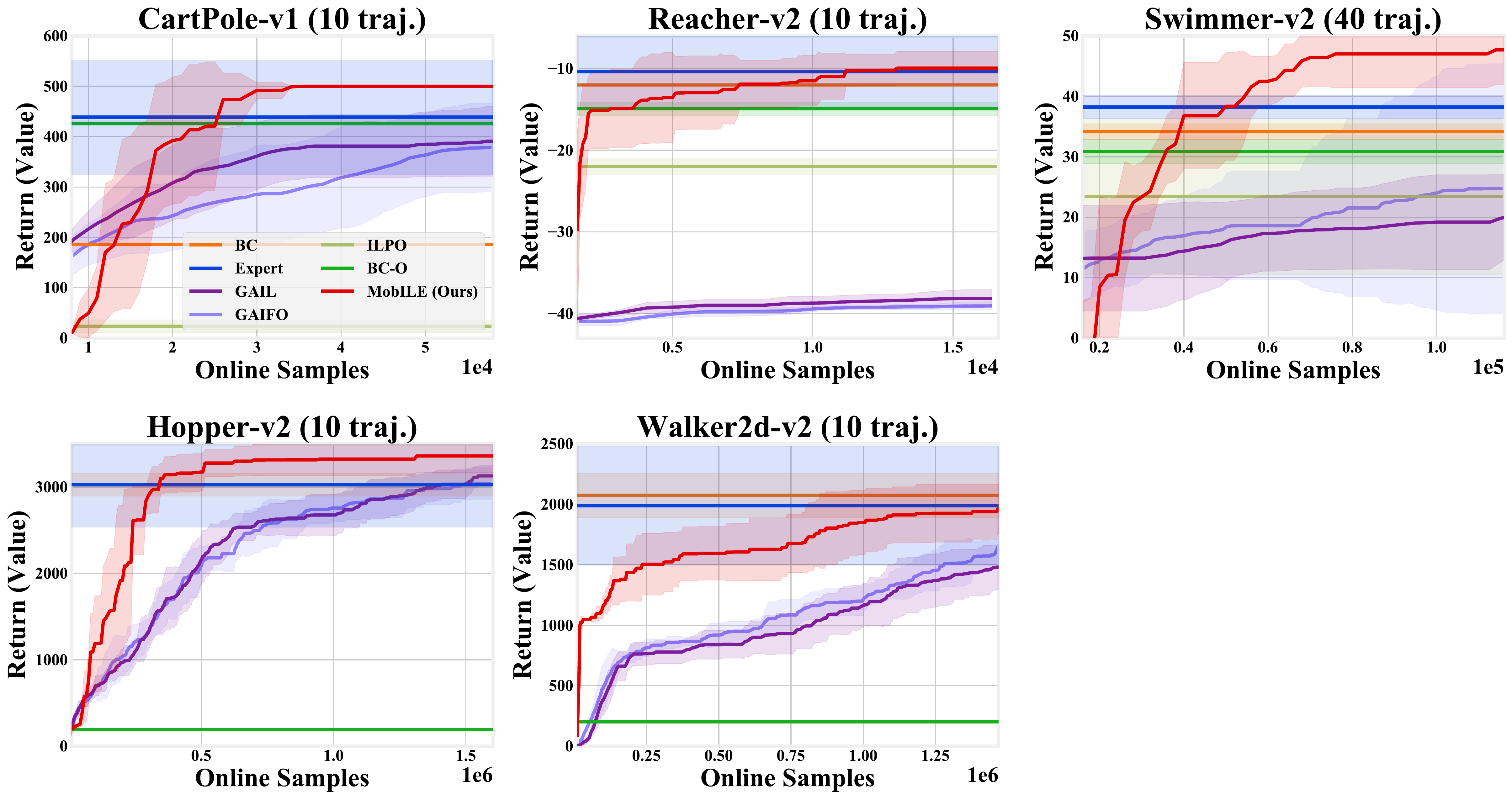}
    \end{subfigure}
    \vspace{-2mm}
    \caption{Learning curves tracking the running maximum averaged across seeds comparing \algname~against BC, BC-O, ILPO, GAIL and GAIFO. \algname~tends to reach expert performance consistently and in a more sample efficient manner.}\label{fig:cumulative_max}
    \vspace{-4mm}
\end{figure*}
In this section, we give another view of our results for \algname~compared against the baselines (BC/BC-O/ILPO/GAIL/GAIFO) by tracking the running maximum of each policy's value averaged across seeds. Specifically, for every iteration $t$, we plot the best policy performance obtained by the algorithm so far averaged across seeds (note that this quantity is monotonic, since the best policy obtained so far can never be worse at a later point of time when running the algorithm). For BC/BC-O/ILPO, we present a simplified view by picking the best policy obtained through the course of running the algorithm and averaging it across seeds (so the curves are flat lines). As figure~\ref{fig:cumulative_max} shows, \algname~reliably hits expert performance faster than GAIL and GAIFO while often matching/outperforming ILPO/BC/BC-O.

\subsection{Ablation Study on Number of Models used for Strategic Exploration Bonus}\label{sec:dynamicsAblation}
In this experiment, we present an ablation study on using more number of models in the ensemble for setting the strategic exploration bonus. Figure~\ref{fig:dynamics-ablation} suggests that even utilizing two models for purposes of setting the bonus is effective from a practical perspective.
\begin{figure*}[ht]
    \centering
    \begin{subfigure}
        \centering
        \includegraphics[width=0.4\textwidth]{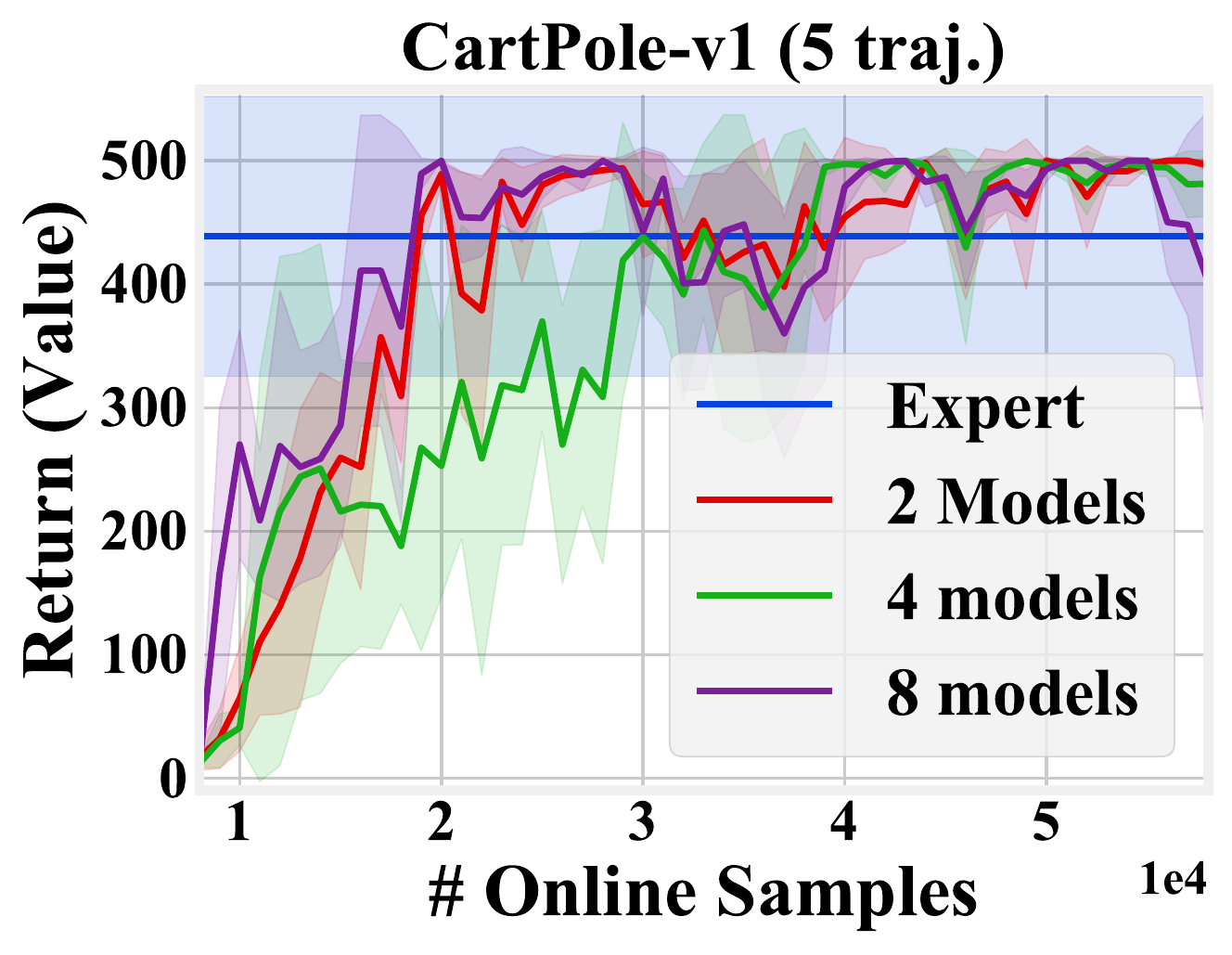}
    \end{subfigure}
    \vspace{-2mm}
    \caption{Learning curves for \cartpole~with varying number of dynamics models for assigning bonuses for strategic exploration.}\label{fig:dynamics-ablation}
    \vspace{-4mm}
\end{figure*}



\end{document}